\documentclass[11pt]{article}

\usepackage{fullpage}
\usepackage{amsmath}
\usepackage{amsfonts}
\usepackage{amsthm}
\usepackage{smile}
\usepackage{subfigure}

\usepackage{tablefootnote}

\usepackage[colorlinks,
            linkcolor=red,
            anchorcolor=blue,
            citecolor=blue
            ]{hyperref}
\usepackage{enumitem}
\usepackage{mathtools}

\usepackage{colortbl}
\definecolor{LightCyan}{rgb}{0.8, 0.9, 1}

\usepackage{amssymb}
\usepackage{pifont}

\ifdefined\final
\usepackage[disable]{todonotes}
\else
\usepackage[textsize=tiny]{todonotes}
\fi
\allowdisplaybreaks

\title{\huge On the Optimal Sample Complexity of Offline Multi-\\Armed Bandits with KL Regularization}

\author{
    Kaixuan Ji\thanks{Equal contribution} \thanks{Department of Computer Science, University of California, Los Angeles, CA 90095, USA; e-mail: {\tt kaixuanji@cs.ucla.edu}} 
    ~~
    Qiwei Di\footnotemark[1] \thanks{Department of Computer Science, University of California, Los Angeles, CA 90095, USA; e-mail: {\tt qiwei2000@cs.ucla.edu}}
    ~~
    Heyang Zhao\thanks{Department of Computer Science, University of California, Los Angeles, CA 90095, USA; e-mail: {\tt hyzhao@cs.ucla.edu}} 
    ~~
    Qingyue Zhao \thanks{Department of Computer Science, University of California, Los Angeles, CA 90095, USA; e-mail: {\tt zhaoqy24@cs.ucla.edu}}
    ~~
    Quanquan Gu\thanks{Department of Computer Science, University of California, Los Angeles, CA 90095, USA; e-mail: {\tt qgu@cs.ucla.edu}}
}

\date{}

\newcommand{\piref}{\pi^{\mathsf{ref}}}
\newcommand{\kl}[2]{\ensuremath{{\mathsf{KL}}\left(#1\|#2\right)}}

\newcommand{\fls}{{\bar{r}}} 
\newcommand{\fps}{{\hat{r}}} 

\newcommand{\fgt}{{r^*}} 

\newcommand{\fcl}{{\cG}} 

\def \algcb {\text{KL-PCB}}

\newcommand{\pistar}{\pi^*}
\newcommand{\pihat}{\hat{\pi}}
\newcommand{\subopt}{\mathrm{SubOpt}}

\newcommand{\unif}{\mathsf{Unif}}

\newcommand{\rE}{\mathbb E}

\newcommand{\KL}{\mathsf{KL}}

\newcommand{\la}{\left\langle}
\newcommand{\ra}{\right\rangle}

\newcommand{\E}{\rE}

\begin{document}

\maketitle

\begin{abstract}
Kullback-Leibler (KL) regularization is widely used in offline decision-making and offers several benefits, motivating recent work on the sample complexity of offline learning with respect to \emph{KL-regularized performance metrics}. Nevertheless, the exact sample complexity of KL-regularized offline learning remains largely from fully characterized. In this paper, we study this question in the setting of multi-armed bandits (MABs). We provide a sharp analysis of \algcb{}~\citep{zhao2026towards}, showing that it achieves a sample complexity of $\tilde{O}(\eta SAC^{\pi^*}/\epsilon)$ under large regularization $\eta = \tilde{O}(\epsilon^{-1})$, and a sample complexity of $\tilde{\Omega}(SAC^{\pi^*}/\epsilon^2)$ under small regularization $\eta = \tilde{\Omega}(\epsilon^{-1})$, where $\eta$ is the regularization parameter, $S$ is the number of contexts, $A$ is the number of arms, $C^{\pi^*}$ policy coverage coefficient at the optimal policy $\pi^*$, $\epsilon$ is the desired sub-optimality, and $\tilde{O}$ and $\tilde{\Omega}$ hide all poly-logarithmic factors. We further provide a pair of sharper sample complexity lower bounds, which matches the upper bounds over the entire range of regularization strengths. Overall, our results provide a nearly complete characterization of offline multi-armed bandits with KL regularization.
\end{abstract}

\section{Introduction}

Offline reinforcement learning (RL) algorithms that learn from pre-collected data without interactive data collection have recently become appealing in both embodied settings~\citep{levine2018learning,levine2020offline} and language modeling~\citep{rafailov2023direct,ethayarajh2024model,meng2024simpo,rafailov2024r} due to their computational and memory efficiency, ease of implementation, and strong safety guarantees, especially when the interaction is risky. In this paradigm, algorithms typically employ divergence constraints that keep the learned policy close to a reference policy $\piref$~\citep{wu2019behavior,kumar2020conservative,rafailov2023direct}. Among these, the reverse Kullback--Leibler (KL) divergence regularizer $\kl{\hat\pi}{\piref}$ has become a popular and effective choice, especially for fine-tuning large language models~\citep{rafailov2023direct,ethayarajh2024model,meng2024simpo,lai2024step,rafailov2024r}.

The prevalence of KL regularization has incentivized a line of statistical analysis of offline learning with respect to the \emph{KL-regularized performance metric}~\citep{zhao2025sharp,zhao2026towards,wu2025greedy}. \citet{zhao2025sharp} provide the first pair of $n^{-1}$-type upper and lower bounds of the suboptimality relative to the optimal \emph{KL-regularized} policy. However, the upper bound requires a strict uniform coverage condition, while the lower bound does not characterize the dependence on concentrability. \citet{zhao2026towards} take a step further by showing that \emph{single-policy concentrability} is necessary and sufficient for achieving the $n^{-1}$-type fast convergence rate. Nevertheless, there is a discrepancy in the notion of concentrability between the upper and lower bounds, which implies looseness in the worst case (e.g., in tabular settings). Moreover, both \citet{zhao2025sharp,zhao2026towards} achieve the $n^{-1}$-type rates only when the regularization is sufficiently strong. More recently, \citet{wu2025greedy} derive algorithms for this setting without pessimism, at the cost of an exponentially worse dependence on the inverse regularization intensity. Therefore, to the best of our knowledge, the following question remains open even for the simplest offline-learning models.
\begin{center}
    \emph{What is the sample complexity of offline decision making with KL regularization?}
\end{center}
In this paper, we provide a nearly comprehensive answer to this question for MABs\footnote{We consider a slightly generalized setting similar to the problem described in~\citealt[Section 4]{rashidinejad2021bridging}, which allows the presence of different contexts.} (up to logarithmic factors). We dichotomize the problem into two regimes based on the strength of KL regularization. In each regime, we show that the sample complexity of $\algcb$\footnote{Despite minor adaptation to the multi-armed setting, our algorithm largely follows the original $\algcb$ in~\citet{zhao2026towards}; thus, we still refer to our algorithm as $\algcb$.}~\citep{zhao2026towards} matches the corresponding statistical limit, thereby providing a near-complete characterization of offline learning for multi-armed bandits with KL regularization. Our main contributions are summarized as follows:
\begin{itemize}[leftmargin=*]
    \item We provide a sharp analysis of $\algcb$~\citep{zhao2026towards} under the setting of multi-armed bandits. In particular, for $\algcb$, we provide a $\tilde{O}(\eta SAC^{\pi^*}/\epsilon)$ sample complexity upper bound under large regularization $\eta = \tilde{O}(\epsilon^{-1})$, and a $\tilde{O}(SAC^{\pi^*}/\epsilon^2)$ sample complexity upper bound in the small regularization regime $\eta = \tilde{\Omega}(\epsilon^{-1})$. Along with the sharp analysis, we further established two lower bounds, which match the upper bounds up to logarithmic factors in both regimes, indicating that $\algcb$ is near-optimal.
    

    \item We construct two distinct types of hard instances designed to exploit the unique structure of KL regularization under different regimes. Specifically, in the small regularization regime, we show that its statistical limit is closely related to the hardness of learning MABs with multiple optima~\citep{degenne2019pure,zhu2020regret,de2021bandits} in the offline setting.
    With presence of multiple optima, {the direct application of the standard lower bound techniques: Le Cam's method (Assouad's method) and Fano's methods, is not sharp enough}. We tackle this problem with a novel pairing-and-counting technique which may be of independent interest beyond our problem setting.
    \item As a by-product, we identify the problem of learning offline MABs with multiple optima. Restricting to a uniform behavior policy, we provide an $\tilde{\Omega}(K^2/(A\epsilon^2))$ sample complexity lower bound, where $A$ is the number of arms and $K$ is the number of suboptimal arms. We further provide the sample complexity upper bound of a minimalist algorithm, empirical best-arm selection, for this setting, which certifies that the $\tilde{\Omega}(K^2/(A\epsilon^2))$ sample complexity lower bound is tight up to logarithmic factors.
\end{itemize}
For ease of comparison, we summarize our results on KL-regularized MABs, together with representative results from related works, in Table~\ref{tab:kl-regularized}.

\newcolumntype{g}{>{\columncolor{LightCyan}}c}
\begin{table*}[t!]
\centering
\caption{Comparison of sample complexity upper and lower bounds for offline KL-regularized bandits. In this table, $n$ is the sample size, $\eta$ is the regularization parameter and $C^{\pi^*}$ is the (density ratio-based) single-policy coverage coefficient. For the multi-armed setting, $S$ is the size of context set and $A$ is the size of action set. For the function approximation setting, $D^2$ is the ($D^2$-based) all-policy concentrability, $D^2_{\pi^*}$ is the ($D^2$-based) single-policy concentrability at the optimal policy $\pi^*$ and $N_{\cR}$ is the covering number of the function class. When reducing to the multi-armed setting, $D^2=D^2_{\pi^*} = \Theta(SAC^{\pi^*})$ in the worst case, $\log N_{\cR} = \Theta(SA)$ for the upper bound and should be interpreted as $\Theta(S)$ when it appears in the lower bounds (see Remark~\ref{rmk:kl-lower-comparison}). $\tilde{O}(\cdot)$ and $\tilde{\Omega}(\cdot)$ hide logarithmic factors.}
\vspace{1ex}
\renewcommand{\arraystretch}{1}
\resizebox{0.99\columnwidth}{!}{
{
\begin{tabular}{lgggg}
\toprule
\rowcolor{white} & & & Large Regularization & Small Regularization \\
\rowcolor{white} \multirow{-2}{*}{Type} & \multirow{-2}{*}{Algorithm}  & \multirow{-2}{*}{Setting} & $\eta^2 = \tilde{O}(n(SAC^{\pi^*})^{-1})$ & $\eta^2 = \tilde{\Omega}(n(SAC^{\pi^*})^{-1})$   \\
\midrule
\rowcolor{white}
& TMPS \\\rowcolor{white}
&\small\citep{zhao2025sharp}& \multirow{-2}{*}{Function Approximation} &     \multirow{-2}{*}{$\tilde{O}\Big(\frac{\eta D^2 \log N_{\cR}}{n}\Big)$}      &     \multirow{-2}{*}{N/A}     \\\rowcolor{white}
& KL-PCB \\ \rowcolor{white}
&\small\citep{zhao2026towards} & \multirow{-2}{*}{Function Approximation}&   \multirow{-2}{*}{$\tilde{O}\Big(\frac{\eta D^2_{\pi^*} \log N_{\cR}}{n}\Big)$}     &        \multirow{-2}{*}{N/A}   \\ \rowcolor{white}
\multirow{-1}{*}{Upper Bound}  & Greedy Sampling \\ \rowcolor{white}
&\small\citep{wu2025greedy} & \multirow{-2}{*}{Preference Feedback}&   \multirow{-2}{*}{$\tilde{O}\Big(\frac{\eta \exp(\eta) \log N_{\cR}}{n}\Big)$}     &        \multirow{-2}{*}{N/A}   \\
 & KL-PCB & & &\\
&\small(This Work) & \multirow{-2}{*}{Multi-armed} &     \multirow{-2}{*}{$\tilde{O}\Big(\frac{\eta SAC^{\pi^*} }{n}\Big)$}      & \multirow{-2}{*}{$\tilde{O}\Big(\sqrt{\frac{SAC^{\pi^*} }{n}}\Big)$}      \\
\midrule
\rowcolor{white} 
& \citet{zhao2025sharp} & Function Approximation & $\Omega\Big(\frac{\eta\log N_{\cR}}{n}\Big)$   &    N/A     \rule{0pt}{3ex}  \\[6pt] \rowcolor{white}
\multirow{-1}{*}{Lower Bound} & \citet{zhao2026towards} & Function Approximation & $\Omega\Big(\frac{\eta C^{\pi^*}\log N_{\cR}}{n}\Big)$   &    $\Omega\Big(\sqrt{\frac{C^{\pi^*}\log N_{\cR}}{n}}\Big)$     \rule{0pt}{2.5ex}  \\[6pt] 
& This Work & Multi-armed & $\tilde{\Omega}\Big(\frac{\eta SAC^{\pi^*}}{n}\Big)$  &    $\tilde{\Omega}\Big(\sqrt{\frac{SAC^{\pi^*}}{n}}\Big)$  \rule{0pt}{3.5ex}\\[6pt] 
\bottomrule
\end{tabular}
}}
\label{tab:kl-regularized}
\end{table*}

\paragraph{Notation.} The sets $\cS$ and $\cA$ are assumed to be finite throughout the paper.
For nonnegative sequences $\{x_n\}$ and $\{y_n\}$, we write $x_n = O(y_n)$ if $\limsup_{n\to\infty}{x_n}/{y_n} < \infty$, $x_n = o(y_n)$ if $\limsup_{n\to\infty}{x_n}/{y_n} =0$, $y_n = \Omega(x_n)$ (interchangeably written as $y_n \gtrsim x_n$) if $x_n = O(y_n)$, and $y_n = \Theta(x_n)$ if $x_n = O(y_n)$ and $x_n = \Omega(y_n)$. We further employ $\tilde{O}(\cdot), \tilde{\Omega}(\cdot)$, and $\tilde{\Theta}(\cdot)$ to hide $\polylog$ factors. 
For finite $\cX$ and $\cY$, we denote the family of probability kernels from $\cX$ to $\cY$ by $\Delta(\cY|\cX)$. For a pair of probability measures $P \ll Q$ on the same space, we use $\kl{P}{Q} \coloneqq \int \log({\ud P}/{\ud Q})\ud P$ to denote their KL divergence. 
We denote by $\mathsf{Unif}(\cX)$ the uniform distribution on finite set $\cX$. For some $\pi \in \Delta(\cX)$ with full support and $\cY \subseteq \cX$, we use $\pi|_{\cY} \in \Delta(\cY)$ to denote the distribution such that $(\pi|_{\cY})(y) \propto \pi(y)$.
We denote $[N] \coloneqq \{1, \cdots, N\}$ for any positive integer $N$. Boldfaced lowercase letters are reserved for vectors. For $\xb, \yb \in \cX^n$, we use $d_H(\xb, \yb)$ to denote their Hamming distance. For $x, y \in \RR$, we denote $x \vee y = \max\{x,y\}$.
We use $\mathsf{Bern}(p)$ to denote Bernoulli distribution with expectation $p$, $\mathsf{Bin}(n,p)$ for binomial distribution with $n$ trials and success rate $p$. For probability measure $P$, we use supp$(P)$ to denote the support set of $P$.

\section{Related Work}

\paragraph{Pessimism in Offline RL.} 
The principle of pessimism, under which the learner acts conservatively in the face of uncertainty, plays a crucial role in offline RL. In the tabular setting, pessimism is applied in both model-based~\citep{rashidinejad2022optimal,xie2021policy} and model-free~\citep{shi2022pessimistic,yan2023efficacy} algorithm design, based on which the optimal sample complexity is achieved~\citep{li2024settling}. Under reward or value function approximation, the pessimism principle is also widely adopted~\citep{jin2021pessimism,yin2022near,xie2021bellman,uehara2021pessimistic,wang2025modelbased} and achieves optimal sample complexity in both linear~\citep{xiong2023nearly} and general function approximation~\citep{di2024pessimistic}. Under KL-regularized objectives, pessimism is again provably beneficial and admits a sharp analysis~\citep{zhao2026towards}. 

\paragraph{KL-Regularized Bandit and RL.}  Several recent studies \citep{xie2025exploratory,xiong2024iterative,zhao2025sharp,foster2025good} investigated the sample complexity of KL-regularized objective, which provably enjoys an $\cO(\epsilon^{-1})$ rate. 
This fast rate was first demonstrated by \citet{tiapkin2023fast} in the setting of pure-exploration for maximum-entropy RL. For the setting of online regret minimization, \citet{zhao2025logarithmic} obtained a $\tilde{O}(\eta d_{\cR} \log N_{\cR} \log T)$ regret upper bound under function approximation, which was then improved to the near-optimal $\tilde{O}(\eta A\log^2 T \wedge \sqrt{AT})$ rate when specialized to multi-armed bandits~\citep{ji2026near}.
In the pure offline setting, \citet{zhao2025sharp} established the optimal sample complexity $\cO(\epsilon^{-1})$, albeit under the strong assumption that the behavior policy $\piref$ provides uniform coverage over the entire function class for all policies. 
This requirement was subsequently removed by~\citet{zhao2026towards} via pessimism, which yields an $\tilde{O}(\eta D^2_{\pi^*}\log N_{\cR} \epsilon^{-1})$ upper bound and an $\Omega(\eta C^{\pi^*}\log N_{\cR} \epsilon^{-1})$ lower bound. \citet{wu2025greedy} shows that one can also achieve $\tilde{O}(\eta \exp(\eta)\log N_{\cR} \epsilon^{-1})$ upper bound with greedy sampling. Similar fast rates have also been established in 
privacy-sensitive~\citep{wu2025offline,weng2025improved} and competitive multi-agent~\citep{nayak2025achieving,zhang2026beyond} settings, as well as under hybrid query protocols with linear function approximation~\citep{foster2025good}

\paragraph{Classical Minimax Lower Bounds Techniques.} Information-theoretic techniques for proving non-asymptotic worst-case hardness results in \emph{offline} learning have been well-established~\citep{cover1999elements,Tsybakov2008IntroductionTN,polyanskiy2025information}, among which Le Cam's two-point method, Assouad's lemma~\citep{assouad1983deux}, and Fano's method~\citep{cover1999elements} are arguably the most commonly used~\citep{yu1997assouad}. These tools have also been extended to counterparts that mainly aim to accommodate the blessing or cost of exploration in lower bounds for \emph{interactive} protocols~\citep{lattimore2020bandit,foster2021statistical,chen2024assouad,gu2025evolution}, which are beyond our scope. Specialized to our setting, the proof of our \emph{worst case} lower bound (\Cref{thm:lowerbound-slow}) for \emph{noninteractive} learning of KL-regularized multi-armed contextual bandits builds upon ideas that are closely related to measure tilting techniques in \citet{degenne2019pure,garivier2021nonasymptotic}, where they are used to establish instance-dependent lower bounds. These techniques originate from a broader line of bandit literature that employs variants of change-of-measure arguments to prove instance-dependent hardness results in online settings or asymptotic regimes~\citep{lai1985asymptotically,mannor2004sample,audibert2010best,garivier2016optimal,degenne2019pure,garivier2021nonasymptotic}.

\section{Preliminaries}\label{sec:prelim}
In this section, we introduce the multi-armed (contextual) bandit with a KL-regularized objective, defined by a tuple $(\cS, \cA, r, \eta, \piref)$. Here $\cS$ is the context space, $\cA$ is the action space and $r: \cS \times \cA \to [0, 1]$ is the reward function. In the offline setting, the agent only has access to an $\iid$ dataset $\cD = \{(s_i, a_i, r_i)\}_{i=1}^n$, where $s_i$ is a context sampled from a fixed distribution $\rho\in\Delta(\cS)$, $a_i \in \cA$ is the action sampled from a \emph{behavior policy}, and $r_i$ is the observed reward given by $r_i = r(s_i, a_i) + \varepsilon_i$. We assume that $\varepsilon_t$ is $1$-sub-Gaussian~\citep[Definition~5.2]{lattimore2020bandit}. The learner's goal is to output a policy $\pi \in \Delta(\cA|\cS)$ that maximizes the following KL-regularized objective
\begin{align}
    J(\pi) \coloneqq\EE_{(s,a) \sim \rho \times \pi} \bigg[r(s,a) - \eta^{-1} \log \frac{\pi(a|s)}{\piref(a|s)}\bigg], \label{eq:kl-objective-bandit}
\end{align}
where $\piref$ is a known reference policy and $\eta^{-1}$ is proportional to the regularization intensity. For simplicity, we assume that $\piref$ is also the behavior policy that generates the dataset $\cD$. This type of ``behavior regularization'' has also been studied in previous works~\citep{zhan2022offline,zhao2026towards}. 
The unique optimal policy $\pi^*$, defined by $ \pi^* \coloneqq \argmax_{\pi \in \Delta(\cA | \cS)} J(\pi)$ admits the following closed form (see, e.g., \citealt[Proposition~7.16]{zhang2023ltbook}):
\begin{align}\label{eq:opt-exp}
    \pi^*(\cdot|s) \propto \piref(\cdot|s)\exp\big(\eta \cdot r(s, \cdot)\big), \forall s\in\cS.
\end{align}
For any policy $\pi$, we define the \emph{suboptimality gap} as
\begin{align}\label{eq:subopt-def}
    \subopt(\pi) \coloneqq J(\pi^*) - J(\pi).
\end{align}
A policy $\pi$ is said to be $\epsilon$-optimal if $\subopt(\pi) \leq \epsilon$. The goal of offline learning is to output an $\epsilon$-optimal policy based on the dataset $\cD$.



\paragraph{Concentrability.} In offline RL, a standard assumption concerns the coverage of the behavior policy, which serves as a measure of the dataset's quality. Specifically, it assesses whether the dataset provides sufficient support for distributions induced by other comparator policies.

\begin{definition}[Single-Policy Concentrability] Given a reference policy $\piref$, we define the concentrability of the optimal policy $\pi^*$ with regards to $\piref$ as
\begin{align*}
    C^{\pi^*} := \sup_{s\in \cS, a \in \cA} \frac{\pistar(a|s)}{\piref(a|s)}.
\end{align*}
\end{definition}

Compared with the more stringent \emph{uniform coverage} assumptions~\citep{sidford2018near,agarwal2020model,wang2021statistical,di2024pessimistic,zhao2025sharp}, which require the data distribution of \emph{any} policy to be sufficiently covered by the dataset, the \emph{single-policy} concentrability framework~\citep{rashidinejad2021bridging,uehara2021pessimistic,xie2021bellman,rashidinejad2022optimal,cheng2022adversarially,ozdaglar2023revisiting,zhao2026towards} considered here relaxes the coverage constraint to only the distribution induced by the optimal policy, and thus typically corresponds to a much smaller concentrability coefficient. As indicated by~\citet{zhao2026towards}, this single policy concentrability is both necessary and sufficient for learning offline KL-regularized bandits.

\section{Algorithm and Sample Complexity Analysis}\label{sec:algorithm}

\begin{algorithm*}[t]
\caption{Offline KL-Regularized Pessimistic Multi-armed Contextual Bandits (\algcb)}
\begin{algorithmic}[1]\label{algorithm:bandit-pess}
    \REQUIRE regularization $\eta$, reference policy $\piref$, offline dataset $\cD$

    \STATE Set $N(s,a) = \sum_{i=1}^n \ind \{(s_i, a_i) = (s,a)\}$ for all $a \in \cA$, $s \in \cS$

    \FOR{$s \in \cS$, $a \in \cA$}
    \IF{$N(s,a)=0$}
    \STATE Set the empirical reward $\fls(s,a) \leftarrow 0$, penalty $b(s,a) \leftarrow 1$
    \ELSE
    \STATE Compute the empirical reward $\fls(s,a) \leftarrow \frac{1}{N(s,a)} \sum_{i=1}^n r_i \ind \{(s_i, a_i) = (s,a)\}$
    \STATE Compute the penalty to be $b(s,a) \leftarrow \sqrt{\frac{4\log(2|\cS||\cA|/\delta)}{N(s,a)}}$, let $\fps(s,a) \leftarrow \fls(s,a) - b(s,a)$
    \ENDIF
    \ENDFOR
    \ENSURE $\hat \pi(a|s) \propto \piref(a|s) \exp \big(\eta \cdot \fps(s,a)\big)$
\end{algorithmic}
\end{algorithm*}

In this section, we introduce a variant of $\algcb$~\citep{zhao2026towards}, an algorithm for learning offline MABs with KL-regularized objective. The algorithm is summarized in Algorithm~\ref{algorithm:bandit-pess}. In particular, for each $(s,a) \in \cS \times \cA$, we first compute the empirical average reward $\fls(s,a)$ as an estimation of the ground truth reward function $\fgt(s,a)$. Based on $\fls(s,a)$, we then aim to construct a pessimistic estimation of $\fgt$, which enables the algorithm to get rid of all-policy coverage as shown in~\citet{zhao2026towards}. Compared to~\citet{zhao2026towards}, we apply the standard pessimism in MABs literature~\citep{rashidinejad2021bridging,xie2021policy} given by 
\begin{align*}
    b(s,a) = \sqrt{\frac{4\log(2SA/\delta)}{N(s,a) \vee 1}},
\end{align*}
while \citet{zhao2026towards} uses reward function approximation, where the constructed pessimistic penalty has to encompass the entire function class and hence over-penalizes the reward when specialized to multi-armed cases.

We then obtain our pessimistic estimation of $\fgt(s,a)$, $\fps(s,a) = \fls(s,a) - b(s,a)$. Specifically, the following results show that $\hat r$ is a pessimistic estimate with high probability and the number of samples $N(s,a)$ does not deviate too much from its expectation $\rho(s)\piref(a|s)$.

\begin{lemma}\label{lem:pessimisic}
Given $\delta >0$, let $\cE_1(\delta)$ denote the event that the estimation is indeed pessimistic
\begin{align}\label{eq:confbandit}
    \cE_1(\delta) \coloneqq \Big\{  \big| \fls(s,a) - \fgt(s,a)\big| \leq b(s,a) \text{, for all } (s,a) \in \cS \times \cA \Big\}.
\end{align}
We further use $\cE_2(\delta)$ to denote the event under which $N(s,a)$ does not deviate too much from the expectation, that is
\begin{align*}
    \cE_2(\delta) \coloneqq \bigg\{ \frac{1}{N(s,a) \vee 1} \leq \frac{8\log(2|\cS||\cA|/\delta)}{n\rho(s)\piref(a|s)} \text{ for all } (s,a) \in \cS \times \cA \bigg\}.
\end{align*}
Then the event $\cE_1(\delta) \cap \cE_2(\delta)$ holds with probability at least $1-\delta$.
\end{lemma}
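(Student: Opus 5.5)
We bound the failure probabilities of $\cE_1(\delta)$ and $\cE_2(\delta)$ separately by $\delta/2$ each and combine them with a union bound. Each event is a conjunction over the $SA$ pairs $(s,a)$, so it suffices to show that, for every fixed pair, the corresponding inequality fails with probability at most $\delta/(2SA)$.

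\emph{Event $\cE_1(\delta)$.} Fix $(s,a)$. If $N(s,a)=0$, then $\fls(s,a)=0$ and $b(s,a)=1$. Since $\fgt(s,a)\in[0,1]$, the inequality $|\fls(s,a)-\fgt(s,a)|\le b(s,a)$ holds deterministically.

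Otherwise, we condition on the sequence of contexts and actions $\{(s_i,a_i)\}_{i=1}^n$. Given this sequence, $N(s,a)=m\ge 1$ is fixed. The noise variables $\varepsilon_i$ with $(s_i,a_i)=(s,a)$ are then independent and $1$-sub-Gaussian; this uses that the noise is independent of the covariates, or at least conditionally sub-Gaussian given them. Hence $\fls(s,a)-\fgt(s,a)$ is $(1/m)$-sub-Gaussian, and Hoeffding's inequality gives
\[
\Pr\bigl(|\fls(s,a)-\fgt(s,a)|>\sqrt{2\log(4SA/\delta)/m}\bigr)\le \delta/(2SA).
\]
It remains to check that $2\log(4SA/\delta)\le 4\log(2SA/\delta)$, i.e.\ $\log 2\le \log(2SA/\delta)$. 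This holds whenever $SA/\delta\ge1$, which we may assume because $\delta<1$. Integrating out the conditioning and taking a union bound over the $SA$ pairs yields $\Pr(\cE_1^c)\le\delta/2$.

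\emph{Event $\cE_2(\delta)$.} Fix $(s,a)$, write $p=\rho(s)\piref(a|s)$ and $L=\log(2SA/\delta)$, and note that $N(s,a)\sim\mathsf{Bin}(n,p)$. We split into two cases according to the size of $np$.
\begin{itemize}
\item If $np\le 8L$, the right-hand side $8L/(np)$ is at least $1$. Since $1/(N\vee1)\le 1$, the inequality holds deterministically.
\item If $np>8L$, it suffices to show $N\ge np/8$, because then $1/(N\vee 1)\le 8/(np)\le 8L/(np)$ (using $L\ge1$, or simply $N\ge np/8$ directly). By the multiplicative Chernoff bound,
\[
\Pr(N< np/2)\le\exp(-np/8)<\exp(-L)=\delta/(2SA).
\]
\end{itemize}
A union bound over the pairs then gives $\Pr(\cE_2^c)\le\delta/2$.

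The main obstacle is bookkeeping with the constants. We must make sure that the $4$ and $8$ in the statement absorb both the $\delta/2$ split between the two events and the Chernoff exponent. We also need mild conditions such as $\log(2SA/\delta)\ge 1$; these hold, for instance, when $\delta\le 2SA/e$, which we would state explicitly. The other point requiring care is justifying the conditioning argument for sub-Gaussian noise. We handle it by treating the noise as independent of $(s_i,a_i)$, which is the standard reading of the data model in \Cref{sec:prelim}.
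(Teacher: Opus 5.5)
Your proposal is correct and follows the paper's proof: it splits $\delta$ evenly between the two events and takes a union bound over the $SA$ pairs. For $\cE_1(\delta)$ it applies Hoeffding conditionally on the counts, and for $\cE_2(\delta)$ it uses a binomial lower-tail bound; your case split on $np$ versus $8L$ is just the inlined proof of Lemma~\ref{lem:binary-concentration}. Your two-sided treatment of $\cE_1(\delta)$ is more careful than the paper's, which writes only a one-sided deviation bound. Also, the extra condition $L\ge 1$ is unnecessary: $N\ge np/2$ already gives $1/N\le 2/(np)\le 8L/(np)$ as soon as $L\ge \log 2$, which holds whenever $\delta<1$.
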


After obtaining the pessimistic estimation $\fps$, $\algcb$ then outputs the policy using the closed-form solution $\pihat(\cdot|s) \propto \piref(\cdot | s) \exp(\eta \fps(s, \cdot))$ for all $s \in \cS$, based on the pessimistic reward $\hat r$. 
The upper bound of the suboptimality of Algorithm~\ref{algorithm:bandit-pess} is given by the following theorem.

\begin{theorem}\label{thm:upperbound-bandit}
With probability at least $1-\delta$, the suboptimality gap of the output of Algorithm~\ref{algorithm:bandit-pess}, depending on the regularization level, can be bounded as follows correspondingly
\begin{itemize}[leftmargin=*]
    \item For large regularization $\eta^2 = \tilde{O}\big(n(SAC^{\pistar})^{-1}\big)$, the output policy $\pihat$ obeys
    \begin{align*}
        \subopt(\pihat) = \tilde{O}\bigg(\frac{\eta SAC^{\pistar} }{n}\bigg).
    \end{align*}
    \item For small regularization $\eta^2 = \tilde{\Omega}\big(n(SAC^{\pistar})^{-1}\big)$, the output policy $\pihat$ obeys
    \begin{align*}
        \subopt(\pihat) = \tilde{O}\bigg(\sqrt{\frac{SAC^{\pistar} }{n}}\bigg).
    \end{align*}
\end{itemize}
\end{theorem}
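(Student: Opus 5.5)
By \Cref{lem:pessimisic}, it suffices to work on the event $\cE_1(\delta)\cap\cE_2(\delta)$, which holds with probability at least $1-\delta$. Write $\Delta(s,a)\coloneqq \fps(s,a)-\fgt(s,a)$. On $\cE_1(\delta)$ we have $-2b(s,a)\le \Delta(s,a)\le 0$, so $\fps$ is pessimistic. On $\cE_2(\delta)$ we have $b(s,a)^2\lesssim \log^2(2SA/\delta)/(n\rho(s)\piref(a|s))$. This also covers the unseen pairs, because $N(s,a)=0$ forces the right-hand side to be at least of order one.

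The first quantity I will control is the $\pi^*$-weighted penalty:
\begin{align*}
\EE_{\rho\times\pi^*}[b^2]\lesssim \sum_{s,a}\rho(s)\pi^*(a|s)\frac{\log^2(2SA/\delta)}{n\rho(s)\piref(a|s)}\le \frac{SAC^{\pi^*}\log^2(2SA/\delta)}{n}.
\end{align*}
Cauchy--Schwarz then gives $\EE_{\rho\times\pi^*}[b]\lesssim \sqrt{SAC^{\pi^*}/n}\cdot\log(2SA/\delta)$. Both regimes reduce to these two estimates.

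\textbf{Small regularization.} Let $\hat J$ be the objective \eqref{eq:kl-objective-bandit} with $\fgt$ replaced by $\fps$. Since $\pihat$ maximizes $\hat J$, we have $\hat J(\pi^*)\le \hat J(\pihat)$. Hence
\begin{align*}
\subopt(\pihat)\le [J(\pi^*)-\hat J(\pi^*)]+[\hat J(\pihat)-J(\pihat)]=\EE_{\rho\times\pi^*}[-\Delta]+\EE_{\rho\times\pihat}[\Delta].
\end{align*}
The second term is nonpositive by pessimism. The first is at most $2\EE_{\rho\times\pi^*}[b]=\tilde O(\sqrt{SAC^{\pi^*}/n})$. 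This bound holds for every $\eta$, and it is the stated rate in the small-regularization regime.

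\textbf{Large regularization.} Here I need the fast rate, using the identity
\begin{align*}
J(\pi)=\eta^{-1}\EE_s\log Z^*(s)-\eta^{-1}\EE_s\,\kl{\pi(\cdot|s)}{\pi^*(\cdot|s)},
\end{align*}
where $Z^*(s)=\sum_a\piref(a|s)e^{\eta\fgt(s,a)}$. It gives $\subopt(\pihat)=\eta^{-1}\EE_s\kl{\pihat(\cdot|s)}{\pi^*(\cdot|s)}$.

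Fix $s$, and set $Q=\pi^*(\cdot|s)$, $g=\eta\Delta(s,\cdot)\le 0$ and $P=\pihat(\cdot|s)\propto Qe^{g}$. Then $\kl{P}{Q}=\EE_P g-\log\EE_Q e^g$. Jensen gives $\log\EE_Qe^g\ge \EE_Qg$, so
\begin{align*}
\kl{P}{Q}\le \EE_Pg-\EE_Qg=\mathrm{Cov}_Q(g,e^g)/\EE_Qe^g.
\end{align*}
Because $t\mapsto e^t$ is $1$-Lipschitz on $t\le0$, the numerator is at most $\sqrt{\mathrm{Var}_Q(g)\mathrm{Var}_Q(e^g)}\le \EE_Q g^2\le 4\eta^2\EE_Q b^2$. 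For the denominator, Jensen gives $\EE_Qe^g\ge \exp(-2\eta\EE_Q b(s,\cdot))$.

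\textbf{Main obstacle.} Keeping this denominator bounded below by a constant is where the regime condition $\eta^2=\tilde O(n/(SAC^{\pi^*}))$ enters. The condition gives $\eta\EE_{\rho\times\pi^*}[b]=O(1)$ on average over $s$, but I need the bound per context. I would first try to exploit per-context coverage: under $\cE_2(\delta)$,
\begin{align*}
\EE_Q b(s,\cdot)^2\lesssim \frac{AC^{\pi^*}\log^2(2SA/\delta)}{n\rho(s)}.
\end{align*}
If that fails for low-probability contexts, I would split contexts into two groups. On contexts where $\eta\EE_Qb(s,\cdot)\le 1$, I use the KL bound above. On the rest, I fall back to the first-order bound $\EE_Q[-\Delta]\le 2\EE_Q b\le 2\eta\EE_Qb^2$ from the small-regularization argument applied context by context.

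Combining the two groups gives
\begin{align*}
\subopt(\pihat)\lesssim \eta\,\EE_{\rho\times\pi^*}[b^2]=\tilde O\big(\eta SAC^{\pi^*}/n\big).
\end{align*}
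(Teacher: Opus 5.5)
Your proof is correct. The slow-rate part matches the paper; the paper simply asserts $J(\pistar)-J(\pihat)\le\EE_{\rho\times\pistar}[\fgt-\fps]$, while you justify it via $\hat J(\pistar)\le\hat J(\pihat)$ and pessimism. The fast rate, however, takes a different route.

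The paper gets the fast rate in one line from Lemma~\ref{lem:taylor-expansion-pessimism}, the mean-value expansion of \citet{zhao2026towards} in which pessimism lets the second-order term be evaluated at $\pistar$. This gives $\subopt(\pihat)\le\eta\EE_{\rho\times\pistar}[(\fgt-\fps)^2]\le4\eta\EE_{\rho\times\pistar}[b^2]$, with no normalizer to control.

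You instead bound the per-context Gibbs KL by $\mathrm{Cov}_Q(g,e^g)/\EE_Qe^g$ and handle the normalizer by splitting contexts. This is sound. If $\eta\EE_Qb\le1$, the per-context gap is at most $4e^2\eta\EE_Qb^2$. Otherwise, the first-order bound gives $2\EE_Qb\le2\eta(\EE_Qb)^2\le2\eta\EE_Qb^2$. Averaging over $\rho$ yields $\subopt(\pihat)\le4e^2\eta\EE_{\rho\times\pistar}[b^2]$. Your route is self-contained and costs only a constant factor.

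Your ``main obstacle'' diagnosis is off, though. The split uses neither the regime condition nor per-context coverage, so the fallback \emph{is} the proof and the hedging can go. On $\cE_1(\delta)\cap\cE_2(\delta)$ both bounds hold for every $\eta$, in your argument and in the paper's; the regime condition only records which one is smaller.

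The normalizer issue can even be removed outright, which is essentially what the cited lemma packages. With $P_t\propto Qe^{tg}$, one has $\kl{P}{Q}=\int_0^1t\,\mathrm{Var}_{P_t}(g)\,\mathrm{d}t$. Because $g\le0$ makes $g^2$ nonincreasing in $g$, $\frac{\mathrm{d}}{\mathrm{d}t}\EE_{P_t}[g^2]=\mathrm{Cov}_{P_t}(g,g^2)\le0$. Hence $\kl{P}{Q}\le\tfrac12\EE_Qg^2$ in every context.
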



Theorem~\ref{thm:upperbound-bandit} exhibits a ``phase transition'' as regularization varies.  
When $\eta$ is small, the curvature introduced by the regularization term determines the problem, leading to an $\cO(\epsilon^{-1})$ rate, matching the results in previous literature~\citep{zhao2025sharp,zhao2026towards,foster2025good}.
On the other side, when $\eta$ is large, the reward estimation error dominates the suboptimality gap; therefore, the problem is similar to its counterpart with standard objective. This makes the rate of an order $\cO(\epsilon^{-2})$, recovering the rate under standard objective~\citep{rashidinejad2021bridging}. 

\begin{remark}
Previously, \citet{zhao2026towards} obtained a sample complexity of $\eta D^2_{\pi^*} \epsilon^{-1} \log N_{\cR}(\epsilon)$ under function approximation, where $D^2_{\pi^*}$ is the $D^2$-type single policy concentrability and $N_{\cR}$ is the covering number of the function class. 
When restricted to the multi-armed contextual bandit setting, $D^2_{\pi^*} = \Theta(SAC^{\pi^*})$ in the worst case\footnote{We refer the readers to Appendix~\ref{app:coverage} for elaboration.} and $\log \cN_{\fcl}(\epsilon) = \tilde{\Theta}(SA)$, which gives an $\tilde{O}(\eta S^2A^2C^{\pi^*}\epsilon^{-1})$ sample complexity. Compared to their sample complexity, it can be seen that the sample complexity obtained by our version of $\algcb$ is strictly better. 
\end{remark}


\section{Hardness Results}\label{sec:lower-bound}
Following~\citet{rashidinejad2021bridging}, we define the instance class with bounded concentrability. For any instance $(r, \eta, \piref)$, let $\pi_r^*$ denote the corresponding optimal policy~\eqref{eq:opt-exp}. The class of instances with concentrability at most $C^*$ is
\begin{align*}
    \mathrm{MAB}(C^*) \coloneqq \bigg\{( r,\eta,\piref) \bigg| \sup_{s,a}\frac{\pi^*_{r}(a|s)}{\piref(a|s)} \le C^*\bigg\}.
\end{align*}
Our goal is to characterize the minimax risk over this class:
\begin{align*}
    \inf_{\mathtt{Alg}} \sup_{(r,\eta,\piref) \in \mathrm{MAB}(C^*)} \EE_{\cD}\big[\subopt\big(\mathtt{Alg}(\cD)\big)\big],
\end{align*}
where the expectation is over the randomness of the dataset $\cD$ generated under behavior policy $\piref$. Our first theorem establishes the suboptimality gap lower bound in the large regularization regime $n = \tilde{\Omega}(\eta^2 SAC^{\pi^*})$.

\begin{theorem}\label{thm:lowerbound-fast}
For any $S \geq 1$, $A \geq 3$, $\eta > 4\log 2$, $C^* \in (2, \exp(\eta/4)]$, when the regularization level is large, i.e., $\eta^2 = \tilde{O}\big(n(SAC^{\pistar})^{-1}\big)$, one has
\begin{align*}
    \inf_{\mathtt{Alg}} \sup_{(r,\eta,\piref) \in \mathrm{MAB}(C^*)} \EE_{\cD}\big[\subopt\big(\mathtt{Alg}(\cD)\big)\big] \gtrsim \frac{\eta SAC^*}{n\log A}.
\end{align*}
\end{theorem}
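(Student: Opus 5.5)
We prove the lower bound with an Assouad-type construction. There are $S\cdot(\text{number of perturbable arms})$ independent binary perturbations. For each perturbation we show that it contributes about $\eta\Delta^2\pi^*(a|s)\rho(s)$ to the suboptimality whenever the learner guesses it wrong. The reason is that KL-regularized suboptimality is locally quadratic: $J(\pi^*)-J(\pi) = \eta^{-1}\EE_{s\sim\rho}\,\kl{\pi(\cdot|s)}{\pi^*(\cdot|s)}$. A wrong guess of the reward sign at $(s,a)$ by $\pm\Delta$ changes the log-ratio by $\eta\Delta$. The resulting KL is $\gtrsim \pi^*(a|s)(\eta\Delta)^2$, so the suboptimality is $\gtrsim \eta\Delta^2\pi^*(a|s)$, provided $\eta\Delta\lesssim1$.

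\textbf{Construction.} Take $\rho=\unif(\cS)$. In each context $s$, designate one "anchor" arm $a_0$ and $A-1$ "rare" arms. Set $\piref(a|s)=p$ small for the rare arms and give the remaining mass to $a_0$. Put reward $r(s,a_0)=0$ and $r(s,a)=r_0+\sigma_{s,a}\Delta$ for the rare arms, with $\sigma\in\{\pm1\}^{S(A-1)}$. Choose $r_0\le 1/2$ so that $\pi^*(a|s)\asymp C^* p$ on each rare arm. This requires $e^{\eta r_0}\approx C^*$, i.e.\ $r_0=\log C^*/\eta\le 1/4$, which is exactly where the hypothesis $C^*\le \exp(\eta/4)$ is used. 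We also want the total mass of the rare arms under $\pi^*$ to be a constant, say $1/2$. So pick $p\asymp 1/(AC^*)$; then $\pi^*(a|s)\asymp 1/A$ on rare arms and $\pi^*(a_0|s)\asymp 1/2$, and the ratio $\pi^*/\piref$ is at most $C^*$ on rare arms and $O(1)$ on the anchor. The conditions $C^*>2$, $A\ge3$ and $\eta>4\log 2$ are what keep the normalization constants bounded and the instance inside $\mathrm{MAB}(C^*)$ for both signs. The $\log A$ loss I would absorb by taking $p$ of order $1/(AC^*\log A)$, or by the slack in normalizing with $\exp(\eta\Delta)$ factors.

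\textbf{Assouad step.} Use Gaussian noise. Flipping $\sigma_{s,a}$ changes the data distribution by KL equal to $n\rho(s)\piref(a|s)\cdot 2\Delta^2 \asymp n\Delta^2/(SAC^*)$. Choose $\Delta^2\asymp SAC^*/n$ so that this KL is $O(1)$. The assumption $\eta^2\lesssim n/(SAC^*)$ then gives $\eta\Delta\lesssim1$, so we stay in the quadratic regime. Next we need the loss to decompose per coordinate, as Assouad requires. Because $\kl{\pi}{\pi^*}$ at each $s$ is not additive over arms, I would lower bound it by a sum. Using the $\chi^2$/log-ratio expansion (valid when ratios are bounded), $\kl{\pi(\cdot|s)}{\pi^*_\sigma(\cdot|s)}\gtrsim \sum_a \pi^*(a|s)\big(\log\frac{\pi(a|s)}{\pi^*_\sigma(a|s)}-c\big)^2$ for the optimal centering $c$. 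The centering can be controlled through the anchor arm, whose mass is constant and unperturbed. The loss then becomes $\gtrsim \eta^{-1}\sum_{s,a}\rho(s)\pi^*(a|s)\,\ind\{\hat\sigma_{s,a}\ne\sigma_{s,a}\}(\eta\Delta)^2$, where $\hat\sigma_{s,a}$ is defined by thresholding $\log(\pi(a|s)/\pi(a_0|s))$ at its midpoint value. Assouad's lemma then yields
\begin{align*}
\EE\,\subopt \gtrsim S(A-1)\cdot\frac{1}{S}\cdot\frac1A\cdot\eta\Delta^2 \asymp \frac{\eta SAC^*}{n},
\end{align*}
up to the $\log A$ factor.

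\textbf{Main obstacle.} The hard part is the coordinate-wise decomposition: turning a KL between full softmax distributions into a sum of per-arm indicator losses, uniformly over arbitrary learner outputs $\pi$, including ones that put extreme mass somewhere. I would handle this in two steps. First, restrict the loss by convexity/monotonicity to the case where the log-ratios stay within $O(1)$; if they do not, the KL is already $\Omega(1/A)$ per affected arm, which is at least the target. Second, within that range, use the quadratic lower bound of KL together with the centering trick relative to the anchor arm.
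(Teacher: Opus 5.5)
Your proposal is correct, and your instance is essentially the paper's. The paper uses $A$ rare arms with $\piref(i|s)=1/(CA)$ and rewards $1/2\pm\delta$, plus a heavy arm with $\piref(A+1|s)=1-1/C$ and reward $1/2-\eta^{-1}\log(C-1)$. That is your anchor design up to a constant shift of the rewards. Where you differ is the information-theoretic reduction.

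\textbf{The paper's route (Fano).} It builds a Gilbert--Varshamov packing $\cV\subset(\{\pm1\}^A)^S$ with $\log|\cV|\gtrsim SA$, in which any two members differ on at least $A/2$ arms in at least $S/2$ contexts. It then obtains the pairwise separation by minimizing $\KL(\pi\|\pi_1^*)+\KL(\pi\|\pi_2^*)$ over $\pi$ exactly. The minimizer is $\bar\pi\propto\sqrt{\pi_1^*\pi_2^*}$, the minimum is a log-ratio of partition functions, and a convexity argument over how the disagreeing arms split between the two sign patterns gives $\subopt_1+\subopt_2\gtrsim\eta\delta^2$. This handles arbitrary learner outputs (zeros, extreme masses) in closed form, with no per-arm decomposition, at the cost of a packing construction.

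\textbf{Your route (Assouad).} You avoid packing and only need neighbour KLs, but you need the coordinatewise superadditive bound you flag as the main obstacle. That step goes through cleanly, and your case split is unnecessary:
\begin{itemize}
\item Write $f=\log(\pi/\pi^*_\sigma)$. Then $\KL(\pi\|\pi^*_\sigma)=\EE_{\pi^*_\sigma}[g(f)]$ with $g(x)=xe^x-e^x+1\ge c\min(x^2,1)$. This also holds when $\pi(a)=0$, since $g(-\infty)=1$.
\item Spread the anchor's mass over the rare arms using $t=\pi^*(a_0|s)/(1-\pi^*(a_0|s))\asymp 1$.
\item Use the elementary inequality $\min(x^2,1)+t\min(y^2,1)\ge\frac{t}{1+t}\min((x-y)^2,1)$.
\item Use the fact that $\log\bigl(\pi^*_\sigma(a|s)/\pi^*_\sigma(a_0|s)\bigr)$ depends only on $\sigma_{s,a}$. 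Hence a wrong threshold decision forces $|f(a)-f(a_0)|\ge\eta\Delta$.
\end{itemize}
Together these give $\subopt_\sigma(\pi)\gtrsim\frac{\eta\Delta^2}{SA}\sum_{s,a}\ind\{\hat\sigma_{s,a}\neq\sigma_{s,a}\}$ for every output $\pi$, once $\eta\Delta\le1$.

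\textbf{On the $\log A$.} Your route yields $\eta SAC^*/n$ with no $\log A$, which implies the stated bound, so there is nothing to ``absorb.'' Shrinking $p$ to order $1/(AC^*\log A)$ is pointless. It would also force you to retune $r_0$ to keep $\pi^*/\piref\le C^*$ inside $\mathrm{MAB}(C^*)$.
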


The following theorem provides a suboptimality lower bound for the small regularization regime $n = \tilde{O}(\eta^2 SAC^{\pi^*})$. Together with Theorem~\ref{thm:lowerbound-fast}, these two theorems provide a comprehensive characterization of the statistical limit of offline learning for multi-armed contextual bandits with KL regularization.

\begin{theorem}\label{thm:lowerbound-slow} For any $S > 1$, $A >3$, $\eta \geq 10\log A$, $C^* \in (4,\exp(\eta/2)]$, when the regularization level is small, i.e., $\eta^2 = \tilde{\Omega}\big(n(SAC^{\pistar})^{-1}\big)$, one has
\begin{align*}
    \inf_{\mathtt{Alg}} \sup_{(r,\eta,\piref) \in \mathrm{MAB}(C^*)} \EE_{\cD}\big[\subopt\big(\mathtt{Alg}(\cD)\big)\big] \gtrsim \sqrt{\frac{SAC^*}{n\log A}}.
\end{align*}
\end{theorem}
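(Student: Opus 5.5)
Throughout, write $\epsilon \asymp \sqrt{SAC^*/(n\log A)}$ for the target rate. The plan is to build a family of hard instances in which the KL-regularized optimal policy at each context behaves like a hard argmax, and then lower bound the error by a pairing-and-counting argument rather than by a direct Assouad or Fano bound.

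\textbf{Construction.} Since $\eta \ge 10\log A$ and $C^*\le \exp(\eta/2)$, a reward gap of order $\log A/\eta$ already pushes the optimal policy mass almost entirely onto the high-reward arms. Fix $\rho=\unif(\cS)$. At every context $s$, set
\begin{itemize}[leftmargin=*]
\item a baseline arm $a_0$ with $\piref(a_0|s)\approx 1-1/C^*$ and reward $1/2$;
\item a block of $A-1$ arms, each with $\piref(a|s)\asymp 1/(AC^*)$, of which a subset $T_s$ of size $K\asymp A/2$ receives reward $1/2+\Delta$ and the rest receive $1/2+\Delta-2\epsilon'$.
\end{itemize}
Take $\Delta \asymp \log(AC^*)/\eta$, so that $\pi^*(\cdot|s)$ concentrates on the block and spreads roughly uniformly over $T_s$. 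Then $\pi^*(a|s)/\piref(a|s)\lesssim AC^*/K\asymp C^*$, which keeps the instance in $\mathrm{MAB}(C^*)$ after adjusting constants. The hypotheses $C^*\in(4,\exp(\eta/2)]$ are exactly what make these choices consistent.

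\textbf{Loss decomposition.} Because $\eta$ is large relative to $1/\epsilon'$ (this is the small-regularization condition), the regularized objective behaves essentially like the unregularized one restricted to the block. Placing mass $p$ on the non-$T_s$ arms therefore costs roughly $p\,\epsilon'$, up to a small entropy correction. Summing over contexts, the suboptimality of any output is lower bounded by
$$
\frac{\epsilon'}{S}\sum_s \bigl(\text{mass } \hat\pi(\cdot|s) \text{ puts outside } T_s\bigr),
$$
minus terms of order $1/\eta$ that are negligible in this regime.

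\textbf{Pairing-and-counting (main obstacle).} The difficulty is that there are multiple optima: the learner only has to avoid the bad arms, so it is not identifying a single arm. A naive Assouad argument on individual arms loses a factor in $K$, and Fano gives only $\log\binom{A}{K}$. I would instead pair each configuration $T_s$ with configurations $T_s'=T_s\triangle\{a,a'\}$, obtained by swapping one good arm with one bad arm.
\begin{itemize}[leftmargin=*]
\item Each arm in the block receives about $n/(SAC^*)$ samples, so the KL divergence between the datasets of a swapped pair is $\lesssim n\epsilon'^2/(SAC^*)$. Choosing $\epsilon' \asymp \sqrt{SAC^*/n}$ makes this $O(1)$, so the two instances are not reliably distinguishable.
\item By Le Cam, for each swapped pair the learner must, with constant probability, put mass on the wrong arm of the pair in one of the two instances.
\item Under the uniform prior over $T_s$, count via a bipartite double-counting of the good and bad arms that the expected total mass outside $T_s$ is a constant fraction. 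One way to do this is to argue that the posterior-mean mass on each arm is nearly uniform. The $\log A$ loss would enter here, for instance through a union or concentration step over the $A$ arms, or through tilting against the posterior.
\end{itemize}

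Averaging over $s$ and taking $\epsilon' \asymp \sqrt{SAC^*/(n\log A)}$ then gives the claimed bound $\sqrt{SAC^*/(n\log A)}$. The step I expect to be the crux is the counting: turning pairwise indistinguishability into a constant fraction of mass outside $T_s$ without losing a factor of $K$.
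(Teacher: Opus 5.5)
\textbf{Where the proposal breaks.} Your construction and your loss reduction are essentially the paper's. The construction is a heavy baseline arm plus a block of $\Theta(A)$ light arms, half of them good, with instances paired by swapping one good and one bad arm at a single context. The loss reduction is suboptimality $\gtrsim \epsilon'\times$(bad mass)$\,-\,O(1/\eta)$, which is useful only when the bad mass is a constant and $\eta\epsilon'\gg 1$; the paper likewise lower bounds $\eta\,\subopt$ by a Bernoulli KL and uses $\eta\delta\ge 20$.

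The gap is in the step you yourself flag as the crux, and the mechanism you propose for it fails. It is not true that, by Le Cam, for each swapped pair the learner must put mass on the wrong arm in one of the two instances. A learner can put zero mass on both swapped arms $i,j$, and in any case it puts only $O(1/A)$ mass on a typical single arm. The additive bound $\PP_{\blambda}(\hat a=i)\ge \PP_{\bmu}(\hat a=i)-\mathrm{TV}(\PP_{\bmu},\PP_{\blambda})$ is vacuous at your scale, where $\mathrm{TV}=\Theta(1)$. Summing it over all swaps does not rescue it. Each configuration has $\Theta(A^2)$ swap partners and each arm term appears $\Theta(A)$ times, so the accumulated additive error is of order $A^2\,\mathrm{TV}$ per configuration, against a main term of order $A$. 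You would need $\mathrm{TV}\lesssim 1/A$, i.e.\ $\epsilon'\lesssim\sqrt{SC^*/(nA)}$, which loses a factor $A$ instead of gaining one.

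\textbf{The missing idea.} What is needed is the multiplicative change of measure of Proposition~\ref{prop:change-of-measure}:
$\PP_{\blambda}(\cE)\ge e^{-\gamma}\big[\PP_{\bmu}(\cE)-\PP_{\bmu}\big(\log(\ud\PP_{\bmu}/\ud\PP_{\blambda})>\gamma\big)\big]$.
The paper takes $\gamma=2\KL(\PP_{\bmu}\|\PP_{\blambda})+\beta$. It controls the tail with a Bernstein bound on the counts of the two swapped arms and a Hoeffding bound on the noise part of the log-likelihood ratio. Making this tail polynomially small in $A$ while keeping $\gamma=O(1)$ is exactly what forces $\delta\asymp\sqrt{SAC/(n\log A)}$; this, not a union bound layered on Le Cam, is the real source of the $\log A$.

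The resulting inequality $\PP_{\blambda}(\hat a=i)\ge c\,\PP_{\bmu}(\hat a=i)-o(1/A)$, with $c$ an absolute constant, survives aggregation. The paper sums it along perfect matchings of the $A$-regular bipartite graphs $G_{s,i}$. It then uses $\sum_{i\in a^*(\bmu(s))}\PP_{\bmu}(\hat a=i)=1-\sum_{j\notin a^*(\bmu(s))}\PP_{\bmu}(\hat a=j)$ to close the loop, producing an instance whose expected bad mass is at least $e^{-7}$.

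\textbf{On your Bayesian alternative.} It could also be made rigorous, but the statement you need is not that the posterior-mean mass on each arm is nearly uniform. You need that, with constant probability, the posterior probability of being bad is bounded below by a constant simultaneously for all $\Theta(A)$ arms. This is a maximum of roughly Gaussian log-odds, which again produces the $\sqrt{\log A}$. You would also have to handle the dependence induced by the fixed-size constraint on $T_s$.

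As written, neither route is carried out. This is also why your two choices $\epsilon'\asymp\sqrt{SAC^*/n}$ and $\epsilon'\asymp\sqrt{SAC^*/(n\log A)}$ are inconsistent.

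(Minor: with $\Delta\asymp\log(AC^*)/\eta$, the good rewards $1/2+\Delta$ can exceed $1$ when $C^*$ is close to $e^{\eta/2}$. The paper gets the same effect by lowering the heavy arm's reward to $1/2-\alpha$ instead.)
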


Theorem~\ref{thm:lowerbound-slow} shows that, when $n = \tilde{O}(\eta^2SAC^{\pistar})$, the suboptimality gap of any algorithm is lower bounded by $\tilde{\Omega}(\sqrt{SAC^{\pistar}n^{-1}})$. On the other hand, when $n = \tilde{\Omega}(\eta^2SAC^{\pistar})$, Theorem~\ref{thm:lowerbound-fast} shows that for any algorithm, its suboptimality is lower bounded by $\tilde{\Omega}(SAC^{\pistar}n^{-1})$. Together with the upper bounds characterized in Theorem~\ref{thm:upperbound-bandit}, these lower bounds show that $\algcb$ is nearly minimax optimal in both the large regularization and small regularization regimes.

\begin{remark}\label{rmk:kl-lower-comparison}
Previously, \citet{zhao2026towards} established an $\Omega(\eta C^{\pistar}\log N_{\cR}n^{-1})$ lower bound for large regularization and an $\Omega(\sqrt{C^{\pistar}\log N_{\cR} n^{-1}})$ lower bound for small regularization by considering a collection of $2$-armed contextual bandits. Regarding their construction, $\log N_{\cR} = \Theta(S)$, leading to $\Omega(\eta SC^{\pistar}n^{-1})$ for large regularization and $\Omega(\sqrt{SC^{\pistar} n^{-1}})$ for small regularization. As a comparison, we provide an $\tilde{\Omega}(\eta SAC^{\pistar}n^{-1})$ lower bound for large regularization in Theorem~\ref{thm:lowerbound-fast} and an $\tilde{\Omega}(\sqrt{SAC^{\pistar} n^{-1}})$ lower bound in Theorem~\ref{thm:lowerbound-slow} for small regularization. Both results strictly improve upon previous lower bounds. For both Theorem~\ref{thm:lowerbound-fast} and Theorem~\ref{thm:lowerbound-slow}, the restriction that $C^* \leq \exp(\poly (\eta))$ is inevitable, since we always have $C^{\pi^*} \leq \exp(\eta)$ in reverse KL regularized bandits with bounded rewards. Such a constraint has also appeared in previous works~\citep{zhao2026towards,foster2025good}.
\end{remark}

\section{Proof Overview of Hardness Results}\label{sec:kl-setup}
\subsection{Insights behind Hardness Instance Construction}\label{sec:overview-discussion}

The key feature of our constructive approach is a sharp phase transition induced by the regularized objective $J(\pi) = \dotp{r}{\pi} - \eta^{-1}\KL({\pi}\|{\piref})$, which combines a linear term with a KL penalty term.
Under large regularization, i.e., small $\eta$, the curvature of the dominating regularizer helps prior work and us show the faster $\epsilon^{-1}$-type behavior. 
In contrast, when $\eta$ is sufficiently large, the effect of regularization becomes negligible and the objective \emph{plausibly} approaches an unregularized one; the sample complexity in such a low-regularization regime behaves similarly to that of learning standard MABs, i.e., the well-known $\epsilon^{-2}$-type rate. This transition of dependency on $\epsilon$ is characterized in~\citet{zhao2026towards}, through a construction of two-point type hard instances.

However, to capture the correct dependency of $A$, we need a more curated construction, which leverages the different behaviors of the regularized objective with different regularization strength.
We begin with the more tractable large-regularization regime, where $\eta$ is close to zero. In this regime, the KL regularizer dominates the objective, forcing the output policy $\pihat$ to remain close to $\pi^*$, and thus necessitating accurate estimation of the \emph{reward across all arms}.
Our hard instance, therefore, focuses on making reward estimation difficult simultaneously over a sufficient large set of $\Omega(A)$ arms. Similar constructions have been employed in the online setting to establish the fast $\log T$-type lower bounds~\citep{ji2026near}.

The construction in the small-regularization regime is more subtle. We first briefly discuss why previous two-point-type hard instances for establishing lower bounds in offline MABs, fails to manifest the $A$ dependency.
In particular, this approach consider two instances that differ only on a single arm $\tilde{a}$, whose rewards differ by $\epsilon$. Such construction also requires $\tilde a$ to be the \emph{unique} optimal arm in at least one instance, implying $\piref(\tilde{a})^{-1} = C^{\pi^*}$. To distinguish the two instances, any strategy must pay $\Omega(C^{\pi^*}\epsilon^{-2})$ samples\footnote{Generally, $\tilde{\Theta}(\Delta^{-2})$ samples are necessary and sufficient to distinguish to distinguish two distribution with means differs by $\Delta$.}, recovering the \emph{optimal} $\tilde{\Theta}(C^{\pi^*}/\epsilon^2)$ rate for the standard objective~\citep{rashidinejad2021bridging}. Importantly, under this construction, even if we scale up the number of arms $A$ of the constructed instances, the rate remains the same, since $C^{\pi^*} = \piref(\tilde{a})^{-1}$ is simultaneously increased (e.g., if we restrict $\piref=\unif(\cA)$), thus the bound remains $A$-independent. When coming to KL-regularized objective with $\eta \to \infty$, the KL penalty vanishes and the regularized objective reduces to the standard objective. Since \citet{zhao2026towards} use the same type of construction for the KL-regularized objective, the $\Omega(C^{\pi^*}/\epsilon^2)$ lower bound in \citet{zhao2026towards} coincides with that of \citet{rashidinejad2021bridging}. This observation suggests that, without modifying the underlying two-point-type argument, extending the two-armed construction of \citet{zhao2026towards} to $A$-armed instances would still yield a $\tilde{\Theta}(SC^{\pi^*}/\epsilon^2)$ lower bound,leaving a gap of a factor $A$ compared to the $\tilde{O}(SAC^{\pi^*}/\epsilon^2)$ upper bound.


The key obstacle is that, when there is a \emph{unique} optimal arm, the scaling of $C^{\pi^*}$ with $A$ is identical for the standard objective and for the KL-regularized objective in the limit $\eta \to \infty$. In contrast, when multiple optima exist, the relationship between $C^{\pi^*}$ and $A$ differs fundamentally between the two objectives. To illustrate this,
we analyze the behavior of the optimal policy on an $A$-armed MAB, $\piref=\unif(\cA)$, with $K$ \emph{suboptimal} arms, where the optimal set of arms is $\cA^* = \{a : r(a) = \max_{a'} r(a')\}$.
Under standard objective,choosing a deterministic policy $\pi$ that places all mass on some $a^* \in \cA^*$ gives $C^{\pi^*}=A$.
On the other side, under KL-regularized objective, even if $\eta \to \infty$, the existence of regularization still forces $\pi^* \to \piref|_{\cA^*}$, rather than any arbitrary distribution over $\cA^*$. Consequently, the concentrability $C^{\pi^*} = \sup_{a} \pi^*(a)/\piref(a) = A/(A-K)$ depends on the number of optimal arms.
This distinction leads to the following tradeoff between concentrability and hardness of learning. On one hand, since we only require the concentrability of the constructed instances to be bounded by a target constant $C^*$, the reduced value $A/(A-K)$ (as opposed to $A$) leaves more room to skew the behavior policy, potentially making the learning problem harder. On the other hand, the presence of multiple optimal arms makes the problem intrinsically easier, as identifying \emph{any} optimal arm suffices when $\eta \to \infty$. Carefully balancing this tradeoff is crucial for establishing tight lower bounds in this regime.

\begin{remark}
     Under large regularization, i.e., when $\eta$ is near zero, 
 $\pi^*$ requires reward information for every arm in $\mathrm{supp}(\piref)$. Consequently, even if multiple arms are optimal, fitting the optimal policy still necessitates accurate reward estimation for all arms, so the presence of multiple optimal arms does not make learning substantially easier. Meanwhile, such a strong regularization forces the learned policy to remain close to the reference policy, and thus the concentrability coefficient remains stable and avoids large oscillations. Consequently, varying the number of optimal arms does not provide an effective degree of freedom for constructing hard instances in the large-regulariztion regime, and hence is not exploited in our construction.
\end{remark}

In summary, with large regularization, our construction prioritizes the hardness of reward estimation across all arms. With small regularization, we leverage the sharp hardness result for learning MABs with multiple optima in Section~\ref{sec:overview-multi-armed-lower} in order to balance the tradeoff between statistical indistinguishability and concentrability, which leads to a tight lower bound detailed in \Cref{sec:kl-lower-slow-sketch}.

\subsection{Proof Outline of Theorem~\ref{thm:lowerbound-fast}}

In this section, we outline the proof of Theorem~\ref{thm:lowerbound-fast}. We first consider the simple case $\piref=\unif(\cA)$ and $S=1$.
As discussed in Section~\ref{sec:overview-discussion}, in the large regularization regime, the learner is required to accurately estimate the reward on all arms to obtain a near-optimal policy. Motivated by this, we design the following class of hard instances, which involves many arms with unknown rewards.
Let $A_0 = \Omega(A)$, $A_0 \leq A/2$ and $\cV:= \{\pm 1\}^{A_0}$, we consider a class of instances $\{([1], [A], r_{\bmu}, \eta, \piref)\}_{\bmu \in \cV}$, where the reward $r_{\bmu}$ given $\bmu$ is defined as 
\begin{align*}
    r_{\bmu}(i) = \begin{cases} 1/2 +\bmu_i \delta & \text{if } i \in [A_0], \\ 1/2 & \text{otherwise}, \end{cases}
\end{align*}
Within this class of instances, the learner has to determine the rewards on all of the first $[A_0]$ arms to achieve a near-optimal policy. In fact, we can prove that under this regime, estimation errors on each arm accumulate. In other word, the total cost would be $\Omega(m\eta\delta^2/A)$ if the learner makes $m$ estimation error, given that an error on one arm generally costs $\Omega(\eta\delta^2/A)$. In particular, given any $\blambda, \bmu \in\cV$, we can prove
\begin{align}
    \subopt_{\blambda}(\pihat) + \subopt_{\bmu}(\pihat) \gtrsim d_H(\bmu, \blambda)\frac{\eta\delta^2}{A}.\label{eq:overview-fast-separation}
\end{align}
Now we take $\delta \sim \sqrt{An^{-1}}$. Since each arm only has $n/A$ samples on average, it is not enough to reliably distinguish the reward on any of the first $A_0$ arms. Therefore, the learner is expected to make at least $A_0/2$ mistakes. Formally, based on~\eqref{eq:overview-fast-separation}, we can prove the following inequality through a Fano or Assouad-type argument.
\begin{align}
    \inf_{\mathtt{Alg}} \sup_{\blambda \in \cV} \EE_{\pihat \sim \blambda} \big[\subopt_{\blambda}(\pihat)\big] \gtrsim A_0 \frac{\eta\delta^2}{A} \gtrsim \frac{\eta A}{n}. \label{eq:overview-fast-final}
\end{align}
Finally, scaling up the bound in~\eqref{eq:overview-fast-final} by a factor $S\cdot C^{\pi^*}$ via considering instances with $S$ contexts and unbalanced $\piref$ leads to desired $\tilde{\Omega}(\eta SAC^{\pi^*}n^{-1})$ lower bound.

\subsection{Proof Outline of Theorem~\ref{thm:lowerbound-slow}}\label{sec:kl-lower-slow-sketch}
In this section, we provide an overview of the proof of Theorem~\ref{thm:lowerbound-slow}.
We first focus on the simplest setting with $\piref = \unif(\cA)$ and $|\cS| = 1$. As suggested in Section~\ref{sec:overview-discussion}, we need the result for learning MABs with multiple optima to help us balance the tradeoff between statistical indistinguishability and concentrability. The result is stated as follows, and we defer the proof idea to Section~\ref{sec:overview-multi-armed-lower} and the formal statement to Appendix~\ref{app:multi-armed-multiple}.

\begin{lemma}[Informal]\label{lem:multi-armed-lower-informal} 
Given any $A > K > 0$ and any sufficiently large $n$, for any algorithm, there exists an \emph{unregularized} multi-armed bandit with $|\cA| = A$, $\piref = \unif(\cA)$ and $K$ suboptimal arms, whose definition is nearly the same as the regularized setting in \Cref{sec:prelim}, except that the notion of suboptimality is defined with respect to standard objective, i.e., 
\begin{align*}
    J(\pi) \coloneqq \EE_{a \sim \pi}[r(a)],
\end{align*}
on which the algorithm results in $\tilde{\Omega}\big(K/\sqrt{nA}\big)$ suboptimality. Moreover, this result is tight up to logarithmic factors.
\end{lemma}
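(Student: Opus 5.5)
We prove the lower bound $\tilde\Omega(K/\sqrt{nA})$ with a pairing-and-counting argument over a family of instances that differ in which arms are suboptimal. The matching upper bound then comes from analyzing empirical best-arm selection.

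\textbf{Construction.} Fix a gap $\Delta$. For every subset $T\subseteq\cA$ with $|T|=K$, define the instance $r_T(a)=1/2-\Delta\,\ind\{a\in T\}$ with Gaussian noise, so the arms in $T$ are suboptimal and the other $A-K$ arms are optimal. Because $\piref=\unif(\cA)$, every arm receives about $n/A$ samples. Any output policy $\pihat$ then has suboptimality $\subopt_T(\pihat)=\Delta\,\pihat(T)$.

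\textbf{Pairing step.} Call $T$ and $T'$ \emph{neighbors} if $T'=T\setminus\{a\}\cup\{a'\}$ with $a\in T$ and $a'\notin T$. Neighbors differ on exactly two arms, so the KL divergence between their data distributions is about $2(n/A)\Delta^2$. Choosing $\Delta\asymp\sqrt{A/n}$ makes this a constant. By Pinsker (or Bretagnolle--Huber), for any fixed arm $a$ the expected masses $\EE_T[\pihat(a)]$ and $\EE_{T'}[\pihat(a)]$ can then differ by at most a constant factor up to an additive slack.

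\textbf{Counting step.} Place the uniform prior over all $K$-subsets $T$ and write $m_T(a)=\EE_T[\pihat(a)]$. We need to show that $\sum_T m_T(T)$ is large. For a fixed arm $a$, sum over the neighbor pairs $(T,T')$ with $a\in T$ and $a\notin T'$, using $m_{T}(a)\gtrsim m_{T'}(a)-c$. The mass $\pihat$ places on $a$ when $a$ is optimal is then transferred to instances where $a$ is suboptimal. Each $T$ with $a\in T$ has $A-K$ partners and each $T'$ has $K$ partners, so the counting weights are $K/(A-K)$ versus $1$. Averaging over $a$ and using $\sum_a m_{T'}(a)=1$ gives
\begin{align*}
\EE_T[\pihat(T)]\gtrsim \min\{1,\,K/A\}.
\end{align*}
With $\Delta\asymp\sqrt{A/n}$ this yields only $K/\sqrt{nA}$ up to constants when $K\le A/2$. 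That is the target bound, so the obstacle is to make this counting rigorous. The multiplicative comparison has to hold despite the additive slack, which is why we will use the bound $m_T(a)\ge \tfrac14 e^{-\KL}\cdot(\text{something})$ coming from Bretagnolle--Huber applied to the event that $a$ receives large mass. The paper's remark that naive Le Cam or Fano is insufficient suggests this step is where the care lies. When $K>A/2$ we restrict to $A'=2K$ arms, or equivalently reparametrize, and pay only logarithmic factors. We also require $n\gtrsim A$ so that $\Delta\le1/2$.

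\textbf{Tightness.} Empirical best-arm selection picks $\hat a=\argmax\fls$. Every arm has $N(a)\gtrsim n/A$ with high probability by Lemma~\ref{lem:pessimisic}. A suboptimal arm with gap $\Delta_a$ beats all $A-K$ optimal arms only if its estimate exceeds the maximum of $A-K$ roughly independent estimates. By symmetry this happens with probability about $\frac{1}{A-K+1}$ once $\Delta_a\lesssim\sqrt{A\log A/n}$, and with vanishing probability for larger gaps. Summing over the $K$ suboptimal arms gives expected suboptimality $\lesssim K\sqrt{A\log A/n}/(A-K+1)$. For $K\le A/2$ this is $\tilde O(K/\sqrt{nA})$, matching the lower bound.
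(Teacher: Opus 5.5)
Your construction (every $K$-subset $T$ as the suboptimal set, neighbors obtained by one swap), the counting weights ($A-K$ partners on one side, $K$ on the other), and the target $\EE_T[\pihat(T)]\gtrsim K/A$ all match the paper. The gap is the step you flag and leave as ``(something)'': Pinsker and Bretagnolle--Huber cannot give the comparison between $m_T(a)$ and $m_{T'}(a)$ that the counting needs.

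Suppose each neighbor pair only satisfies $m_T(a)\ge\kappa\,m_{T'}(a)-c$. Then your counting, with $X=\sum_T m_T(T)$, gives
\[
(A-K)\,X \;\ge\; \kappa K\Big(\tbinom{A}{K}-X\Big) - c\,K(A-K)\tbinom{A}{K}
\quad\Longrightarrow\quad
\frac{X}{\binom{A}{K}} \;\ge\; \frac{K}{A}\bigl(\kappa - c\,(A-K)\bigr).
\]
So you need $c\lesssim\kappa/(A-K)$. This is natural: the events $\{\hat a=a\}$ under $T'$ have probability only of order $1/(A-K)$.

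\begin{itemize}
\item Pinsker gives $\kappa=1$ and $c\asymp\sqrt{\KL}$, a constant at $\Delta\asymp\sqrt{A/n}$. Shrinking $\Delta$ until $c\lesssim 1/(A-K)$ costs a factor $A-K$ in the final bound. This is why plain Le Cam/Assouad is not sharp here.
\item Bretagnolle--Huber, $\PP_T(E)+\PP_{T'}(E^{c})\ge\tfrac12 e^{-\KL}$, is vacuous when $\PP_{T'}(E)\approx 1/(A-K)$.
\item More generally, no argument that sees the pair only through $\KL$ can work. The constraint $\mathrm{kl}\big(\PP_{T'}(E),\PP_T(E)\big)\le\KL(\PP_{T'}\|\PP_T)=O(1)$ is compatible with $\PP_{T'}(E)=1/A$ and $\PP_T(E)=e^{-A}/A$.
\end{itemize}

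The missing ingredient is a change of measure that uses concentration of the log-likelihood ratio, not just its mean. That is $\PP_{T'}(E)\le e^{\gamma}\PP_T(E)+\PP_{T'}(L>\gamma)$ with $L=\log(\mathrm{d}\PP_{T'}/\mathrm{d}\PP_T)$ (Proposition~\ref{prop:change-of-measure}). For a two-arm Gaussian swap on arms $i,j$, $L$ equals $\Theta(\Delta^2)\,(N(i)+N(j))$ plus $\Delta$ times a sum of $N(i)+N(j)$ independent noise terms. A Chernoff bound on $N(i)+N(j)\le 4n/A$ plus Hoeffding on the noise give $\PP_{T'}(L>\gamma)\le 5A^{-2}$ with $\gamma=O(1)$, provided $\Delta\asymp\sqrt{A/(n\log A)}$. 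This yields $\kappa=e^{-\gamma}$ and $c=O(A^{-2})$. Your counting then gives $\EE_T[\pihat(T)]\gtrsim K/A$ and suboptimality $\gtrsim K/\sqrt{nA\log A}$. The $\sqrt{\log A}$ loss is the price of the tail bound.

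Two smaller points:
\begin{itemize}
\item No case split at $K=A/2$ is needed. The displayed bound holds for every $K$, and restricting to $2K>A$ arms is impossible anyway.
\item In the upper bound, each suboptimal arm is selected no more often than any optimal arm, so $\sum_{k\notin\cA^*}\PP(\hat a=k)\le K/A$. Using this aggregate bound instead of the per-arm $1/(A-K+1)$ gives $\tilde O(K/\sqrt{nA})$ uniformly in $K$. Otherwise your upper-bound sketch is the paper's thresholding argument.
\end{itemize}
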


We consider the following class of hard instances, which are composed of $(A-K)$ optima and parameterized as follows
\begin{align}
    \bmu \in \cV_K:= \bigg\{ \bmu \in \{\pm 1\}^A \bigg| \sum_{i=1}^A \ind(\bmu_i=-1) = K \bigg\}. \label{eq:overview-slow-hard-instance}
\end{align}
For any $\bmu \in \cV_K$, the corresponding instance is given by $([A], r_{\bmu}, \unif(\cA))$ where $r_{\bmu}(i) = \bmu_i \delta$, with $\delta > 0$ to be specified later. Since the KL-regularized objective approaches its un-regularized counterpart in the small regularization regime, we actually have
\begin{align}
    \EE_{\blambda}\big[\subopt_{\blambda}(\pihat)\big] \gtrsim \delta - \EE_{\blambda}\big[\delta \la \blambda, \pihat \ra\big], \label{eq:overview-slow-reduction}
\end{align}
as long as the right hand side of~\eqref{eq:overview-slow-reduction} is sufficiently large. In fact, the right hand side of~\eqref{eq:overview-slow-reduction} is exactly the sub-optimality gap of instance $\blambda$ under standard objective. Consequently, invoking Lemma~\ref{lem:multi-armed-lower-informal}
\footnote{The hard instances in~\eqref{eq:overview-slow-hard-instance} coincide with the hard instances in the proof of Lemma~\ref{lem:multi-armed-lower-informal}, enabling us to directly apply Lemma~\ref{lem:multi-armed-lower-informal} here.}
gives a lower bound of RHS of~\eqref{eq:overview-slow-reduction}. Specifically, for any $1 \leq K \leq A-1$ there exists an $\blambda \in \cV_{K}$ such that
\begin{align*}
    \EE_{\blambda}\big[\subopt_{\blambda}(\pihat)\big] \gtrsim \delta - \EE_{\blambda}\big[\delta \la \blambda, \pihat \ra\big] \gtrsim \sqrt{\frac{K^2}{An}}.
\end{align*}
On the other hand, recall that we have $C^{\pi^*}=A/(A-K)$ in this case. Therefore, rewriting the lower bound with the problem parameter $C^{\pi^*}$ results in
\begin{align*}
    \EE_{\blambda}\big[\subopt_{\blambda}(\pihat)\big] \gtrsim \sqrt{\frac{C^{\pi^*}}{n}\cdot\frac{K^2(A-K)}{A^2}}.
\end{align*}
Now we select $K$ to balance the tradeoff between $(A-K)/A$ and $K^2/A$. A direct calculation shows that $K^2(A-K)$ takes the maximum at $K=2A/3$, at which $C^{\pi^*}= \Theta(1)$, yielding the following lower bound:
\begin{align}
    \EE_{\blambda}\big[\subopt_{\blambda}(\pihat)\big] \gtrsim \sqrt{An^{-1}}. \label{eq:overview-slow-final}
\end{align}
Finally, scaling up the bound in~\eqref{eq:overview-slow-final} by a factor of $\sqrt{S\cdot C^{\pi^*}}$ via considering instances with $S$ contexts and unbalanced $\piref$ leads to the desired $\tilde{\Omega}(\sqrt{SAC^{\pi^*}n^{-1}})$ lower bound.

\section{Proof Overview of Lemma~\ref{lem:multi-armed-lower-informal}}\label{sec:overview-multi-armed-lower}

In this section, we present an overview of the proof of the lower bound for bandits with multiple optimal arms (Lemma~\ref{lem:multi-armed-lower-informal}). 
The steps to prove Lemma~\ref{lem:multi-armed-lower-informal} are summarized as follows:
\begin{enumerate}[leftmargin=*]
    \item We first construct a class of hard instances, in which we can find pairs of instances $(\bmu,\blambda)$ that are sufficiently close so that the log-likelihood ratio of the induced distributions $\PP$ and $\QQ$ is small in general.
    \item For each pair $(\bmu,\blambda)$, there exists some arm $i$ that is optimal in $\bmu$ but suboptimal in $\blambda$. We show that a policy that selects an optimal arm under $\bmu$ will lead to a large probability of picking one of the suboptimal arms in $\blambda$, as long as $\bmu$ and $\blambda$ are close enough.
    \item Finally, to obtain a minimax lower bound, we aggregate the results over all triples $(\bmu,\blambda,i)$ such that $i$ is optimal under $\bmu$ and suboptimal under $\blambda$ using an ``averaging hammer'' argument~\citep{shamir2015complexity,lattimore2020bandit}. Taking the worst-case instance then completes the proof.       
\end{enumerate}
For the sake of simplicity, we restrict our output policy $\pihat$ to be deterministic in this section and therefore write $\pihat=a$ if $\pihat(a)=1$. For the general proof, please refer to Appendix \ref{app:multi-armed-multiple}.

\paragraph{Hard Instances Construction.} We first introduce the class of bandit instances and related notation. The considered class of instances admits the following reward parameterization: 
\begin{align*}
\bmu \in \cV_K:= \bigg\{ \bmu \in \{\pm 1\}^A \bigg| \sum_{i=1}^A \ind(\bmu_i=-1) = K \bigg\},
\end{align*}
i.e., exactly $K$ entries of $\bmu$ are $-1$. For any $\bmu \in \cV_K$, the corresponding instance is given by $([A], r_{\bmu}, \unif(\cA))$ where $r_{\bmu}(i) = \bmu_i \delta$, with $\delta > 0$ to be specified later. We use  $a^*(\bmu) \coloneqq \{i: \bmu_i = 1\}$ to denote the set of optimal arms of $\bmu$. We write $\bmu \sim_{i,j} \blambda$ if $a^*(\bmu) \triangle a^*(\blambda) = \{i,j\}$ with $i \in a^*(\bmu)$ and $j \in a^*(\blambda)$. $\PP_{\bmu}$ is used for the distribution induced by $\bmu$ and $\cE_i = \{\pihat = i\}$.

For each pair of hard-to-distinguish instances that differ on only two arms, i.e., $\bmu \sim_{i,j} \blambda$, we know that the induced distributions $\PP_{\blambda}$ and $\PP_{\bmu}$ are close, and thus the probability of selecting $\pihat=i$ is also similar under the two instances. However, $\pihat=i$ incurs a cost of $\delta$, while it is optimal under $\bmu$. Specifically, we invoke the following change-of-measure proposition from \citet{degenne2019pure}.
\begin{proposition}[Proposition 18, \citealt{degenne2019pure}]\label{prop:change-of-measure}
Consider two distributions $\PP$ and $\QQ$, denoting the log-likelihood ratio with $L = \log(\ud\PP/\,\ud\QQ) $, then for any measurable event $\cE$ and threshold $\gamma \in \RR$,
\begin{align*}
    \PP(\cE) \leq e^{\gamma}\QQ(\cE) + \PP\{L > \gamma\}.
\end{align*}
\end{proposition}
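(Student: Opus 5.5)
This is the standard change-of-measure inequality. I would prove it by splitting the event $\cE$ according to whether the log-likelihood ratio is below or above the threshold $\gamma$. I assume $\PP \ll \QQ$ so that $L$ is well defined; otherwise I would work with densities $p,q$ with respect to a common dominating measure (e.g.\ $\PP+\QQ$) and set $L=+\infty$ where $q=0<p$, and the argument below goes through unchanged.

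\textbf{Step 1 (decomposition).} Write
\begin{align*}
\PP(\cE) = \PP(\cE \cap \{L \le \gamma\}) + \PP(\cE \cap \{L > \gamma\}).
\end{align*}
The second term is trivially at most $\PP\{L>\gamma\}$.

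\textbf{Step 2 (change of measure on the good part).} For the first term, express it as an integral under $\QQ$:
\begin{align*}
\PP(\cE \cap \{L \le \gamma\}) = \int \ind_{\cE \cap \{L\le\gamma\}} \, e^{L} \, \ud\QQ.
\end{align*}
On the set $\{L \le \gamma\}$ the density ratio satisfies $e^{L} \le e^{\gamma}$. Hence the integral is at most $e^{\gamma}\,\QQ(\cE \cap \{L\le\gamma\}) \le e^{\gamma}\,\QQ(\cE)$.

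In the non-dominated case, the part of $\PP$ singular to $\QQ$ sits on $\{L=+\infty\}$. That set is contained in $\{L>\gamma\}$, so it is absorbed by the second term. Combining the two bounds gives the claim.

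\textbf{Main obstacle.} There is essentially none beyond this measure-theoretic care about absolute continuity. The only point needing attention is the convention for $L$ on $\{q=0\}$, so that singular mass is charged to the tail event and not to the $\QQ$-term.
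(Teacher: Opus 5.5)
Your proof is correct. The paper gives no proof of its own and imports the result from Degenne and Koolen (2019); your argument (split $\cE$ along $\{L\le\gamma\}$ and $\{L>\gamma\}$, then bound $e^{L}\le e^{\gamma}$ under $\QQ$ on the first piece, with any mass singular to $\QQ$ placed on $\{L=+\infty\}\subseteq\{L>\gamma\}$) is the standard proof of this inequality.
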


\paragraph{Applying Proposition~\ref{prop:change-of-measure}.} Now we apply Proposition~\ref{prop:change-of-measure} by setting $\PP=\PP_{\bmu}$, $\QQ=\PP_{\blambda}$ and $\cE = \cE_i$, which gives the following inequality
\begin{align*}
    \PP_{\blambda}(\cE_i) \geq e^{-\gamma} \bigg[\PP_{\bmu}(\cE_i) - \PP_{\bmu}\bigg(\log \frac{\ud\PP_{\bmu}}{\ud\PP_{\blambda}}  > \gamma \bigg)\bigg].
\end{align*}
It remains to select the correct $\gamma$ to be an upper bound of the log-likelihood ratio with high probability so that we can safely get rid of the $\PP_{\bmu}\big[\ud \PP_{\bmu}/\ud \PP_{\blambda} > \gamma\big]$ term. Noticing that $\EE_{\bmu}\big[\log(\ud \PP_{\bmu}/\ud\PP_{\blambda})\big] = \kl{\PP_{\bmu}}{\PP_{\blambda}}$,
by standard concentration, we know that $\PP_{\bmu}\big[\log(\ud \PP_{\bmu}/\ud \PP_{\blambda}) > \gamma\big]$ will be small if $\gamma \gtrsim \kl{\PP_{\bmu}}{\PP_{\blambda}}$. Actually, a slightly more delicate selection of $\gamma$ enables $\PP_{\bmu}\big[\log(\ud \PP_{\bmu}/\ud \PP_{\blambda}) > \gamma\big]=o(A^{-1})$, leading to
\begin{align*}
    \PP_{\blambda}(\cE_i) \geq \exp\big(-\kl{\PP_{\bmu}}{\PP_{\blambda}}\big)\PP_{\bmu}(\cE_i) - o(A^{-1}).
\end{align*}
Moreover, to make $\PP_{\blambda}(\cE_i)$ sufficiently large, we set $\delta = \tilde{O}(\sqrt{An^{-1}})$ to make $\kl{\PP_{\bmu}}{\PP_{\blambda}} = O(1)$. Consequently, we see that the error probability is at least the same order as choosing correctly:
\begin{align}
    \PP_{\blambda}(\cE_i) \gtrsim \PP_{\bmu}(\cE_i) - o(A^{-1}) \label{eq:overview-mab-lower-pair-lower}
\end{align}

\begin{figure*}[t]
\centering   
\subfigure[]{\label{fig:lambda-to-mu}\includegraphics[width=0.42\textwidth]{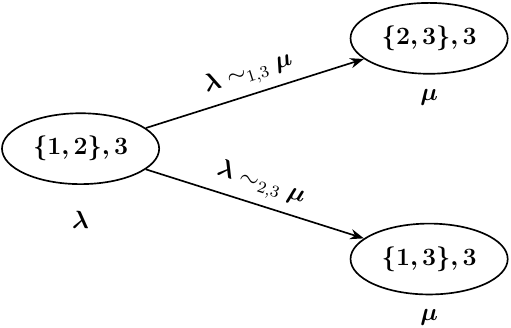}}\hspace{0.06\textwidth}
\subfigure[]{\label{fig:mu-to-lambda}\includegraphics[width=0.49\textwidth]{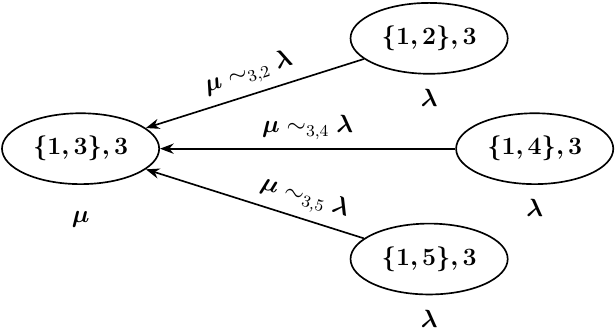}}
\caption{An illustration of the counting argument for the case $A=5$ and $K=3$. The set $\{g,h\}$ denotes the instance with its $g$-th and $h$-th arm being optimal. Each node, labeled by the tuple $(\{g,h\},k)$, corresponds to the term $\PP_{\{g,h\}}(\cE_k)$ in~\eqref{eq:overview-mab-lower-pair-lower} and each arrow indicates an instantiation of~\eqref{eq:overview-mab-lower-pair-lower}. Figure~\ref{fig:lambda-to-mu} demonstrates that each of $\PP_{\blambda}(\cE_i)$ s.t. $i \notin a^*(\blambda)$ can be bounded by $A-K$ possible $\PP_{\bmu}(\cE_i)$ s.t. $i \in a^*(\bmu)$, thus appears $A-K$ times. Likewise, Figure~\ref{fig:mu-to-lambda} shows that each $\PP_{\bmu}(\cE_i)$ on the RHS of~\eqref{eq:overview-mab-lower-pair-lower} appears $K$ times in total.}
\label{fig:counting}
\end{figure*}

\paragraph{Aggregating Everything Together.} To obtain the minimax lower bound from~\eqref{eq:overview-mab-lower-pair-lower}, we apply an ``averaging hammer''
over all possible pairs $\bmu \sim_{i,j} \blambda$, where the multiplicative factor before each term is given by the following counting.
\begin{itemize}[leftmargin=*]
    \item For any $\blambda$, fix some $i \notin a^*(\blambda)$. We can find $j\in a^*(\blambda)$ with $A-K$ choices, and swapping $\blambda_i$ and $\blambda_j$ determines a unique $\bmu$. Conversely, for any $\bmu$, fix $i \in a^*(\bmu)$. We have $K$ choices of $j \notin a^*(\bmu)$ that determine a unique $\blambda$. Therefore, the factor before $\PP_{\blambda}(\cE_i)$ is $A-K$ and $\PP_{\bmu}(\cE_i)$ is $K$. See Figure~\ref{fig:counting} for a visual illustration.
    
    \item Then we count the total number of pairs. We start by constructing a tuple $\bmu \sim_{i,j} \blambda$ with selecting $K$ entries to be $-1$ to form $\bmu$, giving $\binom{A}{K}$ choices. Then we select $i \in a^*(\bmu)$ from $A-K$ choices, and subsequently $j \notin a^*(\bmu)$, leading to another $K$ choices. Together, $\bmu, i, j$ determines $\blambda$. Applying the multiplication principle, we obtain the $K(A-K)\binom{A}{K}$ factor in front of the constant term.
\end{itemize}
Applying the counting leads to the following inequality:
\begin{align*}
    (A-K)\sum_{\blambda \in \cV_K} \sum_{i \notin a^*(\blambda)} \PP_{\blambda}(\cE_i) & \gtrsim K\sum_{\bmu \in \cV_K} \sum_{i \in a^*(\bmu)}\PP_{\bmu}(\cE_i) - K(A-K)\binom{A}{K}o\bigg(\frac{1}{A}\bigg) \\
    & \gtrsim K \sum_{\bmu \in \cV_K}\bigg(1 - \sum_{j \notin a^*(\bmu)}\PP_{\bmu}(\cE_j)\bigg) - K\binom{A}{K}o(1).
\end{align*}
Using a change of variable $(\bmu, j) \to (\blambda, i)$ on the right-hand side and moving it to the left gives
\begin{align*}
    \sum_{\blambda \in \cV_K} \sum_{i \notin a^*(\blambda)} \PP_{\blambda}(\cE_i) \gtrsim \frac{K}{A}\binom{A}{K}(1-o(1)) \gtrsim \frac{K}{A}\binom{A}{K}.
\end{align*}
Therefore, there exists a $\blambda \in \cV_K$ such that 
\begin{align*}
    \subopt_{\blambda}(\pihat) = \delta \sum_{i \notin a^*(\blambda)} \PP_{\blambda}(\cE_i) \gtrsim \sqrt{\frac{K^2}{An}},
\end{align*}
where we use $\delta = \tilde O(\sqrt{An^{-1}})$. This finishes the proof of Lemma \ref{lem:multi-armed-lower-informal}.

\section{Conclusion and Future Work}

In this paper, we study offline learning in KL-regularized MABs. We first identify two regimes characterized by different regularization intensity. We provide a sharp sample complexity of $\algcb$, as well as lower bounds for each regime. In both regimes, our lower bounds match the upper bounds up to logarithmic factors, thus providing a comprehensive understanding of the problem. Currently, our analysis is limited to the multi-armed setting. Extending our results and techniques to KL-regularized objectives with richer structures like linear or general function approximation remains interesting directions for future work.


\appendix


\section{Relation Between Coverage Notions}\label{app:coverage}

In this section, we provide a detailed discussion about the relation between the coverage notions $C^{\pi^*}$ and $D^2_{\pi^*}$ in~\citet{zhao2026towards} when specialized to our multi-armed setting. 
We first recall the definition of $D^2$-divergence~\citep[Definition 2.5]{zhao2026towards}:
\begin{align*}
    D^2_{\cR}((s,a);\pi) = \sup_{ g,h \in \cR} \frac{\big( g(s, a) - h(s, a)\big)^2}{\E_{(s',a') \sim \rho \times \pi}[( g(s', a') - h(s', a'))^2]},
\end{align*}
where $\cR$ is the function class. When specialized to multi-armed setting, $\cR$ corresponds to all possible bounded reward functions over $\cS \times \cA$. Moreover, the single-policy $\cD^2$-concentrability is defined as  $D_{\pi^*}^2 \coloneqq \E_{(s,a) \sim \rho \times \pi^*} D^2_{\cR}((s,a); \piref)$~\citep[Definition 2.7]{zhao2026towards}.

Specifically, we provide an example in which $D^2_{\pi^*} = \Omega(SAC^{\pi^*})$.

\begin{proposition}
For any $S \geq 1$, $A \geq 3$, $\eta > 0$, $2 < C \leq \exp(\eta)$, there exists an offline KL-regularized multi-armed bandit on which $C^{\pi^*} = C/2$ and $D^2_{\pi^*} = \Omega(SAC^{\pi^*})$.
\end{proposition}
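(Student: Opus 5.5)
The plan is to specialize the $D^2$-divergence to the tabular function class, where it has a closed form, and then build an instance in which $\pi^*/\piref$ equals its maximum value $C/2$ on all but one arm in every context.

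\textbf{Step 1: closed form for $D^2$.} Take $\cR$ to be all functions $\cS\times\cA\to[0,1]$, as in the multi-armed specialization above. Fix $(s,a)$ and write $f=g-h$, which ranges over functions with values in $[-1,1]$. Then
\begin{align*}
\E_{\rho\times\piref}[f^2]\ \ge\ \rho(s)\piref(a|s)f(s,a)^2 ,
\end{align*}
so the ratio defining $D^2_{\cR}((s,a);\piref)$ is at most $1/(\rho(s)\piref(a|s))$. Taking $g=\ind\{\cdot=(s,a)\}$ and $h\equiv 0$ attains this value. Hence $D^2_{\cR}((s,a);\piref)=1/(\rho(s)\piref(a|s))$, and therefore
\begin{align*}
D^2_{\pi^*}=\sum_{s,a}\rho(s)\pi^*(a|s)\cdot\frac{1}{\rho(s)\piref(a|s)}=\sum_{s\in\cS}\sum_{a\in\cA}\frac{\pi^*(a|s)}{\piref(a|s)} .
\end{align*}
This step is routine, but it is the one that genuinely uses the richness of $\cR$, so I will state it carefully.

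\textbf{Step 2: the construction.} Use $\rho=\unif(\cS)$ and the same bandit in every context. Pick a distinguished arm $a_0$ and let $B=\cA\setminus\{a_0\}$, so $|B|=A-1\ge 2$. Set $r(s,a)=1$ for $a\in B$ and $r(s,a_0)=0$. Let $\piref$ put total mass $p$ uniformly on $B$ and mass $1-p$ on $a_0$. By \eqref{eq:opt-exp}, with $Z=pe^{\eta}+1-p$:
\begin{itemize}[leftmargin=*]
\item for $a\in B$, $\pi^*(a|s)/\piref(a|s)=e^{\eta}/Z$;
\item for $a_0$, $\pi^*(a_0|s)/\piref(a_0|s)=1/Z$, which is smaller.
\end{itemize}
Setting $e^{\eta}/Z=C/2$ and solving gives
\begin{align*}
p=\frac{2e^{\eta}/C-1}{e^{\eta}-1}.
\end{align*}

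\textbf{Step 3: admissibility.} The hypotheses $2<C\le e^{\eta}$ give $2e^{\eta}/C-1\ge 1>0$, so $p>0$. They also give $2e^{\eta}/C-1<e^{\eta}-1$, so $p<1$. Hence $\piref$ has full support. The maximum ratio is then exactly $C^{\pi^*}=C/2$.

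\textbf{Step 4: conclusion.} By Step 1,
\begin{align*}
D^2_{\pi^*}\ \ge\ S\,(A-1)\,\frac{C}{2}\ =\ S(A-1)\,C^{\pi^*}=\Omega(SAC^{\pi^*}),
\end{align*}
since $A\ge 3$.

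The main obstacle is solving for $p$ in Step 2 so that $p\in(0,1)$. That is exactly where the constraint $2<C\le e^{\eta}$ enters, and where the factor $1/2$ in $C^{\pi^*}=C/2$ comes from.
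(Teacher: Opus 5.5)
Your proof is correct and follows essentially the same route as the paper: the tabular closed form $D^2_{\cR}((s,a);\piref)=1/(\rho(s)\piref(a|s))$ gives $D^2_{\pi^*}=\sum_{s,a}\pi^*(a|s)/\piref(a|s)$, and this is applied to an instance with many equally good arms of density ratio $C/2$ plus one heavily weighted worse arm. The only difference is which parameter you tune: you fix the rewards in $\{0,1\}$ and solve for the reference mass $p$, whereas the paper uses $A+1$ arms, fixes $\piref$ at $1/(AC)$ and $(C-1)/C$, and sets the heavy arm's reward to $1-\alpha$ with $\alpha=\eta^{-1}\log(C-1)$, so that $\pi^*$ puts mass $1/2$ on each group.
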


\begin{proof}
We consider the following offline KL-regularized multi-armed bandit $(\cS, \cA, r, \eta, \piref)$, where $\cS = [S]$, $\cA = [A+1]$. For all $s \in \cS$, the reference policy $\piref(\cdot | s)$ is given by:
\begin{align*}
    \piref(i|s) = \frac{1}{AC}, \forall\ i \leq A; \ \piref(A+1|s) = \frac{C-1}{C},
\end{align*}
where $2 < C \leq \exp(\eta)$. We further set the reward as
\begin{align*}
    r(s,i) = 1, \forall\ i \in [A]; \ r(s,A+1) = 1-\alpha,
\end{align*}
for all $s \in \cS$ and $\alpha = \eta^{-1}\log(C-1)$. Based on the setup, since $\pi^*(\cdot|s) \propto \piref(\cdot|s) \exp(\eta r(s,\cdot))$, a direct calculation yields that for any $s \in \cS$, the optimal policy $\pi^*(\cdot|s)$ is given by
\begin{align*}
    \pi^*(i|s) = \frac{1}{2A}, \forall\ i \in [A]; \ \pi^*(A+1|s) = \frac{1}{2}.
\end{align*}
Recall that $C^{\pi^*} = \sup_{s, a}\pi^*(a|s)/\piref(a|s)$. It is easy to see that the supremum is attained at any $i \in [A]$ and $s\in \cS$, which leads to
\begin{align*}
    C^{\pi^*} = \frac{\pi^*(1|s)}{\piref(1|s)} = \frac{C}{2}.
\end{align*}
Now we come to $D^2_{\pi^*}$. 
When specialized to multi-armed setting, $\cR$ corresponds to all possible reward functions. Therefore, for any $(s,a) \in \cS \times \cA$, the supremum is achieved at $g = 1$ at $(s,a)$ otherwise $0$ and $h = 0$, which gives
\begin{align*}
    D^2_{\cR}((s,a);\piref) = \frac{1}{\rho(s)\piref(a|s)}.
\end{align*}
Recall that $D_{\pi^*}^2 \coloneqq \E_{(s,a) \sim \rho \times \pi^*}
D^2_{\cR}((s,a); \piref)$,
we know that
\begin{align*}
    D_{\pi^*}^2 = \EE_{(s,a) \sim \rho \times \pi^*} \bigg[\frac{1}{\rho(s)\piref(a|s)} \bigg] \geq \sum_{s \in \cS} \sum_{a \in [A]} \frac{\pi^*(a|s)}{\piref(a|s)} = \frac{SAC}{2} = SAC^{\pi^*},
\end{align*}
which finishes the proof.
\end{proof}

\section{Offline Multi-armed Bandits with Multiple Optima}\label{app:multi-armed-multiple}

\subsection{Problem Setup}

In this section, we provide a formal setup of the problem of offline multi-armed bandits with multiple optima. Recall that the problem of interest is offline MABs, which is denoted by a tuple $(\cA, r, \piref)$. Here, $|\cA| = A$ is the action space and $r: \cS \times \cA \to [0, 1]$ is the reward function. In the offline setting, the agent only has access to a dataset $\cD = \{(a_i, r_i)\}_{i=1}^n$. Here $a_i \in \cA$ is the action taken independently from the behavior policy $\piref$, and $r_i$ is the observed reward given by $r_i = r(a_i) + \varepsilon_i$, where $\varepsilon_t$ is independent $1$-sub-Gaussian. The goal is to find a policy that maximizes the following objective
\begin{align*}
    J(\pi) \coloneqq \EE_{a \sim \pi}[r(a)].
\end{align*}
The optimal policy $\pi^*$ is defined to be the policy that maximizes the above objective. A policy $\pi$ is said to be $\epsilon$-optimal if $J(\pi^*) - J(\pi) \leq \epsilon$.

In this work, we focus on the case where we might have multiple arms on which the reward takes the maximum. Specifically, we use $\cA^*$ to denote the actions that achieve the maximal reward, i.e., $\cA^* = \{ a^*\in \cA: \ r(a^*) = \max_{a\in \cA}  r(a) \}$ and $\cA_{\mathrm{sub}} = \cA \setminus \cA^*$. We use $K$ to denote the cardinality of $\cA_{\mathrm{sub}}$ and consider the non-degenerate case $1 \leq K \leq A-1$. In this section, we restrict our behavior policy to $\piref = \unif(\cA)$.

\subsection{Hardness Results}

In this section, we provide the following hardness result for this problem and the proof is deferred to Appendix~\ref{app:proof-multi-armed-lower}.

\begin{theorem}[Formal Version of Lemma~\ref{lem:multi-armed-lower-informal}]
\label{thm:multi-armed-lower}
Given any $A > K > 0$ and any sufficiently large $n$, for any algorithm, there exists a multi-armed bandit with $|\cA| = A$, uniform $\piref$ and $K$ suboptimal arms on which the algorithm results in ${\Omega}\big(K/\sqrt{nA\log A}\big)$ suboptimality.
\end{theorem}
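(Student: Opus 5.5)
We follow the outline of Section~\ref{sec:overview-multi-armed-lower}, now allowing randomized outputs $\pihat$. Take the instance family $\cV_K$ with $r_{\bmu}(i)=1/2+\bmu_i\delta$ (shifted into $[0,1]$) and Gaussian noise $\varepsilon\sim N(0,1)$. For a possibly randomized algorithm, set $p_{\bmu}(i)\coloneqq \EE_{\bmu}[\pihat(i)]$. Then the expected suboptimality is
\begin{align*}
\EE_{\bmu}\big[J(\pi^*)-J(\pihat)\big]=2\delta\sum_{i\notin a^*(\bmu)}p_{\bmu}(i).
\end{align*}
Rather than events $\cE_i$, we use a randomized event: augment the data with an independent uniform $U\in[0,1]$ and let $\cE_i=\{U\le \pihat(i)\}$. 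Then $\PP_{\bmu}(\cE_i)=p_{\bmu}(i)$. The likelihood ratio does not depend on $U$, so Proposition~\ref{prop:change-of-measure} applies verbatim.

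\textbf{Step 1 (log-likelihood ratio tail).} Fix $\bmu\sim_{i,j}\blambda$; the two instances differ only on arms $i$ and $j$. Conditional on the arm counts, the log-likelihood ratio is
\begin{align*}
L=\sum_{t:a_t\in\{i,j\}}\Big(\pm 2\delta\,\varepsilon_t+2\delta^2\Big),
\end{align*}
which is Gaussian with mean $2\delta^2 N$ and variance $4\delta^2 N$, where $N=N(i)+N(j)\sim\mathsf{Bin}(n,2/A)$. A Chernoff bound gives $N\le 4n/A+O(\log A)$ with probability $1-A^{-2}$. The Gaussian tail then gives $L\le\gamma\coloneqq c_1\delta^2 n/A+c_2\delta\sqrt{(n/A)\log A}$ with probability $1-O(A^{-2})$. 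Choose $\delta=c\sqrt{A/(n\log A)}$ with $c$ small. This makes $\gamma=O(1)$ and $\PP_{\bmu}(L>\gamma)\le \tau$ with $\tau=o(1/A)$; the requirement that $n$ be sufficiently large ensures $\delta\le 1/4$, so rewards stay in $[0,1]$. Proposition~\ref{prop:change-of-measure} then yields, for every such triple,
\begin{align*}
p_{\blambda}(i)\ge e^{-\gamma}\big(p_{\bmu}(i)-\tau\big).
\end{align*}

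\textbf{Step 2 (pairing and counting).} Sum the Step~1 inequality over all triples $(\bmu,i,j)$ with $i\in a^*(\bmu)$ and $j\notin a^*(\bmu)$; each triple determines $\blambda$ by swapping coordinates $i$ and $j$. Each pair $(\blambda,i)$ with $i\notin a^*(\blambda)$ arises from exactly $A-K$ choices of $j$, and each pair $(\bmu,i)$ with $i\in a^*(\bmu)$ arises from exactly $K$ choices of $j$. Write $S\coloneqq\sum_{\bmu}\sum_{i\notin a^*(\bmu)}p_{\bmu}(i)$. Using $\sum_{i\in a^*(\bmu)}p_{\bmu}(i)=1-\sum_{i\notin a^*(\bmu)}p_{\bmu}(i)$, the summed inequality reads
\begin{align*}
(A-K)\,S\ \ge\ e^{-\gamma}\Big(K\big(\tbinom{A}{K}-S\big)-K(A-K)\tbinom{A}{K}\tau\Big).
\end{align*}
Rearranging and using $(A-K)\tau=o(1)$ gives
\begin{align*}
S\ \ge\ \frac{e^{-\gamma}K(1-o(1))}{A-K+e^{-\gamma}K}\tbinom{A}{K}\ \gtrsim\ \frac{K}{A}\tbinom{A}{K}.
\end{align*}

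\textbf{Step 3 (conclusion).} By averaging, some $\blambda\in\cV_K$ satisfies $\sum_{i\notin a^*(\blambda)}p_{\blambda}(i)\gtrsim K/A$. Its expected suboptimality is therefore
\begin{align*}
\gtrsim \delta\,\frac{K}{A}=\frac{K}{\sqrt{nA\log A}},
\end{align*}
which is the claimed bound.

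\textbf{Main obstacle.} The delicate step is Step~1. Making the tail probability $o(1/A)$ costs exactly the $\sqrt{\log A}$ factor in $\delta$. The tail $\tau$ must be uniform over all pairs, since it is multiplied by $K(A-K)\binom{A}{K}$ in the sum and has to lose against the main term of order $K\binom{A}{K}$; this is why we need $\tau=o(1/(A-K))$ and not just $o(1)$. For sub-Gaussian noise in general, we simply instantiate the hard instances with Gaussian noise, which suffices for a lower bound.
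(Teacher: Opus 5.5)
Your proposal is correct and follows essentially the same route as the paper. It uses the same family $\cV_K$ with Gaussian noise and Proposition~\ref{prop:change-of-measure} on swapped pairs $\bmu\sim_{i,j}\blambda$. The threshold $\gamma=O(1)$ and the tail $\tau=O(A^{-2})$ come, as in the paper, from concentration of $N(i)+N(j)$ plus a Gaussian tail. It also has the same $(A-K)$-versus-$K$ pairing-and-counting and rearrangement, and the same choice $\delta\asymp\sqrt{A/(n\log A)}$. The remaining differences are cosmetic: your shift to $1/2\pm\delta$ keeps rewards in $[0,1]$, your auxiliary $U$ plays the role of the paper's $\hat a\sim\pihat$, and you correctly record the per-arm gap as $2\delta$.
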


Previously, \citet{rashidinejad2021bridging} considered a class of hard instances with a single optimal arm, and used them to establish the near-optimal lower bound $\Omega(C^{\pi^*}/\epsilon^2)$, which is reduced to $\Omega(A/\epsilon^2)$ under uniform behavior policy. This result is recovered (up to logarithmic factors) as a special case of Theorem~\ref{thm:multi-armed-lower} by setting $K = A-1$. In contrast, Theorem~\ref{thm:multi-armed-lower} covers the entire range $1 \leq K \leq A-1$, and thus constitutes a nontrivial generalization of the previous result.



\subsection{Algorithm}

In this section, we show that the rate in Theorem~\ref{thm:multi-armed-lower} cannot be improved up to minor factors. Actually, this rate can be achieved through the simplest algorithm that first computes the empirical mean and then outputs the maximizer, as shown in Algorithm~\ref{algorithm:multi-armed}.

\begin{algorithm*}[ht]
\caption{Empirical Reward Maximization for Multi-armed Bandit with Multiple Optima}
\begin{algorithmic}[1]\label{algorithm:multi-armed}
	\REQUIRE Offline dataset $\cD = \{(a_i,r_i)\}_{i=1}^n$ sampled from $\piref=\unif(\cA)$
    \FOR{all $a \in \cA$}
    \STATE Count the number of visit $N(a) = \sum_{i=1}^n \ind \{a_i = a\}$
    \IF{$N(a)=0$}
    \STATE Set the empirical reward $\fls(a) \leftarrow 0$
    \ELSE
    \STATE Compute the empirical reward $\fls(a) \leftarrow  \sum_{i=1}^n r_i \ind \{a_i = a\}/N(a)$
    \ENDIF
    \ENDFOR
    \ENSURE $\hat a = \argmax_{a\in \cA} \bar r(a)$.
\end{algorithmic}
\end{algorithm*}

\begin{theorem}\label{thm:multi-armed-upper}
For sufficiently small $\epsilon$, $n=\tilde{O}(K^2A^{-1}\epsilon^{-2})$ samples are sufficient to guarantee that expected suboptimality gap of the output policy of Algorithm~\ref{algorithm:multi-armed} is less than $\epsilon$.
\end{theorem}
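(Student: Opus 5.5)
We need to show that Algorithm~\ref{algorithm:multi-armed} has expected suboptimality $\epsilon$ once $n = \tilde O(K^2/(A\epsilon^2))$. Let $r^\star=\max_a r(a)$ and $\Delta_a = r^\star - r(a)$ for $a\in\cA_{\mathrm{sub}}$. The expected suboptimality is $\EE[\Delta_{\hat a}] = \sum_{a\in\cA_{\mathrm{sub}}}\Delta_a\PP(\hat a = a)$.

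The heuristic is that with $A-K$ optimal arms, the best empirical optimal arm is inflated by an extreme-value effect. A suboptimal arm is chosen only if its empirical mean beats \emph{all} optimal arms. The union bound over the $K$ suboptimal arms then costs a factor $K$ rather than $\sqrt{K}$-type savings. Write $m = n/A$ for the typical count.

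\textbf{Step 1: counts.} On the event $\cE_2(\delta)$ of Lemma~\ref{lem:pessimisic} (with $S=1$), every arm satisfies $N(a)\ge m/(8\log(2A/\delta))$, so each arm has sample size $\tilde\Omega(m)$. Take $\delta = \epsilon/2$, so that the failure event contributes at most $\epsilon/2$ because rewards lie in $[0,1]$. Conditioned on the counts, the noise is sub-Gaussian with variance proxy $\tilde O(1/m)$ per arm. Set $\sigma = \sqrt{L/m}$ with $L$ a log factor.

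\textbf{Step 2: per-arm error probability.} Fix any optimal arm $a^\star$. The event $\{\hat a = a\}$ requires $\fls(a)\ge \fls(a^\star)$, and the difference $\fls(a)-\fls(a^\star)$ has mean $-\Delta_a$ and is sub-Gaussian with proxy $2\sigma^2$. Hence
\begin{align*}
\PP(\hat a = a)\le \exp\big(-\Delta_a^2/(4\sigma^2)\big).
\end{align*}
This gives $\Delta_a\PP(\hat a=a)\le \Delta_a e^{-\Delta_a^2/(4\sigma^2)} \lesssim \sigma$ uniformly in $\Delta_a$. Summing over the $K$ suboptimal arms yields $\EE[\Delta_{\hat a}]\lesssim K\sigma = K\sqrt{LA/n}$, up to the $\epsilon/2$ failure term.

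\textbf{Step 3: conclude.} Requiring $K\sqrt{LA/n}\le \epsilon/2$ gives $n = \tilde O(K^2A/\epsilon^2)$, which is off from the target by a factor $A^2$.

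\textbf{Step 4: fixing the normalization.} The discrepancy in Step~3 means the stated rate must be read in the paper's own normalization. In the lower-bound construction the gap is $\delta\sim\sqrt{A/n}$ and the suboptimality is $K\delta/A$, because $\PP_{\blambda}$ spreads over arms and the suboptimality scales as $\delta\cdot K/A$. So the matching upper bound should be $\EE[\Delta_{\hat a}]\lesssim (K/A)\sqrt{A/n}=K/\sqrt{nA}$. To obtain this I need $\PP(\hat a=a)\lesssim 1/A$ for a near-critical suboptimal arm, rather than the crude $e^{-\Delta^2/4\sigma^2}$. This is where the $A-K$ optimal arms enter, and I distinguish two regimes.

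\emph{If $A-K\gtrsim A$:} the maximum of the $A-K$ optimal empirical means exceeds $r^\star$ by about $\sigma\sqrt{2\log(A-K)}$ with high probability. A suboptimal arm must beat this maximum, and by exchangeability among arms of comparable gap it wins with probability $O(1/A)$ when $\Delta_a\lesssim\sigma$.

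\emph{If $A-K$ is small:} here $K\approx A$, so the claimed bound $K/\sqrt{nA}\approx\sqrt{A/n}$ matches the crude bound from Step~2 directly.

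\textbf{Main obstacle.} The key step is the symmetrization in Step~4 for general, unequal gaps. For a fixed realized threshold $M=\max_{a^\star\in\cA^*}\fls(a^\star)$, I would bound
\begin{align*}
\sum_{a\in\cA_{\mathrm{sub}}}\Delta_a\PP(\fls(a)\ge M).
\end{align*}
Each suboptimal arm has mean at most $r^\star$, so compared with the optimal arms, which are i.i.d. with mean $r^\star$, a suboptimal arm is stochastically dominated by an optimal arm. Since the argmax is uniform among exchangeable optimal arms, $\PP(\fls(a)\ge M)\le \frac{1}{A-K+1}$. Combining this with the Gaussian tail from Step~2 gives
\begin{align*}
\Delta_a\PP(\hat a=a)\lesssim \min\Big\{\Delta_a e^{-\Delta_a^2/(4\sigma^2)},\ \frac{\Delta_a}{A-K+1}\Big\},
\end{align*}
and summing over $a\in\cA_{\mathrm{sub}}$ gives $\tilde O(K\sigma/(A-K+1))$. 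I expect the rigorous handling of this dominance under unequal counts $N(a)$ to be the hardest part. I would handle it by conditioning on the counts and using the $\cE_2(\delta)$ bound to compare variances up to log factors.
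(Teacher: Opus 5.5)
\textbf{The gap: the regime where $A-K$ is not of order $A$.} There your argument loses up to a factor $A$. The bound $\PP(\hat a=a)\le 1/(A-K+1)$ is a \emph{per-arm} bound. Summed over the $K$ suboptimal arms it gives $\tilde O(K\sigma/(A-K+1))$, while the target $K/\sqrt{nA}$ is $\tilde O(K\sigma/A)$. At $K=A-1$ this is a loss of $\Theta(A)$, i.e.\ $n=\tilde O(A^3/\epsilon^2)$ instead of $\tilde O(A/\epsilon^2)$.

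Your fallback for this regime is incorrect. Step~2 gives $K\sigma\approx A\sqrt{A/n}$, not $\sqrt{A/n}$. It is exactly the bound you showed in Step~3 to be off by $A^2$ in sample size, and that computation holds for every $K$.

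\textbf{The missing idea: bound the \emph{total} mass on suboptimal arms.} The paper gets this from pairwise dominance. For any suboptimal $i$ and optimal $j$, swap the data of arms $i$ and $j$; this is measure-preserving under uniform $\piref$ and identically distributed noise. It shows $\PP(\hat a=i)\le\PP(\hat a=j)$. Hence the average of $\PP(\hat a=\cdot)$ over $\cA_{\mathrm{sub}}$ is at most its average over $\cA^*$. Since all these probabilities sum to one, $\sum_{a\in\cA_{\mathrm{sub}}}\PP(\hat a=a)\le K/A$.

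Now split at the gap threshold $\tau=2\sigma\sqrt{\log A}$:
\begin{itemize}
\item arms with $\Delta_a\le\tau$ contribute at most $\tau K/A$ in total;
\item arms with $\Delta_a>\tau$ contribute at most $K\tau e^{-\tau^2/(4\sigma^2)}=\tau K/A$, by your own Step~2 tail bound, because $\Delta e^{-\Delta^2/(4\sigma^2)}$ is decreasing for $\Delta\ge\sqrt2\,\sigma$.
\end{itemize}
This yields $\EE[\Delta_{\hat a}]=\tilde O(K\sigma/A)=\tilde O(K/\sqrt{nA})$ for all $K$ at once. It needs no case split and no extreme-value heuristics. For $K\ge A/2$, even the trivial $\sum_a\PP(\hat a=a)\le 1$ with the same threshold would suffice. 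What does not work is summing per-arm bounds.

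\textbf{Your plan for the ``main obstacle'' also fails as stated.} Conditional on the counts, a suboptimal arm need not be dominated by an optimal one: with fewer samples, its empirical mean has a heavier upper tail. Comparing variances only up to the logarithmic factor $L$ that $\cE_2(\delta)$ provides is far too lossy. A Gaussian with $L$ times larger variance exceeds the maximum of $A-K$ others with probability of order $(A-K)^{-1/L}$, not $1/(A-K+1)$. The dominance must be proved \emph{unconditionally}, using that the pair (count, noise sequence) is exchangeable across arms. Condition on the counts only for the sub-Gaussian tail bound.

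\textbf{A minor point.} Step~4 needs no change of normalization: $\EE[\Delta_{\hat a}]\lesssim K/\sqrt{nA}$ is literally the statement $n\gtrsim K^2/(A\epsilon^2)$.
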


\begin{remark}
It is well known that the optimal sample complexity of offline contextual MABs scales as $\tilde{O}(SC^{\pi^*}/\epsilon^2)$~\citep{rashidinejad2021bridging,ren2021nearly,li2024settling}. When specialized to our setup, $S=1$ and $C^{\pi^*}=A$, leading to the $O(A/\epsilon^2)$ sample complexity. In comparison, Theorem~\ref{thm:multi-armed-upper} establishes an $\tilde{O}(K^2A^{-1}\epsilon^{-2})$ rate, which can be much smaller than $O(A/\epsilon^2)$ when $K$ is small. This comparison highlights that the existence of many optimal arms substantially simplifies offline bandit learning.
\end{remark}

\begin{remark}
Theorem~\ref{thm:multi-armed-lower} indicates an $\tilde{\Omega}\big(K^2A^{-1}\epsilon^{-2}\big)$ sample complexity lower bound to achieve $\epsilon$-optimal. On the other side, Theorem~\ref{thm:multi-armed-upper} provides a simple algorithm attaining a sample complexity of $\tilde{O}(K^2A^{-1}\epsilon^{-2})$, confirming the lower bound in Theorem~\ref{thm:multi-armed-lower} is tight up to logarithmic factors.
\end{remark}

We provide an intuition for understanding the rate given by Theorem~\ref{thm:multi-armed-upper} and defer the full proof to~Appendix~\ref{app:proof-multi-armed-upper}. When there are only $K$ suboptimal arms, the sum of probabilities of each bad arm being selected by Algorithm~\ref{algorithm:multi-armed} is no greater than $K/A$. Consequently, even if each bad arm incurs a suboptimality gap as large as $A\epsilon/K$, their aggregate contribution to the expected suboptimality remains on the order of $\epsilon$.

This observation implies that we only need to reliably distinguish those arms with a suboptimality gap larger than $A\epsilon/K$. To identify such an arm, following the standard $\Delta^{-2}$ sample complexity, we need $\tilde{O}\big((A\epsilon/K)^{-2}\big)=\tilde{O}(K^2A^{-2}\epsilon^{-2})$ samples from that arm. Since the samples are drawn uniformly over all arms, each arm receives roughly an $1/A$ fraction of the total samples. Therefore, a total number of $\tilde{O}(K^2A^{-1}\epsilon^{-2})$ samples will be sufficient for $\epsilon$-optimal.

\subsection{Additional Related Work on Bandits with Multiple Optima}

Previous works have studied MABs with multiple optimal arms in both regret minimization and pure exploration settings. In the pure exploration setting, the asymptotically optimal rate for finite-armed bandits was established by~\citet{degenne2019pure} for fixed confidence setting. In the regret minimization setting, if the algorithm is restricted to be uniformly fast convergent~\citep[Definition 1]{garivier2019explore}, that is, to perform well across all problem instances, then the regret is given by $\sum_{a\in \cA \setminus \cA^*} \Delta_a^{-1} \log T$ asymptotically as $T \to \infty$, which roughly grows linearly with the number of suboptimal arms $K$~\citep{lai1985asymptotically,burnetas1996optimal}. This uniform convergence requirement is relaxed in a line of work on bandits with a large or even infinite number of arms ($A \gg T$ or $A = \infty$)~\citep{chaudhuri2017pac,chaudhuri2018quantile,aziz2018pure,de2021bandits,zhu2020regret}, where the dependence on $K$ would otherwise render the bounds vacuous as $A, K \to \infty$. When specialized back to finite-armed setting, the asymptotic regret obtained becomes $A\log T \log(\Delta^{-1}) ((A-K)\Delta)^{-1}$~\citep{de2021bandits}, which is tight up to the $\log(\Delta^{-1})$ factor. From a minimax perspective, it is possible to achieve an $O(\sqrt{AT/(A-K)})$ regret~\citep{zhu2020regret}, which is optimal when $K \geq A/2$, as implied by the two-point argument in \citet[Theorem 15.2]{lattimore2020bandit}, as remarked in~\citet{zhu2020regret}. 
Despite these advances, the sample complexity of this problem in the pure offline setting remains largely underexplored. Furthermore, in both online and offline regimes, the minimax optimal rate for the regime $K < A/2$, where the Le Cam two-point method becomes ineffective, is still not well understood.

\section{Proof of Theorems in Section~\ref{sec:algorithm}}

\subsection{Proof of Lemma~\ref{lem:pessimisic}}

\begin{proof}[Proof of Lemma~\ref{lem:pessimisic}]
The proof is standard in previous literature (e.g., Lemma B.1 in \citet{xie2021policy}) and we provide the detailed proof here for completeness. We first prove that $\cE_{1}$ holds with high probability. Fixing a pair of $(s,a)$, the events hold trivially when $N(s,a) = 0$. When $N(s,a) \geq 1$, conditioning on $N(s,a)$, applying Azuma-Hoeffding's inequality (Lemma~\ref{lem:azuma-hoeffding}, we know that with probability at least $1-\delta/2SA$, we have
\begin{align*}
   \fgt(s,a) -  \frac{1}{N(s,a)} \sum_{i=1}^n r_i \ind \{(s_i, a_i) = (s,a)\} \leq \sqrt{\frac{2\log(2SA/\delta)}{N(s,a)}} \leq b(s,a).
\end{align*}
Taking union bound over all $(s,a) \in \cS \times \cA$, we obtain that $\cE_1$ holds with probability at least $1 - \delta/2$. For the second event, we directly invoke Lemma~\ref{lem:binary-concentration} and obtain that for any fixed $(s,a)$ with probability at least $1 - \delta/2SA$, 
\begin{align*}
    \frac{1}{N \vee 1} \leq \frac{8\log(2SA/\delta)}{n \rho(s) \piref(a|s)}.
\end{align*}
Taking union bound over all $(s,a) \in \cS \times \cA$ gives that $\cE_2$ holds with probability at least $1 - \delta/2$. Taking union bound once more over $\cE_1$ and $\cE_2$ finishes the proof.
\end{proof}

\subsection{Proof of Theorem~\ref{thm:upperbound-bandit}}

In this section, we present the proof of Theorem~\ref{thm:upperbound-bandit}. First, we need the following regret decomposition lemma for regularized objective.

\begin{lemma}[Lemma 2.16, \citealt{zhao2026towards}]\label{lem:taylor-expansion} Let $r: \cS \times \cA \to \RR$ be any reward function, and $\pi_r(\cdot|s) \propto \piref(\cdot |s) \exp(\eta r(s,\cdot))$ be the optimal policy under $r$, then there exists some $\gamma \in [0,1]$ such that if we denote $r_\gamma = \gamma r + (1-\gamma) r^*$ and $\pi_{\gamma}(\cdot|s) \propto \piref(\cdot |s) \exp(\eta r_{\gamma}(s,\cdot))$, then
\begin{align*}
    J(\pi^*) - J(\pi_r) \leq \eta \EE_{(s,a) \sim \rho \times \pi_{\gamma}}\big[\big(r^*(s,a) - r(s,a)\big)^2\big].
\end{align*}
\end{lemma}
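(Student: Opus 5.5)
The plan is to interpolate along the segment of reward functions from $r^*$ to $r$ and use the fundamental theorem of calculus. For $\gamma\in[0,1]$, let $r_\gamma=\gamma r+(1-\gamma)r^*$ and $\pi_\gamma(\cdot|s)\propto\piref(\cdot|s)\exp(\eta r_\gamma(s,\cdot))$. Define $f(\gamma)\coloneqq J(\pi_\gamma)$. Then $\pi_0=\pi^*$ and $\pi_1=\pi_r$, so the target is $f(0)-f(1)=-\int_0^1 f'(\gamma)\,\ud\gamma$.

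\textbf{Step 1: a Gibbs representation of $J$.} For any Gibbs policy $\pi_g\propto\piref\exp(\eta g)$ with normalizer $Z_g(s)=\sum_a\piref(a|s)e^{\eta g(s,a)}$, we have $\log(\pi_g/\piref)=\eta g-\log Z_g$. Hence
\begin{align*}
J(\pi_g)=\EE_{s\sim\rho}\Big[\eta^{-1}\log Z_g(s)+\EE_{a\sim\pi_g(\cdot|s)}\big[r^*(s,a)-g(s,a)\big]\Big].
\end{align*}

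\textbf{Step 2: differentiate along the path.} Set $d=r-r^*$, so that $g=r_\gamma$ and $r^*-r_\gamma=-\gamma d$. We use two standard exponential-family facts, both taken at fixed $s$:
\begin{align*}
\tfrac{\ud}{\ud\gamma}\,\eta^{-1}\log Z_\gamma(s)=\EE_{\pi_\gamma}[d],\qquad
\tfrac{\ud}{\ud\gamma}\,\EE_{\pi_\gamma}[h_\gamma]=\EE_{\pi_\gamma}[\partial_\gamma h_\gamma]+\eta\,\mathrm{Cov}_{\pi_\gamma}(h_\gamma,d).
\end{align*}
Apply the second fact with $h_\gamma=-\gamma d$. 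Its first term gives $-\EE_{\pi_\gamma}[d]$, which cancels the log-partition derivative. Its covariance term is $\eta\,\mathrm{Cov}_{\pi_\gamma}(-\gamma d,d)=-\eta\gamma\,\mathrm{Var}_{\pi_\gamma(\cdot|s)}(d)$. Therefore
\begin{align*}
f'(\gamma)=-\eta\gamma\,V(\gamma),\qquad V(\gamma)\coloneqq\EE_{s\sim\rho}\big[\mathrm{Var}_{a\sim\pi_\gamma(\cdot|s)}\big(d(s,a)\big)\big]\ge 0.
\end{align*}

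\textbf{Step 3: extract a single $\gamma$.} Since $\cS$ and $\cA$ are finite, $V$ is continuous in $\gamma$. By the mean value theorem for integrals with the nonnegative weight $\gamma$, there is a $\tilde\gamma\in[0,1]$ with
\begin{align*}
f(0)-f(1)=\eta\int_0^1\gamma V(\gamma)\,\ud\gamma=\tfrac{\eta}{2}V(\tilde\gamma).
\end{align*}
Bounding the variance by the second moment then gives
\begin{align*}
J(\pi^*)-J(\pi_r)\le\tfrac{\eta}{2}\,\EE_{\rho\times\pi_{\tilde\gamma}}\big[(r^*-r)^2\big],
\end{align*}
which is even stronger than the claim by a factor $1/2$.

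The main obstacle is Step 2: computing the derivative of $\EE_{\pi_\gamma}[r^*-r_\gamma]$ correctly and checking that the first-order terms cancel, leaving a single nonpositive variance term. A further point in Step 3 is that $\tilde\gamma$ must be common to all contexts. This is handled by keeping the $\rho$-expectation inside $V$ before applying the mean value theorem, rather than applying it context by context.
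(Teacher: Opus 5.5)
Your proof is correct. The derivative $\frac{\ud}{\ud\gamma}J(\pi_\gamma)=-\eta\gamma\,\EE_{s\sim\rho}\big[\mathrm{Var}_{\pi_\gamma(\cdot|s)}(r-r^*)\big]$ checks out, and applying the mean value step to the $\rho$-averaged quantity is exactly what makes $\gamma$ independent of the context.

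The paper imports this lemma from \citet{zhao2026towards} without reproducing a proof. Your argument is the standard Taylor/mean-value expansion along $r_\gamma$ that underlies it. The plain mean value theorem would give $J(\pi^*)-J(\pi_r)=\eta\gamma V(\gamma)\le\eta V(\gamma)$, which is the stated constant $\eta$; your weighted integral version only sharpens this to $\eta/2$.
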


Moreover, Lemma 2.15 in~\citet{zhao2026towards} shows that when $r \leq r^*$, we can take $\gamma=0$. For simplicity, we combine the original Lemma 2.15 in~\citet{zhao2026towards} with Lemma~\ref{lem:taylor-expansion} and state the following Lemma.

\begin{lemma}\label{lem:taylor-expansion-pessimism}
Let $r: \cS \times \cA \to \RR$ be any reward function satisfying $r(s,a) \leq r^*(s,a)$ for any $(s,a) \in \cS \times \cA$, and let $\pi_r(\cdot|s) \propto \piref(\cdot |s) \exp(\eta r(s,\cdot))$ be the optimal policy under $r$, then
\begin{align*}
J(\pi^*) - J(\pi_r) \leq \eta \EE_{(s,a) \sim \rho \times \pistar}\big[\big(r^*(s,a) - r(s,a)\big)^2\big].
\end{align*}
\end{lemma}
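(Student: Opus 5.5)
The plan is to work context by context and reduce the claim to an exact formula for the suboptimality in terms of a KL divergence between Gibbs policies. This gives a self-contained argument; alternatively one could invoke Lemma~\ref{lem:taylor-expansion} and show that $\gamma=0$ is admissible when $r\le r^*$. Fix $s$ and write $\Delta(a)\coloneqq r^*(s,a)-r(s,a)\ge 0$. Since $\pi^*(\cdot|s)\propto\piref(\cdot|s)e^{\eta r^*(s,\cdot)}$ with normalizer $Z^*(s)$, for any policy $\pi$ we have
\begin{align*}
\EE_{a\sim\pi}\big[r^*(s,a)\big]-\eta^{-1}\kl{\pi(\cdot|s)}{\piref(\cdot|s)}=\eta^{-1}\log Z^*(s)-\eta^{-1}\kl{\pi(\cdot|s)}{\pi^*(\cdot|s)}.
\end{align*}
Averaging over $s\sim\rho$ gives $J(\pi^*)-J(\pi_r)=\eta^{-1}\EE_{s\sim\rho}\big[\kl{\pi_r(\cdot|s)}{\pi^*(\cdot|s)}\big]$. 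It therefore suffices to show, for each $s$, that $\kl{\pi_r(\cdot|s)}{\pi^*(\cdot|s)}\le\eta^2\EE_{a\sim\pi^*(\cdot|s)}[\Delta(a)^2]$.

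Next I express $\pi_r$ as a tilt of $\pi^*$. We have $\pi_r=\pi^* w$ with $w=e^{-\eta\Delta}/Z$ and $Z\coloneqq\EE_{\pi^*}[e^{-\eta\Delta}]$. Hence
\begin{align*}
\kl{\pi_r}{\pi^*}=\EE_{\pi_r}[\log w]=-\eta\,\EE_{\pi_r}[\Delta]-\log\EE_{\pi^*}\big[e^{-\eta\Delta}\big].
\end{align*}
Apply Jensen's inequality, which is the Donsker--Varadhan bound with $q=\pi^*$: $-\log\EE_{\pi^*}[e^{-\eta\Delta}]\le\eta\,\EE_{\pi^*}[\Delta]$. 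This yields
\begin{align*}
\kl{\pi_r}{\pi^*}\le\eta\big(\EE_{\pi^*}[\Delta]-\EE_{\pi_r}[\Delta]\big)=\eta\,\EE_{\pi^*}\big[\Delta(1-w)\big].
\end{align*}

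The key step, and the only place where pessimism $r\le r^*$ enters, is bounding $1-w$ pointwise. Because $\Delta\ge0$, we have $Z\le 1$. Hence $w\ge e^{-\eta\Delta}$, and so $1-w\le 1-e^{-\eta\Delta}\le\eta\Delta$. Since $\Delta\ge0$, multiplying by $\Delta$ preserves the inequality, even on arms where $1-w<0$, because there the term is simply nonpositive. This gives $\EE_{\pi^*}[\Delta(1-w)]\le\eta\,\EE_{\pi^*}[\Delta^2]$. Consequently $\kl{\pi_r}{\pi^*}\le\eta^2\EE_{\pi^*}[\Delta^2]$. Dividing by $\eta$ and averaging over $s\sim\rho$ yields the stated bound.

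I expect the main obstacle to be obtaining the expectation under $\pi^*$ rather than under $\pi_r$ or an intermediate $\pi_\gamma$, as in Lemma~\ref{lem:taylor-expansion}. A second-order Taylor or variance argument along the path naturally produces $\EE_{\pi_t}[\Delta^2]$, and transferring this to $\pi^*$ requires controlling a density ratio. The route above avoids that: the one-sided bound $Z\le 1$, which holds exactly because $r\le r^*$, makes the tilt weight $w$ dominate $e^{-\eta\Delta}$. So the thing to double-check is that every inequality is used in the correct direction given $\Delta\ge 0$.
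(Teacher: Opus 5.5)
Your proof is correct, and it takes a genuinely different route from the paper. The paper gives no self-contained argument for this lemma. It invokes the mean-value/Taylor expansion of Lemma~\ref{lem:taylor-expansion} (Lemma 2.16 of \citet{zhao2026towards}), which bounds the gap by $\eta\EE_{\rho\times\pi_\gamma}[(r^*-r)^2]$ at an unspecified intermediate Gibbs policy $\pi_\gamma$. It then cites Lemma 2.15 of \citet{zhao2026towards} to assert that pessimism allows $\gamma=0$.

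You instead combine three ingredients:
\begin{itemize}
\item the exact identity $J(\pi^*)-J(\pi_r)=\eta^{-1}\EE_{s\sim\rho}[\kl{\pi_r(\cdot|s)}{\pi^*(\cdot|s)}]$;
\item the Jensen (Donsker--Varadhan) bound $-\log Z\le\eta\EE_{\pi^*}[\Delta]$ on the log-partition ratio;
\item the pointwise bound $w\ge e^{-\eta\Delta}$, which follows from $Z\le 1$.
\end{itemize}
Each step checks out, including the sign issue on arms where $w>1$.

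The cited route is a single general expansion that also applies to non-pessimistic $r$ (at the cost of the intermediate $\pi_\gamma$), and it is the template used under function approximation. Your route is elementary, needs no calculus along the path $r_\gamma$, and uses pessimism exactly once and transparently.

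Your route also gives more. Stopping one step early, you have $J(\pi^*)-J(\pi_r)\le\EE_{\rho\times\pi^*}[\Delta(1-e^{-\eta\Delta})]\le\EE_{\rho\times\pi^*}[\min\{\Delta,\eta\Delta^2\}]$. This single inequality yields both the fast-rate bound of this lemma and the slow-rate bound $J(\pi^*)-J(\hat\pi)\le\EE_{\rho\times\pi^*}[r^*-\hat r]$. The proof of Theorem~\ref{thm:upperbound-bandit} invokes that slow-rate bound separately by a ``standard argument''.
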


Now we are ready to prove Theorem~\ref{thm:upperbound-bandit}.

\begin{proof}[Proof of Theorem~\ref{thm:upperbound-bandit}]
We derive the proof conditioning on event $\cE_1(\delta) \cap \cE_2(\delta)$. $\cE_1(\delta)$ ensures that for all $s,a$, $\fps(s,a) \leq \fgt(s,a)$. We first prove the fast-rate bound, where the proof is almost identical to the proof in~\citet{zhao2026towards} and we provide the full proof for completeness. First, by Lemma~\ref{lem:taylor-expansion-pessimism}, we have the following regret decomposition
\begin{align*}
    J(\pi^*) -  J(\hat \pi) \leq \eta \EE_{(s,a) \sim \rho \times \pi^*}\big[\big(\fps(s,a) - \fgt(s,a)\big)^2\big] \leq 4\eta\EE_{(s,a) \sim \rho \times \pi^*} [b^2(s,a)],
\end{align*}
where the second inequality holds due to $|\fps(s,a) - \fgt(s,a)| \leq 2b(s,a)$ on $\cE_1$. Plugging in the exact construction of $b(s,a)$, we know that 
\begin{align*}
    J(\pi^*) -  J(\hat \pi) & \leq 4 \eta \sum_{s \in \cS} \rho(s) \sum_{a \in \cA} \pi^*(a|s)\frac{4\log(2|\cS||\cA|/\delta)}{N(s,a)} \\
    & \leq 4 \eta \sum_{s \in \cS} \rho(s) \sum_{a \in \cA} \pi^*(a|s)\frac{32\log^2(2|\cS||\cA|/\delta)}{n\rho(s)\piref(a|s)} \\
    & \leq 128 \eta C^{\pi^*} \log^2(2|\cS||\cA|/\delta) \sum_{(s,a) \in \cS \times \cA}\rho(s)\piref(a|s) \frac{1}{\rho(s)\piref(a|s)} \\
    & = \tilde{O}(\eta C^{\pi^*} SAn^{-1}),
\end{align*}
where the second inequality holds due to event $\cE_2$ and the last inequality holds due to the definition $\sup_{s,a} \pi^*(a|s) / \piref(a|s) = C^{\pi^*}$. 

We then prove the slow-rate bound, which follows the standard argument in the analysis. In particular, we have
\begin{align*}
    J(\pi^*) - J(\hat \pi) & \leq \EE_{s, a\sim \rho \times \pi^*} [\fgt(s,a) - \fps(s,a)] \\
    & \leq 4 \sum_{s \in \cS} \rho(s) \sum_{a \in \cA} \pi^*(a|s) \sqrt{\frac{\log(2|\cS||\cA|/\delta)}{N(s,a)}} \\
    & \leq 16 \sum_{s \in \cS} \rho(s) \sum_{a \in \cA} \pi^*(a|s)\frac{\log(2|\cS||\cA|/\delta)}{\sqrt{n\rho(s)\piref(a|s)}} \\
    & = 16 \log(2|\cS||\cA|/\delta) \sum_{s \in \cS} \rho(s) \sum_{a \in \cA} \pi^*(a|s)\frac{1}{\sqrt{n\rho(s)\pi^*(a|s)}} \sqrt{\frac{\pi^*(a|s)}{\piref(a|s)}} \\
    & \leq 16 \log(2|\cS||\cA|/\delta)\sqrt{C^{\pi^*}n^{-1}} \sum_{s \in \cS} \rho(s) \sum_{a \in \cA}\pi^*(a|s)\frac{1}{\sqrt{\rho(s)\pi^*(a|s)}} \\
    & = \tilde{O}\big(\sqrt{SAC^{\pi^*}n^{-1}}\big),
\end{align*}
where the second inequality holds due to event $\cE_1(\delta)$, the third holds due to event $\cE_2(\delta)$, and the last holds due to Cauchy's inequality. By Lemma~\ref{lem:pessimisic}, we know that $\cE_1(\delta) \cap \cE_2(\delta)$ holds with probability at least $1-\delta$, which finishes the proof.
\end{proof}

\section{Proof of Theorems in Section~\ref{sec:lower-bound}}

\subsection{Proof of Theorem~\ref{thm:lowerbound-fast}}

\begin{proof}[Proof of Theorem~\ref{thm:lowerbound-fast}] We first fix the context size $S \geq 1$, action set size $A+1 \geq 3$, regularization parameter $\eta > 4\log 2$, upper bound of coverage coefficient $C^* \in (2, \exp(\eta/4)]$ and any $n \geq \eta^2SC^* A(1024\log A)^{-1}$, we consider the family of contextual bandits with $S \coloneqq |\cS|, A \coloneqq |\cA| - 1 < \infty$ and reward function in some function class $\cV$ composed of function $\cS \times \cA \to [0, 1]$ as follows.
\begin{align}
    \mathrm{CB}_{\cV} \coloneqq \{(\cS, \cA, \rho, r, \piref, \eta): r \in \cV, \rho \in \Delta(\cS), \piref \in \Delta(\cA|\cS) \}. \label{eq:contextual-bandit-class} 
\end{align}
We set $\cS = [S]$, $\cA = [A+1]$, $\rho = \mathsf{Unif}(\cS)$, and the reference policy to be
\begin{align*}
  \forall s \in \cS, \piref(i|s) = (CA)^{-1} \ \forall i  \in [A], \piref(A+1|s) = 1-C^{-1},
\end{align*}
where $C \ge 1$ is a parameter to be specified later. Now we construct our reward functions. We first leverage \Cref{lem:max-signals} with $\Sigma = \{-1, +1\}$ and obtain a set $\cU \subset \{-1,+1\}^A$ such that (1) $|\cU| \geq \exp(A/8)$ and (2) for any $\bmu, \bmu' \in \cU, \bmu \neq \bmu'$, one has $\|\bmu - \bmu'\|_1 \geq A/2$. We once again apply the general version of \Cref{lem:max-signals}. By setting $\Sigma = \cU$, we can obtain a subset $\cV \subset \cU^{S}$ such that 
\begin{enumerate}
    \item $\log_{|\cU|}|\cV| \geq S/8$, which gives $\log |\cV| \geq \log_{|\cU|}|\cV| \log |\cU| \geq SA/64$
    \item For any $\bnu, \bnu' \in \cV, \bnu \neq \bnu'$, one has $d_H(\bnu - \bnu') \geq S/2$
\end{enumerate}

Now we construct our Bernoulli reward function whose means are given as follows.
\begin{align*}
    \cG = \{ r_{\bnu}(s, i) = 1/2 + \bnu_{s,i}\delta \ \forall i \in [A], r_{\bnu}(s, A+1) = 1/2 - \alpha, \ \forall s \in \cS  | \bnu \in \cV\},
\end{align*}
where $\bnu_{s,i}$ is the $i$-th coordinate of the $s$-th entry of $\bnu$. 
In other words, for any two reward function $r_1$ and $r_2$ in $\cG$, there exist $\Omega(S)$ contexts such that for these contexts, there exist $\Omega(A)$ arms such that the two rewards on these arms are different.

Now we consider two different reward functions $r_1, r_2 \in \cG$ for a certain shared context $s$ such that in this context the two reward functions differ on at least $A/2$ arms. In the following, we drop the scripts for context $s$ when there is no ambiguity. Without loss of generality, we assume that $r_1$ and $r_2$ are distributed as follows.
\begin{align*}
    & r_1(i) = 1/2 + \delta, r_2(i) = 1/2 - \delta, \ \forall i \in [1,l]; \\
    & r_1(i) = 1/2 - \delta, r_2(i) = 1/2 + \delta, \ \forall i \in [l+1,m]; \\
    & r_1(i) = r_2(i) = 1/2 + r^*(i), r^*(i) \in \{\pm \delta\} , \ \forall i \in [m+1, A],
\end{align*}
where $0 \leq l \leq m$ and $m \geq A/2$ are some integers. Let  $\pi^*_1$ and $\pi^*_2$ be the optimal policy regarding $r_1$ and $r_2$, we have
\begin{align*}
    \pi^*_1(i) = \frac{\piref(i)\exp(\eta r_1(i))}{\sum_{j=1}^{A+1}\piref(j)\exp(\eta r_1(j))}, \ \pi^*_2(i) = \frac{\piref(i)\exp(\eta r_2(i))}{\sum_{j=1}^{A+1}\piref(j)\exp(\eta r_2(j))}.
\end{align*}
Now we assign $C^* = C$ and $\alpha = \eta^{-1}\log (C-1)$. Therefore, we know that 
\begin{align}
    C^{\pi^*_1} = \max_i \frac{\exp(\eta r_1(i))}{\sum_{j=1}^{A+1}\piref(j)\exp(\eta r_1(j))} \leq \frac{\exp(\eta \delta)}{\exp(-\eta\delta)/C + (C-1)\exp(-\eta\alpha)/C} \leq C = C^*. \label{eq:coverage-upper}
\end{align}
The argument above holds similarly for $C^{\pi^*_2}$, which givens that for all reward function $r \in \cG$, we have $C^{\pi^*_r} \leq C^*$. Now we consider the suboptimal gap $\subopt_1(\hat \pi) = \subopt(\hat \pi;r_1)$ and $\subopt_2(\hat \pi) = \subopt(\hat \pi;r_2)$. By Lemma~\ref{lem:kl-subopt-decompose}, we know that
\begin{align*}
    \subopt_1(\hat \pi) + \subopt_2(\hat \pi) = \eta^{-1} \kl{\hat \pi}{\pi^*_1} + \eta^{-1} \kl{\hat \pi}{\pi^*_2}.
\end{align*}
Similar to \citet[(B.9)]{zhao2026towards}, $\subopt_1(\hat \pi) + \subopt_2(\hat \pi)$ is minimized by $\bar \pi$ with $\bar \pi(i) \propto \sqrt{\pi^*_1(i)\pi^*_2(i)}$. Consequently, we have
\begin{align*}
    \subopt_1(\hat \pi) + \subopt_2(\hat \pi) \geq \subopt_1(\bar \pi) + \subopt_2(\bar \pi) = \eta^{-1} \kl{\bar \pi}{\pi^*_1} + \eta^{-1} \kl{\bar \pi}{\pi^*_2},
\end{align*}
which can be reformulated as
\begin{align*}
    & \subopt_1(\bar \pi) + \subopt_2(\bar \pi) \\
    & \quad = 2 \eta^{-1} \sum_{a \in \cA} \bar \pi(a) \log \frac{\sqrt{\sum_{i=1}^{A+1} \piref(i)\exp(\eta r_1(i))}\sqrt{\sum_{j=1}^{A+1} \piref(j)\exp(\eta r_2(j))}}{\sum_{k=1}^{A+1} \piref(k)\exp\Big(\eta \big(r_1(k) + r_2(k)\big)/2\Big)} \\
    & \quad = 2 \eta^{-1}  \log \frac{\sqrt{\sum_{i=1}^{A+1} \piref(i)\exp(\eta r_1(i))}\sqrt{\sum_{j=1}^{A+1} \piref(j)\exp(\eta r_2(j))}}{\sum_{k=1}^{A+1} \piref(k)\exp\Big(\eta \big(r_1(k) + r_2(k)\big)/2\Big)} \\
    & \quad = \eta^{-1} (X_1 + X_2),
\end{align*}
where
\begin{align*}
    X_1 & = \log \frac{\sum_{j=1}^{A+1} \piref(j)\exp(\eta r_1(j))}{\sum_{i=1}^{A+1} \piref(i)\exp\Big(\eta \big(r_1(i) + r_2(i)\big)/2\Big)}; \\
    X_2 & = \log \frac{\sum_{j=1}^{A+1} \piref(j)\exp(\eta r_2(j))}{\sum_{i=1}^{A+1} \piref(i)\exp\Big(\eta \big(r_1(i) + r_2(i)\big)/2\Big)}.
\end{align*}
Now we compute the first term $X_1$.
\begin{align*}
    X_1 & = \log \frac{\displaystyle \sum_{j=1}^l \piref(j)e^{\eta\delta} + \sum_{j=l+1}^m \piref(j)e^{-\eta\delta} + \sum_{j=m+1}^A \piref(j)e^{\eta r^*(j)} + \piref(A+1)e^{-\eta\alpha}}{\displaystyle \sum_{j=1}^m \piref(j) + \sum_{j=m+1}^A \piref(j)e^{\eta r^*(j)} + \piref(A+1)e^{-\eta\alpha}} \\
    & = \log \frac{(CA)^{-1}le^{\eta \delta} + (CA)^{-1}(m-l)e^{-\eta\delta} + (CA)^{-1}\sum_{j=m+1}^A e^{\eta r^*(j)} + C^{-1}(C-1) e^{-\eta \alpha} }{(CA)^{-1}m + (CA)^{-1}\sum_{j=m+1}^A e^{\eta r^*(j)} + C^{-1}(C-1) e^{-\eta \alpha}} \\
    & = \log \frac{A^{-1}l\exp(\eta \delta) + A^{-1}(m-l) \exp(-\eta \delta) + A^{-1}\sum_{j=m+1}^A \exp(\eta r^*(j)) + (C-1)e^{-\eta \alpha}}{A^{-1}m + \underbrace{A^{-1}\sum_{j=m+1}^A \exp(\eta r^*(j)) + (C-1)e^{-\eta \alpha}}_{M}} \\
    & = \log \frac{A^{-1}l\exp(\eta \delta) + A^{-1}(m-l) \exp(-\eta \delta) + M}{A^{-1}m + M}.
\end{align*}
Following a similar argument, we know that 
\begin{align*}
    X_2 = \log \frac{A^{-1}(m-l)\exp(\eta \delta) + A^{-1}l \exp(-\eta \delta) + M}{A^{-1}m + M}
\end{align*}
By Jensen's inequality, we know that $X_1 + X_2$ takes the minimum with respect to $l$ when $l=0$ or  $l=m$. Therefore we have
\begin{align*}
    X_1 + X_2 & \geq \log \frac{A^{-1}m\exp(\eta \delta)  + M}{A^{-1}m + M} + \log \frac{A^{-1}m\exp(-\eta \delta)  + M}{A^{-1}m + M} \\
    & = \log \frac{M^2 + A^{-2}m^2 + MA^{-1}m(\exp(\eta \delta) + \exp(- \eta \delta))}{(A^{-1}m + M)^2} \\
    & = \log \Bigg(1 + \frac{2 MA^{-1}m}{(A^{-1}m + M)^2}\bigg(\frac{\exp(\eta \delta) + \exp(- \eta \delta)}{2} - 1\bigg) \Bigg) \\
    & \geq \log \Bigg(1 + \frac{M}{(1 + M)^2}\bigg(\frac{\exp(\eta \delta) + \exp(- \eta \delta)}{2} - 1\bigg) \Bigg),
\end{align*}
where the last inequality holds due to $A/2 \leq m \leq A$. Therefore, we have 
\begin{align}
    \subopt_1(\bar \pi) + \subopt_2(\bar \pi) \geq \eta^{-1} \log \Bigg(1 + \frac{M}{(1 + M)^2}\bigg(\frac{\exp(\eta \delta) + \exp(- \eta \delta)}{2} - 1\bigg) \Bigg). \label{eq:lower-subopt-all}
\end{align}
Now we aim at lower bound \eqref{eq:lower-subopt-all}. We pick $\delta = \sqrt{SAC^*n^{-1} \log^{-1}A}/32$. We know that
\begin{align*}
    n \geq \frac{\eta^2SC^* A}{1024 \log A} \quad \Rightarrow \quad \eta \delta \leq 1.
\end{align*}
Now, the lower bound of $M$ is straightforward as follows
\begin{align*}
    M = A^{-1}\sum_{j=m+1}^A \exp(\eta r^*(j)) + (C-1)e^{-\eta \alpha} = A^{-1}\sum_{j=m+1}^A \exp(\eta r^*(j)) + 1 \geq 1.
\end{align*}
On the other hand, recall that   we can upper bound $M$ as follows
\begin{align*}
    M = A^{-1}\sum_{j=m+1}^A \exp(\eta r^*(j)) + 1 
    \leq 1 + \frac{1}{2}\exp(\eta\delta).
\end{align*}
By the fact that $f(x) = x(1+x)^{-2}$ is monotonically decreasing when $x \geq 1$, we know that
\begin{align*}
    \frac{M}{(1+M)^2} \geq \frac{1 }{(2+\exp(\eta \delta)/2)^2} \geq \frac{1}{16}.
\end{align*}
Recall that  $(\mathrm{e}^x + \mathrm{e}^{-x})/2 - 1 = x^2\sum_{k=0}^\infty \frac{x^{2k}}{(2k+2)!} \geq x^2/2$ for all $x \in \RR$, which implies that
\begin{align*}
    \subopt_1(\hat \pi) + \subopt_2(\hat \pi) & \geq \eta^{-1} \log \bigg(1 + \frac{MA^{-1}m}{(A^{-1}m + M)^2} \eta^2\delta^2\bigg) \\
    & \geq \eta^{-1} \log \bigg(1 +\frac{1}{16} \eta^2\delta^2\bigg).
\end{align*}
Since $\eta\delta \leq 1$, we know that $\eta^2 \delta^2 /16 \leq 1$. By the fact that $\log(1+x) \geq x/2$ for all $x \in [0,1]$, we know that 
\begin{align*}
     \subopt_1(\hat \pi) + \subopt_2(\hat \pi) \geq \eta^{-1} \frac{\eta^2 \delta^2}{32} \geq \frac{1}{32}\eta\delta^2.
\end{align*}

Now we consider all contexts $s \in \cS$. Consider two different reward functions $r_1$ and $r_2$, then for any $\hat \pi: \cS \to \Delta(\cA)$
\begin{align*}
    & \subopt(\hat \pi, r_1) +  \subopt(\hat \pi, r_2) \\
    & \quad = \frac{1}{S}\sum_{s \in \cS}\big(\subopt_s(\hat \pi(\cdot | s), r_1(s, \cdot)) + \subopt_s(\hat \pi(\cdot | s), r_2(s, \cdot))\big) \\
    & \quad \geq \frac{1}{64}\eta\delta^2,
\end{align*}
where the last inequality holds due to our construction that on at least $S/2$ contexts $r_1$ and $r_2$ differs on at least $A/2$ actions. Now we invoke Fano's inequality (Lemma~\ref{lem:fano}) to obtain
\begin{align*}
     & \inf_{\pi}\sup_{\mathrm{inst} \in \mathrm{CB}_{\cG}}  \subopt(\hat\pi; \mathrm{inst}) \\
     & \quad \geq \frac{1}{128}\eta\delta^2 \Bigg(1 - \frac{\max_{r_1\neq r_2 \in \cG}\kl{P_{\cD_{r_1}} }{P_{\cD_{r_2}}} + \log 2}{\log |\cV|} \Bigg) \\
     & \quad \geq \frac{1}{128}\eta\delta^2\Bigg(1 - \frac{512 n\delta^2 + 64\log 2}{C^*SA} \Bigg),
\end{align*}
where we use the condition that $\delta \leq 1/4$ and $\kl{\mathsf{Bern}(p)}{\mathsf{Bern}(q)} \le (p-q)^2/\big(q(1-q)\big)$. Recall that $\delta = \sqrt{SAC^*n^{-1}\log^{-1}A}/32$, and thus for any $\hat{\pi}$ we have  
\begin{align*}
    \sup_{\mathrm{inst} \in \mathrm{CB}_{\cG}} \subopt(\hat\pi; \mathrm{inst}) \gtrsim \frac{\eta C^* SA}{n\log A},
\end{align*}
which finishes the proof.
\end{proof}

\subsection{Proof of Theorem~\ref{thm:lowerbound-slow}}

\begin{proof}[Proof of Theorem~\ref{thm:lowerbound-slow}] The proof largely extends from the proof of Theorem~\ref{thm:multi-armed-lower}. Given the context size $S$, size of action set $|\cA| = 2A+1$ and upper bound of coverage coefficient $C \in [2, e^{\eta/2}/2]$, we consider the KL-regularized contextual bandit (with Gaussian reward) class parameterized by $\bmu \in \RR^{S\times 2A}$, such that for any $s \in \cS$, $\bmu(s) \in \{1/2, 1/2+\delta\}^{2A}$ and half of the entries of $\bmu(s)$ are $1/2 + \delta$ and half of the entries are $1/2$. The multi-armed contextual bandit corresponding to $\bmu$ is given as follows. For each context $s \in \cS$, the mean reward is given by $r_{\bmu}(\cdot, s) = \bmu(s) \oplus\{1/2 - \alpha\}$, that is, the expected reward of first $2A$ arms are given by $\bmu(s)$ and those on last arm is given by $1/2-\alpha$, where $\alpha$ will be specified later. The reference policy is given by 
\begin{align*}
    \piref(a|s)=\frac{1}{2AC} \text{ if } a\in[2A] \text{ else } \piref(a|s)=\frac{C-1}{C}.
\end{align*} 

Given a context $s$ and instance $\bmu$, we use $a^*(\bmu(s))$ and $\bmu^*(s)$ (interchangeably) to denote all the optimal arm of $\bmu$ under context $s$. 
Given a context $s$, We use the notation $\bmu(s) \sim_{i,j} \blambda(s)$ if $a^*(\bmu(s)) \triangle a^*(\blambda(s)) = \{i,j\}$ and $i \in a^*(\bmu(s))$ and $j \in a^*(\blambda(s))$. We use $\mu(s)_i$ to denote the $i$-th entrance of $\bmu(s)$ and when no ambiguity we also use $\mu(s)_i$ to denote the distribution with mean $\mu(s)_i$. Furthermore, we use the notation $\bmu \sim_{s, (i,j)} \blambda$ if $\bmu(s) \sim_{i,j} \blambda(s)$ and $\bmu(s') = \blambda(s')$ for all $s' \neq s$. 
Given an algorithm and a dataset, suppose the final output policy is $\pihat \in \Delta(\cA | \cS)$, we use $\hat{a}$ to denote a random variable sampled from $\pihat(\cdot|s)$, that is, $\hat{a} \sim \pihat(\cdot|s)$, given some $s$. We use $\pihat \sim \bmu$ if $\pihat$ is obtained by running some given algorithm on instance $\bmu$. Let $\alpha = \eta^{-1}\log(C-1) + \eta^{-1}\log2$, then for each context, the optimal policy on one optimum is given by
\begin{align*}
    \pi^*(a^*) = \frac{e^{\eta\delta} / (2CA)}{(2C)^{-1}e^{\eta\delta} + (2C)^{-1} + (C-1)C^{-1}e^{-\eta\alpha}} = A^{-1}\frac{e^{\eta\delta}}{e^{\eta\delta} + 2},
\end{align*}
which gives that $C^{\pi^*} \leq 2 C$.

We consider two instances that $\bmu \sim_{s,(i,j)} \blambda $, we show that if $\PP_{\bmu, \pihat(\cdot|s)}[\hat{a} = i]$ turns out to be large, then $\PP_{\blambda,\pihat(\cdot|s)}[\hat{a} = i]$, where $\pihat$ makes a mistake, is also somewhat large. In particular, by Proposition~\ref{prop:change-of-measure}, for any event $\cE$, 
\begin{align}
\PP_{\blambda}(\cE) \geq e^{-\gamma} \bigg[\PP_{\bmu}(\cE) - \PP_{\bmu}\bigg(\log \frac{\ud\PP_{\bmu}}{\ud\PP_{\blambda}}  > \gamma \bigg)\bigg]. \label{eq:slow-rate-COM-template}
\end{align}
When the noise is given by standard Gaussian with variance $1$, given a trajectory such that the number of $i$-th arm's occurrence under context $s$ is $N_{n,s,i}$ and their reward empirical mean is $\hat{r}_{s,i}$, the log-likelihood ratio can be computed as follows
\begin{align*}
    \log \frac{\ud\PP_{\bmu}}{\ud\PP_{\blambda}} = \sum_{x \in \cS}\sum_{k \in \cA} N_{n,x,k}\kl{\mu(x)_k}{\lambda(x)_k} + \sum_{x \in \cS}\sum_{k \in \cA}N_{n,x,k}( \mu(x)_k  - \lambda(x)_k )(\hat{r}_{x,k} - \mu_{x,k}),
\end{align*}
For the first term, each item in the summation is a Bernoulli random variable with mean $(SCA)^{-1}$. We use the following Bernstein-type inequality (Lemma~\ref{lem:freedman}) as follows
\begin{align}
& \PP_{\bmu}\Big[\sum_{x \in \cS}\sum_{k \in \cA} N_{n,x,k}\kl{\mu(x)_k}{\lambda(x)_k} > 2\KL(\PP_{\bmu}\|\PP_{\blambda}) \Big] \notag\\
& \quad \leq \exp \bigg(- \frac{4n^2 (SCA)^{-2}}{4n (SCA)^{-1} + 4n (SCA)^{-1}/3}\bigg) \notag\\
& \quad \leq \exp \bigg(- \frac{n}{2SCA}\bigg). \label{eq:slow-rate-kl-martingale}
\end{align}
For the second term, we first show that $N_{n,s,i} + N_{n,s,j} \leq 4n(SCA)^{-1}$ with high probability, which is still obtained by applying Azuma-Bernstein's inequality
\begin{align*}
    \PP_{\bmu}[N_{n,s,i} + N_{n,s,j} > 4n(SCA)^{-1}] \leq \exp \bigg(- \frac{n}{2SCA}\bigg).
\end{align*}
Let $T=N_{n,s,i} + N_{n,s,j}$ and $\cE_{T,n} \coloneq \{T \leq 4n(SCA)^{-1}\}$, applying Hoeffding's inequality (Lemma~\ref{lem:azuma-hoeffding}) gives
\begin{align}
&\PP_{\bmu}\Big[\sum_{x \in \cS}\sum_{k \in \cA}N_{n,x,k}(\mu(x)_k - \lambda(x)_k)(\hat{r}_{x,k} - \mu_{x,k}) > \beta \Big] \notag\\
& \leq \PP_{\bmu}\Big[ \cE_{T,n}^{\mathsf{C}} \Big] + \PP_{\bmu}\Big[ \big\{ \sum_{x \in \cS}\sum_{k \in \cA}N_{n,x,k}(\mu(x)_k - \lambda(x)_k)(\hat{r}_{x,k} - \mu_{x,k}) > \beta \big\} \bigcap \cE_{T,n} \Big] \notag\\
&\leq  \exp \bigg(- \frac{n}{2SCA}\bigg) + \exp\bigg(-\frac{SCA\beta^2}{4n\delta^2}\bigg). \label{eq:hoeffding}
\end{align}
Set $\gamma \leftarrow 2\KL(\PP_{\bmu}\|\PP_{\blambda}) + \beta$ in \Cref{eq:slow-rate-COM-template}, then a union bound over \Cref{eq:hoeffding} and \Cref{eq:slow-rate-kl-martingale} gives
\begin{align*}
    \PP_{\bmu}\bigg[\log \frac{\ud\PP_{\bmu}}{\ud\PP_{\blambda}}  > 2\KL(\PP_{\bmu}\|\PP_{\blambda}) + \beta \bigg] \leq 2\exp \bigg(- \frac{n}{2SCA}\bigg) + \exp\bigg(-\frac{SCA\beta^2}{4n\delta^2}\bigg).
\end{align*}
Combining everything together, we obtain that
\begin{align}
    \PP_{\blambda}(\cE) & \geq \exp \big(- 2\KL(\PP_{\bmu}\|\PP_{\blambda}) - \beta\big) \bigg[\PP_{\bmu}(\cE) -2\exp \bigg(- \frac{n}{2SCA}\bigg) - \exp\bigg(-\frac{SCA\beta^2}{4n\delta^2}\bigg) \bigg] \notag\\
    & = \exp \bigg(- \frac{2n\delta^2}{SCA} - \beta \bigg) \bigg[\PP_{\bmu}(\cE) -2\exp \bigg(- \frac{n}{2SCA}\bigg) - \exp\bigg(-\frac{SCA\beta^2}{4n\delta^2}\bigg) \bigg]. \label{eq:slow-rate-COM-final-template}
\end{align}
Since $n \geq SAC\log A$ by assumption, we obtain under the premise of $\delta \asymp \sqrt{SAC(n\log A)^{-1}}$ and $\beta =4$ that
\begin{align*}
    2\exp \bigg(- \frac{n}{2SCA}\bigg) \leq  \exp\bigg(-\frac{SCA\beta^2}{4n\delta^2}\bigg).
\end{align*}
Let $\cE_{s,i} = \{\hat{a} = i: \hat{a} \sim \pihat(\cdot|s)\}$. For any $s \in \cS, i \in [2A]$, define the bipartite graph $G_{s,i}$ with left vertices $L_{s,i} = \{\bmu \in \binom{[2A]}{A}^{S}: i \in \bmu(s)\}$ and right vertices $R_{s, i} = \{\blambda \in \binom{[2A]}{A}^S: i \notin \blambda(s)\}$; and connect $(\mu, \lambda) \in L_{s,i} \times R_{s, i}$ if $\exists j \in [2A] \backslash\{i\}$ such that $\bmu \sim_{s, (i, j)} \blambda$. Then by construction, each $\mu \in L_{s,i}$ and $\lambda \in R_{s,i}$ has exactly $A$ neighbors, i.e., $G_{s, i}$ is a $A$-regular bipartite graph, which admits a perfect matching $M_{s,i} \subset L_{s,i} \times R_{s,i}$ by Lemma~\ref{lem:perfect-matching}.\footnote{We suppress the last arm with reward $1/2-\alpha$ in this argument to avoid notation clutter. And hence for each $G_{s,i}$, $|L_{s,i}| = |R_{s,i}| = \binom{2A}{A}^S \cdot \binom{2A - 1}{A-1}$.} Note that any pair $(\bmu, \blambda) \in M_{s,i}$ fits the template \Cref{eq:slow-rate-COM-final-template}, which implies
\begin{align*}
 & \sum_{(\bmu, \blambda) \in M_{s,i}}  \PP_{\blambda}(\cE_{s,i}) \\
 &\quad \geq  \sum_{(\bmu, \blambda) \in M_{s,i}}  \exp \bigg(- \frac{2n\delta^2}{SCA} - \beta \bigg) \bigg[\PP_{\bmu}(\cE_{s,i}) -2\exp \bigg(- \frac{n}{2SCA}\bigg) - \exp\bigg(-\frac{SCA\beta^2}{4n\delta^2}\bigg) \bigg]. 
\end{align*}
Therefore, additionally summing over $s \in \cS$ and $i \in [2A]$, we obtain
\begin{align*}
    \sum_{i} \sum_{s} \sum_{\blambda: i \in [2A]\backslash \blambda^*(s) } \PP_{\blambda}(\cE_{s,i}) & \geq \sum_{i} \sum_{s} \sum_{\bmu: i \in  \bmu^*(s) }   \exp \Big(- \frac{2n\delta^2}{SCA} - \beta \Big) \times \\
    & \quad \quad \bigg[\PP_{\bmu}(\cE_{s,i}) -2\exp \Big(- \frac{n}{2SCA}\Big) - \exp\Big(-\frac{SCA\beta^2}{4n\delta^2}\Big) \bigg].
\end{align*}
Interchanging the order of summations on both sides yields
\begin{align*}
    & \sum_{\blambda}\sum_{s \in \cS}\sum_{i \in [2A]\backslash \blambda^*(s)} \PP_{\blambda}(\cE_{s,i}) \\
    & \quad \geq\exp \bigg(- \frac{2n\delta^2}{SCA} - \beta \bigg) \bigg[\sum_{\bmu}\sum_{s\in \cS}\sum_{i \in \bmu^*(s)}\PP_{\bmu}(\cE_{s,i}) - SA \binom{2A}{A}^{S} 2\exp\bigg(-\frac{SCA\beta^2}{4n\delta^2}\bigg) \bigg] \\
    & \quad = \exp \bigg(- \frac{2n\delta^2}{SCA} - \beta \bigg) \times \\
    & \quad \quad \bigg[\sum_{\blambda}\sum_{s\in \cS}\bigg(1 -\sum_{i \notin \blambda^*(s)}\PP_{\blambda}(\cE_{s,i})\bigg) - SA \binom{2A}{A}^S 2\exp\bigg(-\frac{SCA\beta^2}{4n\delta^2}\bigg) \bigg],
\end{align*}
which further implies that
\begin{align*}
    \sum_{\blambda}\sum_{s\in \cS}\sum_{i \notin a^*(\blambda(s))} \PP_{\blambda}(\cE_{s,i}) \geq \exp \bigg(- \frac{2n\delta^2}{SCA} - \beta \bigg) \binom{2A}{A}^S \bigg[1 -2A\exp\bigg(-\frac{SCA\beta^2}{4n\delta^2}\bigg) \bigg]S.
\end{align*}
As a consequence, we know that there exists some $\blambda$ such that
\begin{align*}
    \sum_{s\in \cS}\sum_{i \notin a^*(\blambda(s))} \PP_{\blambda}(\cE_{s,i}) \geq \exp \bigg(- \frac{2n\delta^2}{SCA} - \beta \bigg)\bigg[1 -2A\exp\bigg(-\frac{SCA\beta^2}{4n\delta^2}\bigg) \bigg] S.
\end{align*}
Now we set $\delta = \sqrt{SCA(n\log A)^{-1}}$ and let $\beta = 4$, then we have
\begin{align*}
    & \exp \bigg(- \frac{2n\delta^2}{SCA} - \beta \bigg) = \exp \bigg(- \frac{2}{\log A} - 4 \bigg) \geq \exp(-6), \\
    & \exp\bigg(-\frac{SCA\beta^2}{4n\delta^2} \bigg) = \exp\big(-4\log A \big) = A^{-4}.
\end{align*}
Thus gives
\begin{align*}
    \sum_{s\in \cS}\sum_{i \notin a^*(\blambda(s))} \PP_{\blambda}(\cE_{s,i}) \geq \frac{S}{2}\exp(-6) \geq S\exp(-7).
\end{align*}
We notice that $\PP_{\blambda}(\cE_{s,i}) =\EE_{\pihat \sim \blambda,\hat{a} \sim \pihat(\cdot|s)}[\ind\{ \hat{a}=i \}] = \EE_{\blambda}[\pihat(i|s)]$, which gives that 
\begin{align}
      \EE_{\pihat \sim \blambda} \EE_{s\sim \unif(\cS)} \bigg[\sum_{a \notin a^*(\blambda(s))} \pihat(a|s)\bigg] & = \frac{1}{S}\sum_{s\in \cS}\sum_{i \notin a^*(\blambda(s))} \PP_{\blambda}(\cE_{s,i}) \notag \\
      & \geq \exp(-7). \label{eq:slowrate-pibad-upper}
\end{align}
Now, we consider the suboptimality gap for the instance $\blambda$ satisfying \eqref{eq:slowrate-pibad-upper}. Let $\pi^*_{\blambda}$ be the optimal policy under the instance $\blambda$, i.e., $\pi_{\blambda}^*(\cdot|s) \propto \piref(\cdot|s) \exp\big(-\eta r_{\blambda}(\cdot,s)\big)$. For any context $s \in \cS$, let $a_{\blambda}^*(s)$ be the set of optimal arms for the instance $\blambda$. We will omit the subscript $\blambda$ when it will not cause any confusion. Recall that $\hat{\pi}$ is the policy produced by the algorithm after interacting with the problem instance. We write $\hat{\pi} \sim \blambda$ to indicate that the policy is obtained from interaction with instance $\blambda$. Thus, for any given algorithm, the expected suboptimality gap under instance $\blambda$ can be written as $\EE_{\pihat \sim \blambda}\big[\subopt(\pihat, \pi_{\blambda}^*)\big]$. Using Lemma \ref{lem:kl-subopt-decompose} and the data processing inequality~\citep[Theorem~2.17]{polyanskiy2025information},
we know that
\begin{align*}
    \eta\EE_{\pihat \sim \blambda}\big[\subopt(\pihat, \pi_{\blambda}^*)\big] &= \EE_{\hat \pi \sim \blambda}\big[\KL(\hat \pi \|\pi^*_{\blambda})\big]\\
    &\ge \EE_{\hat \pi \sim \blambda}\EE_{s \sim \unif(\cS)}\bigg[\hat \pi(\tilde a|s)\log \frac{\hat\pi(\tilde a|s)}{\pi^*_{\blambda}(\tilde a|s)} + \hat \pi(a^*|s) \log \frac{\hat \pi(a^*|s)}{\pi^*_{\blambda}(a^*|s)}\bigg].
\end{align*}
where for any policy $\pi$, we use the shorthand notation $\pi(\tilde{a}|s) \coloneqq \sum_{a \notin a^*} \pi(a|s)$ and $\pi(a^*|s) \coloneqq \sum_{a \in a^*} \pi(a|s)$ to denote the total probability assigned to suboptimal and optimal arms, respectively. Note that for any $s \in \cS$, we have $\pi^*_{\blambda}(\tilde a|s) = 2(2 + e^{\eta\delta})^{-1}$ and $\pi^*_{\blambda}(a^*|s) = e^{\eta\delta}(2 + e^{\eta\delta})^{-1}$, which are independent of $s$. Hence, we drop the dependence on $s$ and write $\pi_{\blambda}^*(\tilde a)$, $\pi_{\blambda}^*(a^*)$ without ambiguity. Applying Jensen's inequality to the convex function $f(x) = x \log (x/a)$, where $a$ is a constant, we have
\begin{align}
    \eta \EE_{\pihat \sim \blambda}\big[\subopt(\pihat, \pi_{\blambda}^*)\big] 
    & \geq \EE_{\pihat \sim \blambda} \EE_{s \sim \unif(\cS)}[\pihat(\tilde a|s)] \log \frac{ \EE_{\pihat \sim \blambda} \EE_{s \sim \unif(\cS)}[\pihat(\tilde a|s)]}{\pi^*_{\blambda}(\tilde a)} \notag \\
    & \quad + \EE_{\pihat \sim \blambda} \EE_{s \sim \unif(\cS)}[\pihat(a^*|s)] \log \frac{\EE_{\pihat \sim \blambda} \EE_{s \sim \unif(\cS)}[\pihat(a^*|s)]}{\pi^*_{\blambda}(a^*)} \label{eq:slowrate-subopt}.
\end{align}
Therefore, the suboptimality gap can be lower bounded by the KL divergence between two Bernoulli variables, with mean $\EE_{\pihat \sim \blambda} \EE_{s \sim \unif(\cS)}[\pihat(a^*|s)]$ and $\pi_{\blambda}^*(\tilde a)$. Our next step is to show that $\EE_{\pihat \sim \blambda} \EE_{s \sim \unif(\cS)}[\pihat(\tilde a|s)]$ is larger than $\pi^*_{\blambda}(\tilde a) = 2(2 + e^{\eta\delta})^{-1}$. Specifically, we have
\begin{align*}
    n \leq \frac{\eta^2 SAC}{81\log A} \Rightarrow \eta \sqrt{\frac{SAC}{n \log A}} = \eta\delta \geq 9.
\end{align*}
Therefore, we know that 
\begin{align*}
    \pi^*(\tilde a|s) = \frac{2}{2 + e^{\eta\delta}} \leq 2\exp(-9) \leq \exp(-7) \stackrel{\Cref{eq:slowrate-pibad-upper}}{\leq} \EE_{\pihat \sim \blambda} \EE_{s \sim \unif(\cS)}[\pihat(\tilde a|s)].
\end{align*}
As a result, we can plug~\eqref{eq:slowrate-pibad-upper} into \Cref{eq:slowrate-subopt} and compute as follows,
\begin{align}
    & \eta\EE_{\pihat \sim \blambda}\big[\subopt(\pihat, \blambda)\big]  \notag\\
    &\quad \geq \mathsf{KL}\Big( \mathrm{Bern}\big( \EE_{\pihat \sim \blambda} \EE_{s \sim \unif(\cS)}[\pihat(\tilde a|s)] \big) \Bigm|\Bigm| \mathrm{Bern}\big( \pi^*(\tilde a|\cdot) \big) \Big) \notag\\
    &\quad \geq \mathsf{KL}\Big( \mathrm{Bern}\big( \exp(-7) \big) \Bigm|\Bigm| \mathrm{Bern}\big( \pi^*(\tilde a|\cdot) \big) \Big) \notag\\
    & \quad \geq  \underbrace{\exp\big(-7\big) \log \frac{\exp\big(-7\big)}{2(2 + \exp(\eta\delta))^{-1}}}_{I_1} + \underbrace{\log \frac{1 - \exp\big(-7\big) }{ 1- 2(2 + \exp(\eta\delta))^{-1}} }_{I_2}, \label{eq:slow-rate-decomp}
\end{align}
where the second inequality holds due to $\kl{\mathrm{Bern}(p_1)}{\mathrm{Bern}(q)} \geq \kl{\mathrm{Bern}(p_2)}{\mathrm{Bern}(q)}$ whenever $1 > p_1 \geq p_2 \geq q > 0$ and the third inequality holds due to $1 - \exp(-7) \leq 1$ and $1-\exp(-7) \leq 1 - 2(2+\exp(\eta\delta)^{-1}$.
For the first term $I_1$, we have
\begin{align}
    I_1 & = \exp\big(-7\big) \log \frac{\exp\big(-7\big)}{2(2 + \exp(\eta\delta))^{-1}} \notag \\
    & \geq \exp\big(-7\big) \log \frac{\exp\big(-7\big)\exp(\eta\delta)}{2} \notag \\
    & \geq \exp\big(-7\big) \big(\eta\delta - \log2 - 7\big). \label{eq:regularized-slow-rate-I-1}
\end{align}
For the second term $I_2$, we have
\begin{align}
    I_2 & = - \log \frac{ 1- 2(2 + \exp(\eta\delta))^{-1}}{1 - \exp\big(-7\big) }  \notag\\
    & = - \log \bigg(1 + \frac{\exp\big(-7\big) - 2(2 + \exp(\eta\delta))^{-1}}{1 - \exp\big(-7\big)}\bigg)  \notag\\
    & \geq - \frac{\exp\big(-7\big) - 2(2 + \exp(\eta\delta))^{-1}}{1 - \exp\big(-7\big)} \notag \\
    & \geq - 2\exp\big(-7\big), \label{eq:regularized-slow-rate-I-2}
\end{align}
where the first inequality holds due to $\log(1+x) \leq x$, and the last inequality is by $1 - \exp\big(-7\big) \geq 1/2$ and $2(2 + \exp(\eta\delta))^{-1} \geq 0$. Substituting \Cref{eq:regularized-slow-rate-I-1,eq:regularized-slow-rate-I-2} into \Cref{eq:slow-rate-decomp} yields
\begin{align*}
    \EE_{\pihat \sim \blambda}\big[\subopt(\pihat, \blambda)\big] &\geq \eta^{-1} \exp\big(-7\big)\cdot \big(\eta \delta -7 - 2 - \log2 \big)
\end{align*}
Since $n \leq \eta^2SAC/(400 \log A)$, we know that 
\begin{align*}
    n \leq \frac{\eta^2 SAC}{400\log A} \Rightarrow \eta \sqrt{\frac{SAC}{n \log A}} = \eta\delta \geq 20,
\end{align*}
which provides that $\eta\delta/2 \geq 10 \geq 9 + \log 2$. Subsequently, we have
\begin{align*}
    \EE_{\pihat \sim \blambda}\big[\subopt(\pihat, \blambda)\big]  & \geq \eta^{-1} \exp\big(-7\big)\cdot \big(\eta \delta -7 - 2 - \log2 \big) \\
    & \geq \eta^{-1} \exp\big(-7\big)\cdot \frac{\eta\delta}{2} \\
    & \gtrsim \sqrt{\frac{SAC}{n\log A}},
\end{align*}
which concludes the proof.
\end{proof}

\section{Proof of Theorems in Appendix~\ref{app:multi-armed-multiple}}

\subsection{Proof of Theorem~\ref{thm:multi-armed-lower}}\label{app:proof-multi-armed-lower}

\begin{proof}[Proof of Theorem~\ref{thm:multi-armed-lower}]
We consider the multi-armed bandit class parameterized by mean vectors:
\begin{align*}
\bmu \in \cV_K:= \bigg\{ \bmu \in \{\pm 1\}^A \bigg| \sum_{i=1}^A \ind(\bmu_i=-1) = K \bigg\},
\end{align*}
i.e., exactly $K$ entries of $\bmu$ are $-1$. Unless stated otherwise, we assume that noise of rewards is standard Gaussian. For any $\bmu \in \cV_K$, the corresponding instance is given by $([A], r_{\bmu}, \cD)$ where the reward is given by $r_{\bmu}(i) = \bmu_i \delta$, with $\delta > 0$ to be specified later.
For any $\bmu$, we denote by $a^*(\bmu)$ the set of optimal actions (those with reward $\delta$). For two instances $\bmu$ and $\blambda$, we write $\bmu \sim_{i,j} \blambda$ if $a^*(\bmu) \triangle a^*(\blambda) = \{i,j\}$ with $i \in a^*(\bmu)$ and $j \in a^*(\blambda)$, i.e., instances $\bmu$ and $\blambda$ differ only in that action $i$ is optimal under $\bmu$ while action $j$ is optimal under $\blambda$. When there is no ambiguity, we also write $\bmu_i$ for the distribution with mean $\bmu_i \delta$ and standard Gaussian noise.
Given an algorithm and a dataset, let $\pihat$ denote the output policy. We write $\hat{a} \sim \pihat$ for a random action sampled from $\pihat$, and $\pihat \sim \bmu$ to indicate that $\pihat$ is produced by running the algorithm on instance $\bmu$.

Consider two instance that $\bmu \sim_{i,j} \blambda $, we show that if $\PP_{\bmu}[\pihat = i]$ is large then $\PP_{\blambda}[\pihat = i]$ is also large, resulting to an error on $\blambda$. By Proposition~\ref{prop:change-of-measure}, we know that for any event $\cE$,
\begin{align}
    \PP_{\blambda}(\cE) \geq e^{-\gamma} \Bigg[\PP_{\bmu}(\cE) - \PP_{\bmu}\bigg[\log \frac{\ud\PP_{\bmu}}{\ud\PP_{\blambda}}  > \gamma \bigg]\Bigg]. \label{eq:multi-armed-change-of-measure}
\end{align}
We first compute the term $\log \ud\PP_{\bmu}/\ud\PP_{\blambda}$. Specifically, consider a dataset $\cD = (a_1,r_1,\ldots, a_n,r_n)$. Let $p_{\bmu}(\cD)$ denote the joint density of $\cD$ under the bandit instance $\bmu$, and define $p_{\blambda}(\cD)$ analogously. Then,
\begin{align*}
    \frac{\ud\PP_{\bmu}}{\ud\PP_{\blambda}} &= \frac{p_{\bmu}(\cD)}{p_{\blambda}(\cD)}= \prod_{i=1}^n \frac{p_{\bmu}(a_i,r_i)}{p_{\blambda}(a_i,r_i)},
\end{align*}
where the last equation holds because the samples in $\cD$ are independent and identically distributed. Moreover, using the Bayesian formula and the fact that the reward distribution is standard Gaussian, we have
\begin{align*}
    \log \prod_{i=1}^n \frac{p_{\bmu}(a_i,r_i)}{p_{\blambda}(a_i,r_i)} &= \log \prod_{i=1}^n \frac{p_{\bmu}(r_i|a_i)}{p_{\blambda}(r_i|a_i)}\\
    &= \sum_{k \in \cA}\sum_{i=1}^n \ind(a_i=k) \log \frac{p_{\bmu}(r_i|a_i)}{p_{\blambda}(r_i|a_i)}\\
    &= \sum_{k \in \cA}\sum_{i=1}^n \ind(a_i=k) \log \frac{\exp[-(r_i-\mu_k)^2/2]}{\exp[-(r_i-\lambda_k)^2/2]}\\
    &= \frac{1}{2} \sum_{k \in \cA}\sum_{i=1}^n \ind(a_i=k) \Big[(r_i-\lambda_k)^2 - (r_i-\mu_k)^2\Big]\\
    &= \frac{1}{2} \sum_{k \in \cA}\sum_{i=1}^n \ind(a_i=k) \Big[2(r_i-\mu_k)(\mu_k-\lambda_k) +  (\mu_k-\lambda_k)^2\Big].
\end{align*}
Therefore, the logarithmic Radon–Nikodym derivative can be represented as
\begin{align*}
    \log \frac{\ud\PP_{\bmu}}{\ud\PP_{\blambda}} &= \frac{1}{2} \sum_{k \in \cA}\sum_{i=1}^n \ind(a_i=k) \Big[2(r_i-\mu_k)(\mu_k-\lambda_k) +  (\mu_k-\lambda_k)^2\Big]\\
    &= \underbrace{\frac{1}{2} \sum_{k \in \cA}N(k)(\mu_k-\lambda_k)^2}_{I_1} + \underbrace{\sum_{k \in \cA}N(k)( \mu_k  - \lambda_k )(\hat{r}_{k} - \mu_k)}_{I_2},
\end{align*}
where $N(k) = \sum_{i=1}^n \ind(a_i=k)$
denotes the number of occurrences of the arm $k$, and $\hat r_k = \sum_{i=1}^n \ind(a_i=k) r_i/N(k)$ denotes the reward empirical mean of arm $k$. 
For $I_1$, we first recall our choice of $\bmu \sim_{i,j} \blambda$, such that the two instances only differ in two arms, with $|\mu_k-\lambda_k| = 2\delta$ if and only if $k \in \{i,j\}$. Therefore, we have
\begin{align*}
    I_1 = 2\delta^2 \big[N(i) +N(j)\big].
\end{align*}
Using standard concentration inequalities (Lemma~\ref{lem:Chernoff}), we show that $N(i) + N(j) = O(n/A)$ with high probability. 
Specifically, by Lemma~\ref{lem:Chernoff}, we have
\begin{align*}
    \PP[N(i) \ge 2n/A] &\le \exp\Big(- \frac{n}{3A}\Big),\\
    \PP[N(j) \ge 2n/A] &\le \exp\Big(- \frac{n}{3A}\Big),
\end{align*}
Then, taking a union bound, we have
\begin{align*}
    \PP[N(i) + N(j) \ge 4n/A] \leq 2\exp \Big(-\frac{n}{3A}\Big).
\end{align*}
Let $\cE_{i,j;n}:=\{N(i) + N(j) \leq 4n/A\}$. We have seen $\PP[\cE_{i,j;n}^{\mathsf{C}}] \le 2\exp(-n/[3A])$. Moreover, 
\begin{align}
\label{eq:0001}
    \PP[I_1 \ge 8n\delta^2/A] \le 2\exp \Big(-\frac{n}{3A}\Big).
\end{align}
For $I_2$, we have
\begin{align*}
    I_2 &= \sum_{k \in \cA}N(k)( \mu_k  - \lambda_k )(\hat{r}_{k} - \mu_k)\\
    &= 2 \delta \sum_{s=1}^n \big[ \ind(a_s=i)(r_s-\mu_i) - \ind(a_s=j)(r_s - \mu_j)\big],
\end{align*}
where the last term is the summation of $[N(i)+N(j)]$ independent Gaussians. Note that the selected actions and the noise of rewards are independent. We can use Hoeffding's inequality (Lemma \ref{lem:Hoeffding}), conditioned on the event $\cE_{i,j;n}$. Thus, we have
\begin{align*}
    \PP_{\bmu}\Big[\big\{I_2 \ge 2\beta \delta\big\} \bigcap \cE_{i,j;n} \Big] \le \exp\Big(-\frac{A\beta^2}{8n}\Big).
\end{align*}
Using the union bound, we have
\begin{align}
\notag
    \PP_{\bmu}\big[I_2 \ge 2\beta \delta\big] &\le \PP_{\bmu}[\cE_{i,j;n}^{\mathsf{C}}] + \PP_{\bmu}\Big[\big\{I_2 \ge 2\beta \delta\big\} \bigcap \cE_{i,j;n} \Big]\\\label{eq:0002}
    &\le 2\exp \Big(-\frac{n}{3A}\Big) + \exp\Big(-\frac{A\beta^2}{8n}\Big).
\end{align}
Combining~\eqref{eq:0001} and~\eqref{eq:0002}, we get
\begin{align*}
    \PP_{\bmu}\bigg[\log \frac{\ud\PP_{\bmu}}{\ud\PP_{\blambda}}  > \frac{8n\delta^2}{A} + 2\beta\delta \bigg] & \le \PP_{\bmu}\bigg[I_1 \le \frac{8n\delta^2}{A}\bigg] + \PP_{\bmu}[I_2 \le 2\beta\delta]\\
    &\leq 4\exp \bigg(- \frac{n}{3A}\bigg) +  \exp\bigg(-\frac{A\beta^2}{8n}\bigg) \\
    & \leq 5\exp\bigg(-\frac{A\beta^2}{8n}\bigg),
\end{align*}
where the last inequality holds \emph{under the premise of} $n \ge \beta A$. Now we plug in the above inequality back to the change-of-measure argument~\eqref{eq:multi-armed-change-of-measure} with $\gamma = 8n\delta^2/A + 2\beta \delta$. Then, we obtain that
\begin{align*}
    \PP_{\blambda}(\cE) &\geq \exp\big(-(8n\delta^2/A + 2\beta \delta)\big)\bigg[\PP_{\bmu}(\cE) - 5\exp\bigg(-\frac{A \beta^2 }{8n}\bigg)\bigg] .
\end{align*}
Let $\cE_i = \{\hat{a} = i: \hat{a} \sim \pihat(\cD)\}$, we traverse over every $\bmu$, $\blambda$, $i$ and $j$ such that $\bmu \sim_{i,j} \blambda$ and sum up the inequality above. For each $(\blambda, i \notin a^*(\blambda))$, there exists $(A-K)$ $j$ such that the reward on $i$ and $j$ to be switched to get $\bmu$ such that $\bmu \sim_{i,j} \blambda$. Therefore, each $(\blambda, i \notin a^*(\blambda))$ appears $(A-K)$ times. Similarly, each $(\bmu, j\in a^*(\bmu))$ appears $K$ times and we have $\binom{A}{K}(A-K)K$ inequalities in total. These give the multiplicative factors in front of each term:
\begin{align*}
    & (A-K)\sum_{\blambda}\sum_{i \notin a^*(\blambda)} \PP_{\blambda}(\cE_{i}) \\
    & \quad  \geq \exp\bigg(-\frac{8n\delta^2}{A} - 2\beta \delta\bigg) \bigg[K\sum_{\bmu}\sum_{i \in a^*(\bmu)}\PP_{\bmu}(\cE_{i}) - 5K(A-K)\binom{A}{K}\exp\bigg(-\frac{A\beta^2}{8n}\bigg) \bigg] \\
    & \quad \geq \exp\bigg(-\frac{8n\delta^2}{A} - 2\beta\delta\bigg) K \bigg[\sum_{\blambda}\bigg(1 -\sum_{i \notin a^*(\blambda)} \PP_{\blambda}(\cE_{i})\bigg) - 5(A-K)\binom{A}{K}\exp\bigg(-\frac{A\beta^2}{8n}\bigg) \bigg],
\end{align*}
which implies that
\begin{align*}
    & \sum_{\blambda}\sum_{i \notin a^*(\blambda)} \PP_{\blambda}(\cE_{i}) \\
    & \quad  \geq \frac{K\exp\big(-8n\delta^2/A - 2\beta\delta\big)}{A-K + K\exp\big(-8n\delta^2/A - 2\beta\delta\big)} \binom{A}{K} \bigg[1 - 5(A-K)\exp\bigg(-\frac{A\beta^2}{8n}\bigg)\bigg] \\
    & \quad \geq \exp\bigg(-\frac{8n\delta^2}{A} - 2\beta\delta\bigg)\frac{K}{A} \binom{A}{K} \bigg[1 - 5(A-K)\exp\bigg(-\frac{A\beta^2}{8n}\bigg)\bigg],
\end{align*}
where the second inequality holds due to $\exp\big(-8n\delta^2/A - 2\beta\delta\big) \leq 1$. Now we select $\beta = 4\sqrt{n\log A/A}$ and $\delta  = \sqrt{A(n\log A)^{-1}}$, we have 
\begin{align*}
    & \exp\bigg(-\frac{8n\delta^2}{A} - 2\beta\delta\bigg)= \exp\bigg(-\frac{8}{\log A} - 8\bigg) \geq e^{-16}, \\
    & \exp\bigg(-\frac{A\beta^2}{8n}\bigg) = \exp\bigg(-2 \log A\bigg) = A^{-2}.
\end{align*}
Consequently, we get
\begin{align*}
    \sum_{\blambda}\sum_{i \notin a^*(\blambda)} \PP_{\blambda}(\cE_{i}) \gtrsim \frac{K}{A}\binom{A}{K}(1 - 5(A-K)A^{-2}) \gtrsim \frac{K}{A}\binom{A}{K}. 
\end{align*}
Therefore, we know that there exists an $\blambda \in \cV_K$ such that
\begin{align*}
    \sum_{i \notin a^*(\blambda)} \PP_{\blambda}(\cE_{i}) \gtrsim \frac{K}{A}\binom{A}{K}(1 - 5(A-K)A^{-2}) \gtrsim \frac{K}{A}.
\end{align*}
Recall that $\EE_{\blambda}[\subopt(\pihat)] = \delta \sum_{i \notin a^*(\blambda)} \PP_{\blambda}(\cE_{i})$, we obtain that there exists an $\blambda \in \cV_K$ such that
\begin{align*}
    \EE_{\blambda}[\subopt(\pihat)] \gtrsim \frac{K\delta}{A} \geq \frac{K}{\sqrt{n A \log A }} ,
\end{align*}
which finishes the proof.
\end{proof}

\subsection{Proof of Theorem~\ref{thm:multi-armed-upper}}\label{app:proof-multi-armed-upper}

\begin{proof}[Proof of Theorem~\ref{thm:multi-armed-upper}] Without loss of generality, we consider the optimal reward $r^*=0$ here. Then, by our bounded reward assumption, we have $r(a) \geq -1$ for all $a \in \cA$. We assume $\cA = [A]$ and $\cA^* = [A-K]$. We further denote the suboptimality gap on arm $k$ to be $2\delta_k$, or $r(k)=-2\delta_k$, where $1/2>\delta_k>0$ for all $k > A-K$. Thus, the suboptimality gap of output policy $\hat \pi = \delta(\hat a)$ is $2\delta_{\hat a}$ and therefore the expected suboptimality gap is given by
\begin{align*}
    \EE[\subopt(\pi)] &= 2\sum_{k>A-K} \PP[\hat a = i] \delta_k.
\end{align*}
Our goal is to show that $\EE[\subopt(\pi)] \leq 2\epsilon$ as long as 
\begin{align*}
    n \geq 8A\log\bigg(\frac{8A}{\epsilon}\bigg) + \frac{64K^2\log A}{A\epsilon^2} = \tilde{O}(K^2/A\epsilon^2).
\end{align*}

We first show that if $\delta_k = O(A\epsilon/K)$, then the suboptimality gap on these arms accumulates to at most an order of $\epsilon$ thanks to that we have enough optimal arm. In particular, since our samples $a_i$ and reward noise $\epsilon_i$ are independently and uniformly drawn symmetrically with respect to $\cA$, we know that for all $i > A-K$ and $j \leq A-K$, the probability that $\fls(i)$ takes the maximum is no greater than the probability that $\fls(j)$ takes the maximum since $r(i) < r(j)$. This implies that
\begin{align*}
    \sum_{i > A-K} \PP[\hat a = i] \leq \frac{K}{A}.
\end{align*}
Therefore, the total suboptimality gap on those suboptimal arm $k$ with $\delta_k \leq A\epsilon/(4K)$ would at most results in $\epsilon/4$ suboptimality gap.
\begin{align}
    \EE\big[\subopt(\pi)\big] &=  2\sum_{k>A-K} \PP[\hat a = i] \ind\{\delta_k \leq A\epsilon/(4K)\}\delta_k \notag\\
    & \quad + 2\sum_{k>A-K} \PP[\hat a = i] \ind\{\delta_k > A\epsilon/(4K)\}\delta_k \notag \\
    & \leq \frac{\epsilon}{2} + \underbrace{2\sum_{k>A-K} \PP[\hat a = i] \ind\{\delta_k > A\epsilon/(4K)\}\delta_k}_{X}, \label{eq:multi-armed-upper-decomposition}
\end{align}
Therefore, we only need to consider the expected suboptimality incurred by those arm where $\delta_k = \Omega(\epsilon A/K)$, denoted by $X$ in~\eqref{eq:multi-armed-upper-decomposition}. For each of these arms, since the reward gap is significant enough, the probability that one of them is selected as the maximum of empirical mean is small. Specifically, we consider a super set of the event $\hat a = k$ for a given $k$:
\begin{align*}
    \cE_k = \{ \fls(k) \geq -\delta_k \} \cup \big\{ \min_{ i \in \cA^*} \fls(i) \leq \delta_k \big\} \subseteq \{ \fls(k) \geq -\delta_k \} \cup \{ \fls(1) \leq \delta_k \}.
\end{align*}
It is easy to see that $\{\hat a = k\} \subseteq \cE_k$, since on the complement of $\cE_k$, $\fls(k) < -\delta_k$, but on the first arm (which is optimal) we have $\fls(i) > \delta_k > \fls(k)$, so $k$ cannot be selected. Now, let $\cX_k = \{ \fls(k) \geq -\delta_k \}$ and $\cU = \{ \fls(1) \leq \delta_k \}$, we have  $\PP[\hat a = k] \leq \PP(\cX_k \cup \cU) \leq \PP(\cX_k) +\PP(\cU)$. We first bound the probability of  $\cU$. Since $N(1) \sim \mathsf{Bin}(n,1/A)$, by Lemma~\ref{lem:binary-concentration}, we have
\begin{align*}
    \PP\Big[N(1) \leq \frac{n}{2A}\Big] \leq \exp \Big( -\frac{n}{8A}\Big).
\end{align*}
Conditioning on $N(1) \geq n/(2A)$, by Lemma~\ref{lem:azuma-hoeffding}, we know that
\begin{align*}
    \PP[\fls(1) \leq -\delta_k] \leq \exp \bigg(-\frac{N(1)^2\delta_k^2}{2N(1)}\bigg) \leq \exp \bigg(-\frac{n^2\delta_k^2}{4A}\bigg).
\end{align*}
Taking union bound, we know that
\begin{align*}
    \PP(\cU) \leq \exp \bigg( -\frac{n}{8A}\bigg) + \bigg(-\frac{n^2\delta_k^2}{4A}\bigg).
\end{align*}
Noticing that $\cX_k$ and $\cU$ is symmetric, so we can simply repeat the above argument to achieve an upper bound of $\PP[\cX_K]$:
\begin{align*}
    \PP(\cX_k) \leq \exp \bigg( -\frac{n}{8A}\bigg) + \exp\bigg(-\frac{n^2\delta_k^2}{4A}\bigg).
\end{align*}
Taking union bound regarding $\cE_k$ and $\cU$, we obtain that
\begin{align*}
    \PP(\cE_k) \leq 2\exp \bigg( -\frac{n}{8A}\bigg) + 2\exp\bigg(-\frac{n^2\delta_k^2}{4A}\bigg).
\end{align*}
Now we are ready to bound the second term $X$ in~\eqref{eq:multi-armed-upper-decomposition}. Specifically, we have
\begin{align*}
    X & = 2\sum_{k>A-K} \PP[\hat a = i] \ind\{\delta_k > A\epsilon/(4K)\}\delta_k \\
    & \leq 4\sum_{k>A-K} \ind\{\delta_k > A\epsilon/(4K)\} \bigg[ \exp \bigg( -\frac{n}{8A}\bigg) + \exp\bigg(-\frac{n^2\delta_k^2}{4A}\bigg) \bigg]\delta_k \\
    & = \underbrace{4\sum_{k>A-K} \ind\{\delta_k > A\epsilon/(4K)\} \exp \bigg( -\frac{n}{8A}\bigg) \delta_k}_{(\text{i})} \\
    & \quad + \underbrace{4\sum_{k>A-K} \ind\{\delta_k > A\epsilon/(4K)\}\exp\bigg(-\frac{n^2\delta_k^2}{4A}\bigg) \delta_k}_{(\text{ii})}.
\end{align*}
For the first term $(\text{i})$, our selection of $n$ ensures $n \geq 8A\log(8A/\epsilon)$, resulting in $\exp(-n/8A) \leq \epsilon/(8K)$. Consequently, $X_1$ can be bounded as
\begin{align}
    (\text{i}) \leq 4\sum_{i > A-K} \delta_k \frac{\epsilon}{8K} \leq \frac{\epsilon}{4}, \label{eq:multi-armed-upper-x1}
\end{align}
where the last inequality holds due to $\delta_k \leq 1/2$. For the second term $(\text{ii})$, we rewrite $\delta_k = C_k A\epsilon/(4K)$, where $C_k \geq 1$. For fixed $k$, we have
\begin{align*}
    \exp\bigg(-\frac{n^2\delta_k^2}{4A}\bigg) \delta_k = \exp \bigg(-\frac{n^2AC_k^2\epsilon^2}{64K^2}\bigg) \frac{C_k A\epsilon}{4K}.
\end{align*}
By our choice of $n$, we have $n \geq 64K^2\log A/(A\epsilon^2)$, which gives
\begin{align*}
    \exp \bigg(-\frac{n^2AC_k^2\epsilon^2}{64K^2}\bigg) \frac{C_k A\epsilon}{4K} \leq \exp \big(- C_k^2 \log A\big)\frac{C_k A\epsilon}{4K} = \frac{\epsilon}{4K} C_k A^{1-C_k^2} \leq \frac{\epsilon}{4K},
\end{align*}
where the last inequality holds due to $x A^{1-x^2}$ is monotonically decreasing w.r.t. $x$ in $[1, \infty)$ when $A \geq 2$. Therefore, we know that
\begin{align}
    (\text{ii}) = 4\sum_{k>A-K} \ind\{\delta_k > A\epsilon/(4K)\}\exp\bigg(-\frac{n^2\delta_k^2}{4A}\bigg)  \delta_k \leq \epsilon. \label{eq:multi-armed-upper-x2}
\end{align}
Therefore, combining~\eqref{eq:multi-armed-upper-decomposition},~\eqref{eq:multi-armed-upper-x1} and~\eqref{eq:multi-armed-upper-x2}, we obtain that $\subopt(\pi) \leq 2\epsilon$ given 
\begin{align*}
    n = 8A\log\bigg(\frac{8A}{\epsilon}\bigg) + \frac{64K^2\log A}{A\epsilon^2} = \tilde{O}\bigg(\frac{K^2}{A\epsilon^2}\bigg),
\end{align*}
which finishes the proof.
\end{proof}

\section{Auxiliary Lemmas}

\begin{lemma}[Lemma A.1, \citealt{xie2021policy}]\label{lem:binary-concentration}
Suppose $N \sim \mathsf{Bin}(n,p)$ where $n \geq 1$ and $p >0$, then
\begin{align*}
    \PP\Big[N \geq \frac{np}{2}\Big] \geq 1 - \exp(-np/8).
\end{align*}
Equivalently, with probability at least $1 - \delta$,
\begin{align*}
    \frac{p}{N \vee 1} \leq \frac{8\log(1/\delta)}{n}.
\end{align*}
\end{lemma}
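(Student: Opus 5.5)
The plan is to prove the lower-tail estimate $\PP[N \geq np/2] \geq 1-\exp(-np/8)$ with a multiplicative Chernoff bound, and then convert it into the high-probability form by inverting the tail and splitting on whether $np$ is large or small.

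\textbf{Chernoff step.} Write $N=\sum_{i=1}^n X_i$ with $X_i \sim \mathsf{Bern}(p)$ i.i.d. For $\lambda>0$, Markov's inequality applied to $e^{-\lambda N}$ gives
\begin{align*}
\PP[N \le (1-t)np] \le \exp\big(\lambda(1-t)np\big)\big(1-p+pe^{-\lambda}\big)^n .
\end{align*}
Using $1+x\le e^x$, this is at most $\exp\big(\lambda(1-t)np + np(e^{-\lambda}-1)\big)$. Choosing $\lambda=-\log(1-t)$ yields $\exp\big(-np[(1-t)\log(1-t)+t]\big)$. The elementary inequality $(1-t)\log(1-t)+t \ge t^2/2$ holds on $[0,1)$, since the difference vanishes at $0$ and has derivative $-\log(1-t)-t\ge 0$. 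Hence $\PP[N\le(1-t)np]\le \exp(-npt^2/2)$. Taking $t=1/2$ gives $\PP[N < np/2]\le \exp(-np/8)$, which is the first claim.

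\textbf{Equivalent form.} Fix $\delta\in(0,1)$ and split on the size of $np$.
\begin{itemize}
\item If $np \ge 8\log(1/\delta)$, then $\exp(-np/8)\le\delta$. So with probability at least $1-\delta$ we have $N\ge np/2$, and therefore $N\vee 1 \ge np/2$. This gives $p/(N\vee 1)\le 2/n \le 8\log(1/\delta)/n$, provided $\log(1/\delta)\ge 1/4$.
\item If $np < 8\log(1/\delta)$, then deterministically $N\vee 1\ge 1$, so $p/(N\vee 1)\le p < 8\log(1/\delta)/n$. This case holds with probability one.
\end{itemize}

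The only mildly delicate point is the constant condition $\log(1/\delta)\ge 1/4$ in the first case, which holds for every $\delta\le e^{-1/4}$. This restriction is harmless wherever the lemma is invoked, since it is always applied with $\delta/(2SA)$, which is small. Alternatively, one could note that the main text only needs the bound up to constants. No other step poses any difficulty: the argument is a standard multiplicative Chernoff bound plus a case split.
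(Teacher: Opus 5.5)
Your proof is correct and is the standard argument behind the cited Lemma~A.1 of \citet{xie2021policy}; the paper gives no proof of its own and simply imports it. That argument is exactly yours: a multiplicative Chernoff bound $\PP[N \le np/2] \le \exp(-np/8)$, followed by a split on whether $np \ge 8\log(1/\delta)$. Your caveat $\delta \le e^{-1/4}$ is not a weakness but genuinely necessary: for $p=1$ one has $p/(N\vee 1)=1/n$ deterministically, so the second display fails whenever $\delta > e^{-1/8}$. The caveat is also satisfied where the paper uses that form, since there the failure probability is $\delta/(2SA) \le 1/2 < e^{-1/4}$.
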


\begin{lemma}[Azuma-Hoeffding's inequality, \citealt{azuma1967weighted}]\label{lem:azuma-hoeffding}
Let $Z_0, Z_1, \cdots, Z_n$ be a martingale sequence of random variables such that for all $i$, there exists a constant $c_i$ such that $|Z_i - Z_{i-1}| < c_i$, then
\begin{align*}
    \PP[Z_n - Z_0 \geq t] \leq \exp\bigg(-\frac{t^2}{2\sum_{i=1}^n c_i^2}  \bigg).
\end{align*}
In particular, if $\sup_{i} c_i \leq M$, then $\forall \delta \in (0, 1)$, the following inequality holds with probability at least $1-\delta$:
\begin{align*}
    Z_n - Z_0 \leq M\sqrt{2n\log(1/\delta)}.
\end{align*}
Remarkably, these inequalities also holds for martingale differences with sub-gaussian tails and we refer to~\citet{shamir2011variant} for a detailed derivation.
\end{lemma}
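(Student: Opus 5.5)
The plan is the standard Chernoff-plus-martingale-MGF argument. Write $D_i \coloneqq Z_i - Z_{i-1}$ for the martingale differences with respect to the filtration $\{\cF_i\}$, so that $\EE[D_i \mid \cF_{i-1}] = 0$ and $|D_i| < c_i$. For any $\lambda > 0$, Markov's inequality applied to $\exp(\lambda(Z_n - Z_0))$ gives
\begin{align*}
    \PP[Z_n - Z_0 \geq t] \leq e^{-\lambda t}\, \EE\big[\exp\big(\lambda \textstyle\sum_{i=1}^n D_i\big)\big].
\end{align*}
Hence everything reduces to bounding the moment generating function of the sum.

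The key step is a conditional Hoeffding lemma: for every $i$ and every $\lambda \in \RR$,
\begin{align*}
    \EE[e^{\lambda D_i} \mid \cF_{i-1}] \leq \exp(\lambda^2 c_i^2/2).
\end{align*}
I would prove it by convexity of $x \mapsto e^{\lambda x}$ on $[-c_i, c_i]$. The chord bound gives
\begin{align*}
    e^{\lambda x} \leq \frac{c_i - x}{2c_i} e^{-\lambda c_i} + \frac{c_i + x}{2c_i} e^{\lambda c_i}.
\end{align*}
Taking conditional expectations and using the zero conditional mean yields $\EE[e^{\lambda D_i}\mid\cF_{i-1}] \leq \cosh(\lambda c_i)$. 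It remains to show $\cosh(u) \leq e^{u^2/2}$. This follows by comparing Taylor series termwise, since $(2k)! \geq 2^k k!$.

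Next I would peel off the increments one at a time with the tower property. Since $\sum_{i<n} D_i$ is $\cF_{n-1}$-measurable,
\begin{align*}
    \EE\big[e^{\lambda \sum_{i\le n} D_i}\big] = \EE\big[e^{\lambda \sum_{i< n} D_i}\, \EE[e^{\lambda D_n}\mid \cF_{n-1}]\big] \leq e^{\lambda^2 c_n^2/2}\, \EE\big[e^{\lambda \sum_{i< n} D_i}\big].
\end{align*}
Inducting down to $i=1$ gives $\EE[e^{\lambda(Z_n - Z_0)}] \leq \exp\big(\lambda^2 \sum_i c_i^2/2\big)$.

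Plugging this into the Chernoff bound and optimizing at $\lambda = t/\sum_i c_i^2$ yields the claimed tail $\exp\big(-t^2/(2\sum_i c_i^2)\big)$. For the high-probability form, I would bound $\sum_i c_i^2 \leq nM^2$ and set $t = M\sqrt{2n\log(1/\delta)}$, which makes the right-hand side exactly $\delta$.

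For the closing remark about sub-Gaussian martingale differences, the same proof goes through. One simply replaces the conditional Hoeffding lemma by the assumed conditional sub-Gaussian MGF bound $\EE[e^{\lambda D_i}\mid\cF_{i-1}] \leq e^{\lambda^2\sigma_i^2/2}$; the tower-property induction and the Chernoff optimization are unchanged.

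The only genuinely nontrivial step is the conditional Hoeffding lemma. There the main care is to do the convexity bound pointwise and only then take the conditional expectation, which is what makes it valid conditionally rather than just unconditionally.
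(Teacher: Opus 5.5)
Your proof is correct and is the standard argument behind the cited result: the pointwise chord bound giving $\EE[e^{\lambda D_i}\mid\mathcal{F}_{i-1}]\le\cosh(\lambda c_i)\le e^{\lambda^2c_i^2/2}$, tower-property peeling, and Chernoff optimization at $\lambda=t/\sum_i c_i^2$. That choice of $\lambda$ tacitly needs $t>0$, which the statement also implicitly requires, since the bound fails for $t<0$. Your handling of the sub-Gaussian remark, swapping in the conditional MGF bound, matches the MGF-based notion of sub-Gaussianity the paper adopts. The paper gives no proof of its own and simply cites \citet{azuma1967weighted} and \citet{shamir2011variant}, so there is no alternative route to compare against.
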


\begin{lemma}[Freedman's inequality, \citealt{freedman1975tail}]\label{lem:freedman}
Let $Z_0, Z_1, \cdots, Z_n$ be a martingale sequence of random variables such that for all $i$, there exists a constant $c_i$ such that $|Z_i - Z_{i-1}| < M$, and $\EE[(Z_i - Z_{i-1})^2] \leq \sigma_i^2$, then
\begin{align*}
    \PP[Z_n - Z_0 \geq t] \leq \exp\bigg(-\frac{t^2}{2\sum_{i=1}^n \sigma_i^2 + 2Mt/3}  \bigg).
\end{align*}
\end{lemma}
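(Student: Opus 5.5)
This is Freedman's inequality, a classical result. I would prove it by the exponential supermartingale method followed by Chernoff optimization. Write $D_i = Z_i - Z_{i-1}$ and let $\cF_{i-1}$ be the natural filtration. I read the variance condition in its conditional form, $\EE[D_i^2 \mid \cF_{i-1}] \le \sigma_i^2$, which is what the applications in the paper use, since there the summands are bounded i.i.d.\ terms. I also assume $|D_i| \le M$ and $\EE[D_i\mid\cF_{i-1}]=0$.

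\textbf{Step 1: one-step mgf bound.} Set $\phi(u) = e^{u} - 1 - u$. For $\lambda > 0$ the map $u \mapsto \phi(u)/u^2$ is nondecreasing, so $|D_i| \le M$ gives
\begin{align*}
e^{\lambda D_i} \le 1 + \lambda D_i + \frac{D_i^2}{M^2}\phi(\lambda M).
\end{align*}
Taking conditional expectations and using $1+x\le e^x$ yields
\begin{align*}
\EE\big[e^{\lambda D_i}\mid \cF_{i-1}\big] \le \exp\Big(\frac{\sigma_i^2}{M^2}\phi(\lambda M)\Big).
\end{align*}

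\textbf{Step 2: supermartingale and Markov.} Let $V = \sum_i \sigma_i^2$. The process $\exp\big(\lambda(Z_k - Z_0) - \sum_{i\le k}\sigma_i^2\phi(\lambda M)/M^2\big)$ is a supermartingale with initial value $1$. Markov's inequality then gives
\begin{align*}
\PP[Z_n - Z_0 \ge t] \le \exp\big(-\lambda t + V\phi(\lambda M)/M^2\big).
\end{align*}

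\textbf{Step 3: optimize over $\lambda$.} Choose $\lambda = M^{-1}\log(1 + tM/V)$. This gives the Bennett-form bound $\exp(-(V/M^2) h(tM/V))$, where $h(u) = (1+u)\log(1+u) - u$. To finish, use the elementary inequality $h(u) \ge u^2/(2 + 2u/3)$ for $u \ge 0$. It follows because both sides vanish to second order at $0$ and the second derivatives compare: $h''(u) = 1/(1+u)$ versus $(1+u/3)^{-3}$, and $(1+u/3)^3 \ge 1+u$. Substituting $u = tM/V$ gives the stated bound
\begin{align*}
\exp\Big(-\frac{t^2}{2V + 2Mt/3}\Big).
\end{align*}

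The main obstacle is Step 1. The monotonicity of $\phi(u)/u^2$ must be checked for negative $u$ as well, since $D_i$ can be negative. The argument also genuinely requires the \emph{conditional} variance bound. With only unconditional second-moment bounds, the supermartingale step fails. In that case I would either note that the paper's applications involve independent summands, where conditional and unconditional variances coincide, or restate the lemma with the conditional assumption.
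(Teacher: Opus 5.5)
Your proof is correct, and it is essentially the argument behind the citation. The paper gives no proof of this lemma and only cites Freedman. Freedman uses the same exponential supermartingale $\exp\bigl(\lambda(Z_k-Z_0)-\phi(\lambda M)W_k/M^2\bigr)$ and the same Bennett-to-Bernstein step. The difference is that he works with the random predictable variance $W_k=\sum_{i\le k}\EE[D_i^2\mid\mathcal{F}_{i-1}]$ and a stopping-time argument, which gives the stronger bound on $\PP[\exists k\le n:\ Z_k-Z_0\ge t,\ W_k\le V]$.

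Your switch to the conditional variance is not just convenient but necessary. The lemma as literally stated, with $\EE[D_i^2]\le\sigma_i^2$, is false for general martingales. To see this, take $M=2$ and $t=\sqrt{n}$. Let $B\sim\mathsf{Bern}(p)$ with $p=n^{-1/2}$ be known at time $0$, and set $D_i=B\epsilon_i$ with i.i.d.\ Rademacher $\epsilon_i$. Then $\sum_i\EE[D_i^2]=\sqrt{n}$, so the right-hand side equals $e^{-3\sqrt{n}/10}$. Meanwhile $\PP[Z_n-Z_0\ge t]=p\,\PP[\sum_i\epsilon_i\ge\sqrt{n}]$, which is of order $n^{-1/2}$. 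This does not affect the paper: its applications involve i.i.d.\ bounded summands, as you note.

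Finally, the monotonicity of $\phi(u)/u^2$ for negative $u$, which you flag as the main obstacle, follows at once from $\phi(u)/u^2=\int_0^1(1-s)e^{su}\,\mathrm{d}s$.
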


\begin{lemma}[Hoeffding's inequality]
\label{lem:Hoeffding}
Suppose that $\{\bX_i,i=1,\ldots,n\}$ are independent. Each $\bX_i$ has mean $\mu_i$ and sub-Gaussian parameter $\sigma_i$. Then, for all $t \ge 0$, we have
\begin{align*}
    \PP\bigg[\sum_{i=1}^n (\bX_i-\mu_i) \ge t\bigg] \le \exp \bigg\{\frac{-t^2}{2 \sum_{i=1}^n \sigma_i^2}\bigg\}.
\end{align*}
    
\end{lemma}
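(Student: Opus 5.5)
The plan is the standard Chernoff (exponential-moment) argument, with the sub-Gaussian assumption used directly as a bound on the moment generating function. Write $S \coloneqq \sum_{i=1}^n (\bX_i - \mu_i)$ and $V \coloneqq \sum_{i=1}^n \sigma_i^2$. We take the definition of sub-Gaussianity with parameter $\sigma_i$ to mean that
\begin{align*}
\EE\big[\exp\big(\lambda(\bX_i - \mu_i)\big)\big] \le \exp(\lambda^2\sigma_i^2/2) \quad \text{for all } \lambda \in \RR,
\end{align*}
as in \citet[Definition~5.2]{lattimore2020bandit}.

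\textbf{Step 1 (MGF of the sum).} Fix $\lambda > 0$. Since the $\bX_i$ are independent, the MGF of $S$ factorizes into a product over $i$. Applying the sub-Gaussian bound to each factor gives
\begin{align*}
\EE\big[e^{\lambda S}\big] = \prod_{i=1}^n \EE\big[e^{\lambda(\bX_i-\mu_i)}\big] \le \exp\big(\lambda^2 V/2\big).
\end{align*}

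\textbf{Step 2 (Markov's inequality).} For $t \ge 0$ and $\lambda > 0$, the map $x \mapsto e^{\lambda x}$ is increasing. Markov's inequality applied to the nonnegative variable $e^{\lambda S}$ then yields
\begin{align*}
\PP[S \ge t] \le e^{-\lambda t}\,\EE\big[e^{\lambda S}\big] \le \exp\big(-\lambda t + \lambda^2 V/2\big).
\end{align*}

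\textbf{Step 3 (optimize over $\lambda$).} The exponent $-\lambda t + \lambda^2 V/2$ is a convex quadratic in $\lambda$, minimized at $\lambda = t/V$; this is nonnegative, and strictly positive when $t > 0$. Substituting gives the value $-t^2/(2V)$, which is exactly the claimed bound. For $t = 0$ the right-hand side of the lemma equals $1$, so the bound is trivial. The degenerate case $V = 0$ is handled separately: there each $\bX_i = \mu_i$ almost surely, so $S = 0$ and $\PP[S \ge t] = 0$ for $t > 0$, consistent with reading the bound as $\exp(-\infty) = 0$.

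The argument is routine, and the only point that needs care is Step 1. The product formula uses independence of the $\bX_i$ (mutual independence, so that the expectation of the product of $e^{\lambda(\bX_i-\mu_i)}$ splits). It also uses the MGF form of the sub-Gaussian definition rather than only a tail-probability form. If the paper's notion of sub-Gaussian is tail-based instead, one first converts it to an MGF bound, at the cost of an absolute constant in the exponent. I would flag this dependence on the definition and proceed with the MGF form, which matches the $1$-sub-Gaussian noise assumption of \Cref{sec:prelim}.
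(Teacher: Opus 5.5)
Your argument is correct and is the standard one. The paper states this lemma as an auxiliary fact without proof, and your Chernoff route (factorize $\EE[e^{\lambda \sum_i (\bX_i-\mu_i)}]$ by independence, bound each factor by $e^{\lambda^2\sigma_i^2/2}$ using the MGF form of sub-Gaussianity, apply Markov's inequality, and take $\lambda = t/\sum_i\sigma_i^2$) is exactly the textbook proof the paper implicitly relies on. Your MGF reading of ``sub-Gaussian parameter'' matches the paper's $1$-sub-Gaussian noise convention, and you handle the edge cases $t=0$ and $\sum_i \sigma_i^2=0$ correctly.
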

\begin{lemma}[Chernoff Bounds]
\label{lem:Chernoff}
    Assume that $\{\bX_i,i=1,\ldots,n\}$ are i.i.d random variables. Moreover, $\EE[\bX_1]=\mu$, $\bX_i \in [0,1]$. Then, for all $\epsilon > 0$, 
    \begin{align*}
        \PP\bigg[\frac{\sum_{i=1}^n \bX_i}{n} \ge (1+\epsilon)\mu \bigg] &\le \exp \bigg[\frac{-n\mu \epsilon^2}{2+\epsilon}\bigg],\\
        \PP\bigg[\frac{\sum_{i=1}^n \bX_i}{n} \le (1-\epsilon) \mu \bigg] & \le \exp \bigg[\frac{-n\mu \epsilon^2}{2}\bigg].
    \end{align*}
\end{lemma}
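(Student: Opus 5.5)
The plan is the standard exponential-moment (Cram\'er--Chernoff) method. Write $S_n = \sum_{i=1}^n \bX_i$. The first step bounds the moment generating function of a single summand. Since $x \mapsto e^{\lambda x}$ is convex, for $x \in [0,1]$ we have $e^{\lambda x} \le 1 - x + x e^{\lambda}$. Taking expectations gives
\begin{align*}
\EE\big[e^{\lambda \bX_1}\big] \le 1 + \mu(e^{\lambda}-1) \le \exp\big(\mu(e^{\lambda}-1)\big)
\end{align*}
for every $\lambda \in \RR$, where the second step uses $1+y \le e^y$. By independence, $\EE[e^{\lambda S_n}] \le \exp\big(n\mu(e^{\lambda}-1)\big)$.

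For the upper tail, fix $\lambda > 0$ and apply Markov's inequality to $e^{\lambda S_n}$:
\begin{align*}
\PP\big[S_n \ge (1+\epsilon)n\mu\big] \le \exp\big(n\mu(e^{\lambda}-1) - \lambda(1+\epsilon)n\mu\big).
\end{align*}
Choosing the optimizer $\lambda = \log(1+\epsilon)$ yields the bound $\exp\big(-n\mu\,[(1+\epsilon)\log(1+\epsilon) - \epsilon]\big)$. It then remains to verify the elementary inequality $(1+\epsilon)\log(1+\epsilon) - \epsilon \ge \epsilon^2/(2+\epsilon)$ for $\epsilon > 0$. I would reduce it to $\log(1+\epsilon) \ge 2\epsilon/(2+\epsilon)$. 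This holds because the difference of the two sides vanishes at $\epsilon = 0$ and has derivative $\frac{1}{1+\epsilon} - \frac{4}{(2+\epsilon)^2} = \frac{\epsilon^2}{(1+\epsilon)(2+\epsilon)^2} \ge 0$. Substituting gives $(1+\epsilon)\cdot\frac{2\epsilon}{2+\epsilon} - \epsilon = \frac{\epsilon^2}{2+\epsilon}$, as required.

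For the lower tail, first dispose of the degenerate cases. If $\epsilon > 1$, the event $\{S_n \le (1-\epsilon)n\mu\}$ is empty when $\mu > 0$ (since $S_n \ge 0$), and trivial when $\mu = 0$. If $\epsilon = 1$, the event is $\{S_n \le 0\}$, whose probability is at most $\prod_i \PP[\bX_i = 0] \le (1-\mu)^n \le e^{-n\mu} \le e^{-n\mu/2}$, using $\PP[\bX_1 = 0] \le 1 - \EE[\bX_1]$ for $[0,1]$-valued variables. For $\epsilon \in (0,1)$, apply Markov's inequality to $e^{-\lambda S_n}$ with $\lambda > 0$. The MGF bound gives $\exp\big(n\mu(e^{-\lambda}-1) + \lambda(1-\epsilon)n\mu\big)$. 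Choosing $\lambda = -\log(1-\epsilon)$ yields $\exp\big(-n\mu\,[\epsilon + (1-\epsilon)\log(1-\epsilon)]\big)$. The last step is the inequality $g(\epsilon) := \epsilon + (1-\epsilon)\log(1-\epsilon) \ge \epsilon^2/2$ on $[0,1)$. Both sides vanish at $0$, and $g'(\epsilon) = -\log(1-\epsilon) \ge \epsilon$, which gives the claim after integrating.

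The probabilistic part is routine. The only place needing care is the pair of calculus inequalities converting the exact Chernoff exponents into the stated forms $\epsilon^2/(2+\epsilon)$ and $\epsilon^2/2$. I expect the upper-tail one to be the main (mild) obstacle, and I would handle it through the derivative comparison above.
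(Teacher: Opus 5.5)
Your proof is correct and complete. It is the standard Cram\'er--Chernoff derivation: the convexity bound on the MGF of a $[0,1]$-valued variable, Markov's inequality with the optimizing $\lambda$, and then the elementary inequalities $\log(1+\epsilon)\ge 2\epsilon/(2+\epsilon)$ and $\epsilon+(1-\epsilon)\log(1-\epsilon)\ge\epsilon^2/2$. Your separate treatment of $\epsilon\ge 1$ in the lower tail is needed, since $\lambda=-\log(1-\epsilon)$ is undefined there.

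The paper gives no proof of this lemma and simply states it as a standard fact. Your argument supplies exactly the textbook reasoning it relies on.
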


\Cref{lem:fano} is a standard reductions to Fano's inequality \citep{lecam1973convergence,yu1997assouad,polyanskiy2025information}. See, e.g., \citet[Section~3]{chen2024assouad} for a general proof.

\begin{lemma}[Fano's inequality]\label{lem:fano}
Fix any $\cR \coloneq \{r_1, \cdots, r_S\}$ and policy class $\Pi$, let $L : \Pi \times \cR \to \RR_{+}$ be some loss function. Suppose there exist some constant $c > 0$ such that the following condition holds:
\begin{align*}
    \min_{i \neq j} \min_{\pi \in \Pi} L(\pi, r_i) + L(\pi, r_j) \geq c.
\end{align*}
Then we have
\begin{align*}
    \inf_{\pi \in \Pi} \sup_{r \in \cR} \EE_{\cD \sim P_r}  L(\pi, r) \geq \frac{c}{2} \bigg(1 - \frac{\max_{i \neq j}\KL(P_{r_i} \| P_{r_j}) + \log 2}{\log S} \bigg),
\end{align*}
where the trajectory distribution of $\pi$ interacting with instance $r$ is denoted by $P_r$.
\end{lemma}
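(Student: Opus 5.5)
The plan is the standard reduction from estimation to multiple hypothesis testing, followed by the classical Fano bound on the testing error. Fix any learner, i.e.\ any map from the data $\cD$ (and possibly independent internal randomness $U$) to a policy $\pi = \pi(\cD,U) \in \Pi$. Let the index $V$ be drawn uniformly from $[S]$, and given $V=i$ let $\cD \sim P_{r_i}$. Since the supremum over $\cR$ dominates the uniform average, it suffices to lower bound $\frac1S\sum_{i=1}^S \EE_{P_{r_i}} L(\pi, r_i)$.

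\textbf{Step 1: reduction to testing.} From the output $\pi$, define the test $\psi \coloneqq \argmin_{j\in[S]} L(\pi, r_j)$, breaking ties arbitrarily but measurably. Suppose the truth is $V=i$ and $\psi = j \neq i$. The separation hypothesis, applied to this pair, gives $L(\pi,r_i) + L(\pi,r_j) \ge c$. By the choice of $\psi$ we also have $L(\pi,r_j) \le L(\pi,r_i)$, so $L(\pi, r_i) \ge c/2$. Because $L \ge 0$, this yields
\begin{align*}
\EE_{P_{r_i}} L(\pi, r_i) \ge \frac{c}{2}\, P_{r_i}(\psi \neq i),
\end{align*}
and averaging over $i$ gives $\frac1S\sum_i \EE_{P_{r_i}} L(\pi,r_i) \ge \frac{c}{2}\PP(\psi \ne V)$.

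\textbf{Step 2: Fano.} The sequence $V \to (\cD,U) \to \psi$ is a Markov chain, and $U$ is independent of $(V,\cD)$. Fano's inequality together with data processing therefore gives
\begin{align*}
\PP(\psi\ne V) \ge 1 - \frac{I(V;\cD) + \log 2}{\log S}.
\end{align*}

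\textbf{Step 3: bounding the mutual information.} Write $\bar P = \frac1S\sum_j P_{r_j}$. Then
\begin{align*}
I(V;\cD) = \frac1S\sum_i \KL(P_{r_i}\|\bar P) \le \frac1{S^2}\sum_{i,j}\KL(P_{r_i}\|P_{r_j}) \le \max_{i\ne j}\KL(P_{r_i}\|P_{r_j}),
\end{align*}
where the first inequality uses convexity of $\KL$ in its second argument and the second uses that the diagonal terms vanish. Combining Steps 1--3 gives the claimed bound.

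The only delicate point is Step 1 when the learner is randomized or $\Pi$ is infinite. The separation condition is stated pointwise in $\pi$, so it applies to every realization of the output. Randomness in $U$ only affects $\psi$ through the Markov chain in Step 2, and data processing handles it. Measurability of the argmin over the finite index set is immediate. Everything else is routine.
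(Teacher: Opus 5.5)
Your proof is correct, and it is the standard estimation-to-testing reduction followed by Fano's inequality that the paper invokes; the paper does not prove this lemma itself and only cites it as such a standard reduction. Your three steps supply exactly the details the citation leaves implicit: the argmin test, which forces $L(\pi,r_i)\ge c/2$ on $\{\psi\neq i\}$; Fano combined with data processing; and the bound $I(V;\cD)\le \max_{i\neq j}\KL(P_{r_i}\|P_{r_j})$, obtained from convexity of $\KL$ in its second argument.
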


The following \Cref{lem:max-signals} is due to \citet{gilbert1952comparison,varshamov1957estimate}, which is a classical result in coding theory. We refer the readers to Theorem~4.2.1 in~\citet{guruswami2019essential} for more details.

\begin{lemma}\label{lem:max-signals}
Suppose $\Sigma$ is a set of characters with $|\Sigma| = q$ where $q \geq 2$ is a prime power and $N>0$ is some natural number. Then there exists a subset $\cV$ of $\Sigma^{N}$ such that (1) for any $v, v' \in \cV, v \neq v'$, one has $d_H(v, v') \geq N/2$ and (2) $\log_q|\cV| / N \geq 1 - H_q(1/2) = \Theta(1) \geq 1/8$, where $d_H$ is the Hamming distance (i.e., the number of different entries) and the entropy function $H$ is given by
\begin{align*}
    H_q(x) = x\frac{\log (q-1)}{\log q} - x\frac{\log x}{\log q} - (1-x) \frac{\log (1-x)}{\log q}.
\end{align*}
In particular, when $q=2$, this means that there exists a subset $\cV$ of $\{-1, 1\}^S$ such that (1) $|\cV| \geq \exp(S/8)$ and (2) for any
$v, v' \in \cV, v \neq v'$, one has $\|v - v'\|_1 \geq S/2$.
\end{lemma}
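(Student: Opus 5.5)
We prove the lemma by the standard greedy (Gilbert--Varshamov) volume argument, then check the constant.

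\textbf{Greedy packing.} Fix a radius $d\ge 1$. Build $\cV\subseteq\Sigma^N$ greedily: start with $\cV=\emptyset$ and $R=\Sigma^N$. While $R$ is nonempty, pick any $v\in R$, add it to $\cV$, and delete from $R$ the Hamming ball $B(v,d-1)=\{w: d_H(v,w)\le d-1\}$. Any two chosen words are at distance at least $d$. Each step removes at most $\mathrm{Vol}_q(N,d-1)=\sum_{k<d}\binom{N}{k}(q-1)^k$ points, so
\begin{align*}
|\cV|\ \ge\ \frac{q^N}{\mathrm{Vol}_q(N,d-1)}.
\end{align*}

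\textbf{Entropy bound on the ball.} For $p=(d-1)/N\le 1-1/q$ we have $\mathrm{Vol}_q(N,pN)\le q^{H_q(p)N}$. The proof is the usual one. Expand
\begin{align*}
1=\big(p+(1-p)\big)^N\ \ge\ \sum_{k\le pN}\binom{N}{k}(q-1)^k\Big(\tfrac{p}{q-1}\Big)^k(1-p)^{N-k}.
\end{align*}
The condition $p\le 1-1/q$ is equivalent to $p/(q-1)\le 1-p$. Hence each weight $\big(p/(q-1)\big)^k(1-p)^{N-k}$ is nonincreasing in $k$, and is at least its value at $k=pN$, which equals $q^{-H_q(p)N}$. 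Since $H_q$ is increasing on $[0,1-1/q]$, we get $\log_q|\cV|/N\ge 1-H_q(1/2)$ whenever $1/2\le 1-1/q$, i.e.\ for every $q\ge 2$. This gives parts (1) and (2) up to the numeric constant.

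\textbf{The constant, which is the main obstacle.} A direct computation gives
\begin{align*}
1-H_q(1/2)=\frac{\log\big(q/(2\sqrt{q-1})\big)}{\log q}.
\end{align*}
This is at least $1/8$ for $q\ge 5$; in particular it holds in the paper's second application, where $q=|\cU|\ge\exp(A/8)$ is huge. It fails for small $q$, and it is exactly $0$ for $q=2$. So the binary ``in particular'' clause cannot come from distance $N/2$ in Hamming metric. Instead it should use the $\ell_1$ form as stated: for $v,v'\in\{\pm1\}^S$ we have $\|v-v'\|_1=2d_H(v,v')$, so $\|v-v'\|_1\ge S/2$ only needs $d_H\ge S/4$. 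Rerunning the greedy argument with $d=S/4$ gives $\log|\cV|\ge S(1-H_2(1/4))\log 2\approx 0.13\,S\ge S/8$ in nats, which is what the proof of Theorem~\ref{thm:lowerbound-fast} actually uses.

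Two side remarks. The prime-power assumption plays no role, since the greedy argument needs no linear structure. I would state the general-$q$ clause with the explicit rate $1-H_q(1/2)$, plus the caveat $q\ge 5$ for the bound $\ge 1/8$.
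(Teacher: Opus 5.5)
Your proof is correct and takes the same route as the paper. The paper gives no argument of its own and simply cites the Gilbert--Varshamov bound (Theorem~4.2.1 of Guruswami--Rudra--Sudan); your greedy packing plus the ball estimate $\mathrm{Vol}_q(N,pN)\le q^{H_q(p)N}$ for $p\le 1-1/q$ is exactly that proof, and you are right that the prime-power hypothesis only matters for the random-linear-code variant.

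Your diagnosis of the constant is also right. We have $1-H_q(1/2)=\log\big(q/(2\sqrt{q-1})\big)/\log q$, which vanishes at $q=2$ and is below $1/8$ for $q\le 4$; for $q=2$ the Plotkin bound even forbids any positive rate at relative distance $1/2$. So the general clause's ``$\ge 1/8$'' is justified only for $q\ge 5$, and the binary clause has to come separately from GV at relative distance $1/4$, as you do. One side remark of yours is loose: calling $q=|\mathcal{U}|$ ``huge'' in the paper's second application. Since $\exp(A/8)\ge 5$ only for $A\ge 13$, the bound $|\mathcal{U}|\ge\exp(A/8)$ does not by itself imply $q\ge 5$ for small $A$.
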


Recall that a regular bipartite graph is a bipartite graph whose all vertices have the same degree.
\begin{lemma}[{\citealt[Corollary~5.2]{bondy1979graph}}]\label{lem:perfect-matching}
    Every regular bipartite graph with non-empty edge set has a perfect matching.
\end{lemma}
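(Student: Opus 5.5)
The plan is to reduce the statement to Hall's marriage theorem through a double-counting argument. Throughout, the graph is finite, as in our application to $G_{s,i}$. Let $G$ be a $k$-regular bipartite graph with parts $L$ and $R$. Since the edge set is non-empty, $k\ge 1$.

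\textbf{Step 1: the two sides have equal size.} Every edge has exactly one endpoint in $L$ and one in $R$. Counting edges from each side gives $k|L| = |E| = k|R|$. Since $k\ge1$, this yields $|L|=|R|$. Hence any matching that saturates $L$ is automatically a perfect matching.

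\textbf{Step 2: Hall's condition holds.} Fix $S\subseteq L$ and let $N(S)\subseteq R$ be its neighbourhood.
\begin{itemize}
\item The edges incident to $S$ number exactly $k|S|$.
\item Each of these edges has its other endpoint in $N(S)$.
\item The vertices of $N(S)$ are incident to at most $k|N(S)|$ edges in total.
\end{itemize}
Therefore $k|S|\le k|N(S)|$, and so $|N(S)|\ge|S|$ for every $S\subseteq L$.

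\textbf{Step 3: apply Hall's theorem.} Hall's theorem gives a matching saturating $L$, which is perfect by Step 1.

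\textbf{Main obstacle.} The only non-elementary ingredient is Hall's theorem itself. If a self-contained argument is wanted, I would prove it with augmenting paths. Take a maximum matching $M$ and suppose some $u\in L$ is unmatched. Let $Z$ be the set of vertices reachable from $u$ by $M$-alternating paths. Every vertex of $Z\cap R$ must be matched, since otherwise we would obtain an augmenting path and $M$ would not be maximum. Consequently, $S = Z\cap L$ satisfies $N(S)= Z\cap R$ and $|N(S)|=|S|-1$, which contradicts Hall's condition.

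Alternatively, one could use a max-flow/min-cut argument. Assign the fractional matching with weight $1/k$ on every edge; it is feasible by regularity. Integrality of bipartite flows then yields an integral perfect matching. I would present the Hall route as the primary proof, since Steps 1 and 2 are one-line counting arguments.
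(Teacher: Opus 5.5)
Your proof is correct and is essentially the argument behind the citation. The paper gives no proof of its own, and Bondy--Murty obtain Corollary~5.2 exactly as you do: double counting gives $|L|=|R|$ and $k|S|\le k|N(S)|$, and Hall's theorem then yields the matching (your augmenting-path sketch of Hall and the fractional-matching/flow-integrality alternative are also sound).
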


We adapt the following folklore, see, e.g., \citet[(D.10)]{zhao2026towards}.
\begin{lemma}\label{lem:kl-subopt-decompose}
Let $\cA$ be some finite action set, $\eta >0$, and $r: \cA \to \RR$ be some reward function. Let $\piref \in \Delta(\cA)$ be some reference policy and $\pi^* \in \Delta(\cA)$ be the optimal policy under $r$, that is $\pi^*(a) \propto \piref(a)\exp(\eta r(a))$ for all $a \in \cA$. Let $\pi$ be any policy, then the suboptimal gap between $\pi$ and $\pi^*$ under the KL-regularized objective is given by $\subopt(\pi, \pi^*) = \eta^{-1} \kl{\pi}{\pi^*}$.
\end{lemma}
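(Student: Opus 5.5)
The claim is the folklore identity $J(\pi^*)-J(\pi)=\eta^{-1}\kl{\pi}{\pi^*}$. I will prove it by a direct computation built on the closed form of $\pi^*$, and no earlier result is needed. Define the normalizer
\begin{align*}
Z \coloneqq \sum_{a\in\cA}\piref(a)\exp(\eta r(a)),
\end{align*}
so that $\pi^*(a)=\piref(a)\exp(\eta r(a))/Z$. Taking logarithms gives
\begin{align*}
\log\frac{\pi^*(a)}{\piref(a)} = \eta r(a)-\log Z \quad\text{for every } a\in\mathrm{supp}(\piref),
\end{align*}
which rearranges to $r(a)=\eta^{-1}\log(\pi^*(a)/\piref(a))+\eta^{-1}\log Z$.

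Next I substitute this expression for $r$ into the objective $J(\pi)=\sum_a\pi(a)\big[r(a)-\eta^{-1}\log(\pi(a)/\piref(a))\big]$. The $\piref$ terms cancel, and since $\pi$ sums to one this yields
\begin{align*}
J(\pi)=\eta^{-1}\log Z-\eta^{-1}\sum_a\pi(a)\log\frac{\pi(a)}{\pi^*(a)}=\eta^{-1}\log Z-\eta^{-1}\kl{\pi}{\pi^*}.
\end{align*}
Setting $\pi=\pi^*$ makes the KL term vanish, so $J(\pi^*)=\eta^{-1}\log Z$. Subtracting the two expressions gives $\subopt(\pi,\pi^*)=\eta^{-1}\kl{\pi}{\pi^*}$.

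The only delicate point is support. If $\pi$ puts mass outside $\mathrm{supp}(\piref)$, then $\kl{\pi}{\piref}=\infty$, so $J(\pi)=-\infty$. In that case $\kl{\pi}{\pi^*}=\infty$ as well, because $\mathrm{supp}(\pi^*)=\mathrm{supp}(\piref)$, and the identity holds in the extended sense. Otherwise I use the convention $0\log 0=0$ and carry out the sums above over $\mathrm{supp}(\pi)\subseteq\mathrm{supp}(\piref)$. I do not expect a real obstacle; the main thing to get right is this support bookkeeping.
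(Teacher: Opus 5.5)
Your proof is correct: substituting $r(a)=\eta^{-1}\log(\pi^*(a)/\piref(a))+\eta^{-1}\log Z$ into $J$ gives $J(\pi)=\eta^{-1}\log Z-\eta^{-1}\kl{\pi}{\pi^*}$. Your support bookkeeping, which uses $\mathrm{supp}(\pi^*)=\mathrm{supp}(\piref)$, correctly handles the degenerate case. The paper gives no proof of its own and only cites the identity as folklore from prior work; your log-partition computation is the standard derivation behind it, and the paper applies it to the contextual objective by averaging over $s\sim\rho$.
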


\bibliographystyle{ims}
\bibliography{ref}

\end{document}